\theoremstyle{plain}
\newtheorem{theorem}{Theorem}[section]
\newtheorem{proposition}[theorem]{Proposition}
\newtheorem{lemma}[theorem]{Lemma}
\theoremstyle{definition}
\newtheorem{definition}[theorem]{Definition}
\theoremstyle{remark}
\def\NN{{\mathbb N}}    
\def\RR{{\mathbb R}}    
\def\PP{{\mathbb P}}     
\def\KK{{\mathbb K}}     
\def\EE{{\mathbb E}}    
\def\11{{\mathbf 1}}    
  \def\cG{{\mathcal G}}     \def\cH{{\mathcal H}}                    \def\cL{{\mathcal L}} \def\cR{{\mathcal R}} \def\cX{{\mathcal X}} \def\cY{{\mathcal Y}}  \def\cZ{{\mathcal Z}}
\def\mfA{{\mathfrak A}} \def\mfA{{\mathfrak P}}
\def\d{\,{\mathrm d}}
\def\NN{{\mathbb N}}    
\def\RR{{\mathbb R}}    
\def\PP{{\mathbb P}}     
\def\KK{{\mathbb K}}     
\def\EE{{\mathbb E}}    
\def\11{{\mathbf 1}}    
  \def\cG{{\mathcal G}}     \def\cH{{\mathcal H}}                    \def\cL{{\mathcal L}} \def\cR{{\mathcal R}} \def\cX{{\mathcal X}} \def\cY{{\mathcal Y}}  \def\cZ{{\mathcal Z}}
\def\mfA{{\mathfrak A}} \def\mfA{{\mathfrak P}}
\def\d{\,{\mathrm d}}
\definecolor{applegreen}{rgb}{0.01, 0.75, 0.24}
\icmltitlerunning{Domain Generalisation via Imprecise Learning}
\begin{document}

\twocolumn[
\icmltitle{Domain Generalisation via Imprecise Learning}



\icmlsetsymbol{equal}{*}

\begin{icmlauthorlist}
\icmlauthor{Anurag Singh}{yyy}
\icmlauthor{Siu Lun Chau}{yyy}
\icmlauthor{Shahine Bouabid}{sch}
\icmlauthor{Krikamol Muandet}{yyy}
\end{icmlauthorlist}

\icmlaffiliation{yyy}{CISPA Helmholtz Center for Information Security, Saarbr{\"u}cken, Germany}
\icmlaffiliation{sch}{Department of Statistics, University of Oxford, UK}
\icmlcorrespondingauthor{Anurag Singh}{anurag.singh@cispa.de}

\icmlkeywords{Machine Learning, ICML}

\vskip 0.3in
]



\printAffiliationsAndNotice{Anurag Singh is part of the Graduate School of Computer Science at Saarland University, Saarbr{\"u}cken, Germany.}

\begin{abstract}
Out-of-distribution (OOD) generalisation is challenging because it involves not only learning from empirical data, but also deciding among various notions of generalisation, e.g., optimising the average-case risk, worst-case risk, or interpolations thereof. While this choice should in principle be made by the model operator like medical doctors, this information might not always be available at training time. The institutional separation between machine learners and model operators leads to arbitrary commitments to specific generalisation strategies by machine learners due to these deployment uncertainties. We introduce the Imprecise Domain Generalisation framework to mitigate this, featuring an imprecise risk optimisation that allows learners to stay imprecise by optimising against a continuous spectrum of generalisation strategies during training, and a model framework that allows operators to specify their generalisation preference at deployment. Supported by both theoretical and empirical evidence, our work showcases the benefits of integrating imprecision into domain generalisation.

\end{abstract}

\section{Introduction}

The capability to generalise knowledge, a hallmark of both biological and artificial intelligence (AI), has seen remarkable progress in recent years. Developments in general-purpose learning algorithms \citep{Vapnik91:ERM,HofSchSmo08,Lecun15:DL,Goodfellow16:DL}, model architectures \citep{Krizhevsky12:ImageNet,Cohen16:Group,Vaswani17:Attention}, and training infrastructures \citep{Ratner19:MLsys} have given rise to AI systems such as generative models (GenAI) and large language models (LLM) that surpass human-level generalisation capabilities in specific domains.

Despite notable achievements, these systems may catastrophically fail when operated on out-of-domain (OOD) data because theoretical guarantees for their generalisation hinge on the assumption of independent and identically distributed (IID) training and deployment data, with empirical risk minimisation (ERM) being the dominant learning algorithm \citep{Vapnik91:ERM,Vapnik95:SLT}. 
Emerging challenges like distribution shifts \citep{Candela09:DS,Beery18:WildLife,Beery20:iWildCam,Koh21:WILDS}, adversarial attacks \citep{Szegedy13:Properties,Goodfellow14:Explaining}, and strategic manipulations \citep{Hardt16:Strategic,Perdomo20:Performative,vo_causal_2023} have prompted researchers to question the validity of algorithms developed under this assumption. This gap has fueled interest in OOD generalisation, prompting the exploration of novel learning algorithms and resulting in rapid developments in domain adaptation \citep{Winson20:UDA,Zhao22:UDA}, domain generalisation \citep{Wang21:DG-Review,Zhou21:DG-Review,Shen21:OOD-Review}, and test-time adaptation \citep{Sun20:Test-Time,Wang21:Tent,Chen23:Improved-TTT}, among others.

In IID generalisation, where test loss aligns with training loss, the learner's goal of minimising the training loss aligns with the operator's expectation of small test loss. Bounded data uncertainty, within finite data, enables the learner to assess model generalisation during deployment. 
Historically, the IID assumption is accompanied by another critical, but often overlooked assumption: the overlap between the learner and the operator, who employs the model in real-world contexts.
Conversely, OOD generalisation still lacks a precise definition, leading to additional ambiguity termed ``generalisation uncertainty''. Unlike data uncertainty, generalisation uncertainty arises from a lack of knowledge about deployment environments, whether due to natural shifts (across hospitals, experimental conditions, and time) or artificial ones (adversarial attacks, strategic manipulation), and cannot be mitigated by additional data collection. 

\begin{figure*}
    \centering
    \includegraphics[width=\textwidth]{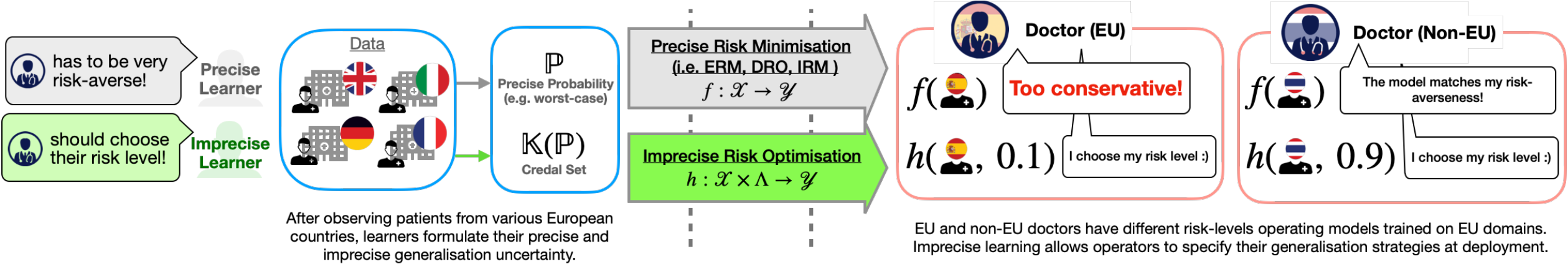}
    \caption{An illustration of our proposed \textit{imprecise learning} framework. We allow learners to stay imprecise to avoid over-commit in light of generalisation uncertainty. Instead, we defer this choice of precise generalisation to the operator.}
    \label{fig:imprecise-learning}
\end{figure*}

Prior research has addressed generalisation uncertainty independently by introducing various concepts of OOD generalisation including worst-case generalisation \citep{Arjovsky19:IRM,Ben-Tal09:RO,Sagawa20:DRO,krueger_out--distribution_2021}, average-case generalisation \citep{Blanchard11:Generalize,Blanchard21:DG-MTL,Muandet13:DG,Zhang21:ARM}, and their interpolations \citep{Eastwood22:QRM}. Learning algorithms like distributional robust optimisation (DRO) \citep{rahimian_distributionally_2022}, invariant risk minimisation (IRM) \citep{Arjovsky19:IRM}, and quantile risk minimisation (QRM) \citep{Eastwood22:QRM} have been tailored for these OOD generalisation notions. This line of research relaxes the IID assumption, but still assumes alignment between the learner's objective and the operator's goal to tackle generalisation uncertainty. Due to the need for precise concept of generalisation in these scenarios, we collectively term them ``precise generalisation''.

Precise generalisation hinges on the assumption that the learner's objective aligns with the operator's goal, necessitating close collaboration during model development. However, this approach presents two primary drawbacks. Firstly, institutional separation between the learner (e.g., machine learning engineers) and model operators (e.g., doctors) can make collaboration costly, time-consuming, or even impractical. Secondly, tailoring the model to a specific operator may restrict its deployment usability, as the operator's beliefs can change or conflicts may emerge when the model is operated by a different individual.
Consider an example depicted in \Cref{fig:imprecise-learning}. Using data obtained from hospitals across Europe, an engineer is developing a machine learning model that will be embedded into a medical software that will be used by the doctors. Here, the engineer confronts uncertainty regarding where the model will ultimately be deployed—it could be within Europe (IID) or outside it (OOD). The engineer might anticipate the doctor's generalisation strategy during the model's training phase. For instance, if the doctor is perceived to be risk-averse, the engineer might prioritise training a model robust to worst-case scenarios. However, ideally, it should be the doctor, often equipped with domain-specific expertise, who decides the generalisation strategy, drawing upon their in-depth knowledge of the field, at deployment time. Customising models effectively to the clinical settings where they operate can significantly impact healthcare outcomes \citep{Beede20:AI}.


In this work, we extend the relaxation of the IID assumption further by loosening the requirement for overlap between the learner and the operator. Since there is no need of specific concept of generalisation at training time, we term this scenario ``imprecise generalisation'' (see \Cref{fig:imprecise-learning}). We operationalise imprecise learning in the context of domain generalisation \citep{Blanchard11:Generalize,Blanchard21:DG-MTL,Muandet13:DG}, aiming to answer the question: \emph{How to take knowledge acquired from an arbitrary number of related domains and apply it to previously unseen domains?} This concept comprises two main components: (1) An optimisation process enabling learners to remain imprecise during learning, thus not committing to a specific generalisation notion during training, and (2) a model framework allowing operators to define their preferred generalisation strategy at deployment. We delve into the formulation and existing work on OOD generalisation in \Cref{sec:preliminaries}. Our primary contribution, the framework of Imprecise Domain Generalisation, is detailed in \Cref{sec:imprecise-dg}, along with its optimisation strategy, termed Imprecise Risk Optimisation (IRO), in \Cref{sec:algorithm}. Experimental results are presented in \Cref{sec:experiments}, and we conclude our paper in \Cref{sec:conclusion}.

All proofs are in the appendix and we open-source our code at \url{https://github.com/muandet-lab/dgil}.

\section{Preliminaries}
\label{sec:preliminaries}

Consider $\cX\subseteq \RR^d$ as our instance space and $\cY$ as our target space, where $\cY \subseteq \RR$ is used for regression and $\cY = {1,\ldots, C}$ for $C$-class classification. In supervised learning, the process of learning a function mapping from $\cX$ to $\cY$ involves the learner specifying their inductive biases. These inductive biases include: (1) selecting a hypothesis class $\cH$, consisting of functions $f:\cX\to\cY$, (2) defining a suitable loss function $\ell: \cY\times\cY\to\RR_+$ based on the problem, (3) assuming the presence of a joint probability distribution $\PP$ over the variables $(X, Y)\in \cX\times \cY$ from which the data are sampled. Most critical to our work are (4) the assumptions regarding the deployment environment where the model $f$ is expected to generalise. 

\vspace{-.5em}
\subsection{Precise Learning}

In the following, we briefly review various generalisation assumptions commonly adopted in the literature and unify them under the setting of \emph{precise learning}.


\textbf{IID assumption. } Perhaps the most fundamental generalisation assumption in supervised learning is that the training and deployment environments are independent and identically distributed (IID). Under this assumption, a model that performs well in training is expected to generalize effectively in deployment. This concept is formalized by finding the function $f\in\cH$ that minimizes the population risk for $\PP$, known as the Bayes optimal model:
\begin{equation}
\cR(f) \triangleq \mathbb{E}_{\PP}[Z_f] = \mathbb{E}_{(X,Y)\sim \PP}\big[ \ell(f(X),Y) \big].
\end{equation}
For simplicity, we have denoted $Z\triangleq (X,Y)$, $\cZ\triangleq\cX\times\cY$, and $Z_f \triangleq \ell(f(X),Y)$ as the random loss associated with $f\in\cH$. In practice, since the true distribution $\PP$ is unknown, we focus on minimizing an empirical estimate of this risk based on IID samples $(x_i,y_i)_{i=1}^n$ from $\PP$, expressed as:
\begin{equation}
\widehat{\cR}(f) \triangleq \frac{1}{n}\sum_{i=1}^n\ell(f(x_i),y_i) + \eta\|f\|^2_{\cH}, \quad f\in\mathcal{H},
\end{equation}
where the second term is a regularization term to prevent overfitting, following the empirical risk minimization (ERM) principle \citep{Vapnik91:ERM,Vapnik95:SLT}.
This scenario introduces \textbf{data uncertainty}, stemming from the finite nature of data when approximating Bayes optimal models. The IID assumption also enjoys favourable guarantees, e.g., as the sample size $n$ increases, the uniform convergence of $\widehat{\cR}(\cdot)$ over $\mathcal{H}$ ensures that the gap between the empirical and the population risk becomes negligible with high probability; see, e.g., \citet{Vapnik98:SLT,Cucker02:SLT}.

\textbf{Beyond IID assumptions. } The IID assumption is often not viable in real-world scenarios due to various factors such as distribution shifts \citep{Candela09:DS,Beery18:WildLife,Beery20:iWildCam,Koh21:WILDS}, sub-population shifts \citep{Santurkar21:Breeds,Yang23:Subpop-Shift}, adversarial attacks \citep{Szegedy13:Properties,Goodfellow14:Explaining}, strategic manipulation \citep{Hardt16:Strategic,Perdomo20:Performative,vo_causal_2023}, and time shifts \citep{Gagnon-Audet22:WOODSBF}. In response to these challenges, learners must consider \textbf{generalisation uncertainty} when designing their learning algorithm. This uncertainty is typically represented by a credal set $\KK(Z)~$\citep{Walley91:SRIP}, a closed set of potential probability distributions that reflect the learner's ignorance, or partial knowledge about the deployment environments.

For example, in distributionally robust optimisation \citep{rahimian_distributionally_2022}, the credal set comprises distributions within an $\epsilon$ distance from the empirical distribution, and the goal is to optimise $f$ for the worst-case empirical risk within it. Another approach, involves learning across multiple domains $\PP_1,\dots,\PP_d$, and assumes the deployment distribution lies within their convex hull \citep{mansour_multiple_2012,krueger_out--distribution_2021,foll_gated_2023}. 
In invariant causal prediction \citep{Peters16:ICP,HeinzeDeml18:ICP-Nonlinear}, hypothetical interventional distributions associated with a structural causal model (SCM) constitute the credal set.
Learning algorithms here aim to optimize for worst-case empirical risk \citep{Arjovsky19:IRM,Ben-Tal09:RO,Sagawa20:DRO,krueger_out--distribution_2021}, average-case empirical risk \citep{Blanchard11:Generalize,Blanchard21:DG-MTL,Muandet13:DG,Zhang21:ARM}, and interpolations thereof \citep{Eastwood22:QRM}. The choice of risk corresponds to selecting a particular distribution within the credal set, such as the centroid of the convex hull referring to the average case. Notably, the credal set in the IID case reduces to a single distribution, $\KK(Z) = \{\PP\}$. 

\subsection{Previous Work}


\textbf{Limitation of precise learning. } A majority of previous work in both IID and OOD generalisation falls into the precise learning setting. A fundamental requirement is for the learner to commit to a specific notion of generalisation. This involves precisely selecting a particular distribution $\PP\in\KK(Z)$ during training and performing statistical learning to develop the model $f$. 
Although widely used, this might not always be optimal in modern machine learning settings, especially when there is a clear institutional separation between those who build and those who operate the model (cf. \Cref{sec:imprecise-dg}). This separation presents two significant challenges. First, it assumes that the learners either fully understand the specific generalisation needs of the operators, or that the operators have comprehensive access to the datasets and a thorough understanding of statistical inference, effectively making them the learners. Second, the choice of generalisation strategy is inherently subjective, involving normative decisions by the operators. For instance, a risk-averse operator might lean towards a worst-case empirical risk optimiser, while an operator with in-depth knowledge of the deployment environment might prefer an average-case empirical risk optimiser.


\textbf{Domain generalisation strategies. } The core of domain generalisation is the invariance principle~\citep{Muandet13:DG,Arjovsky19:OOD}, which asserts that certain properties remain constant across different environments and thus are expected to generalise to unseen settings. This principle is reflected in approaches focusing on feature representation \citep{Muandet13:DG,Ghifary15:DG-MTA,Arjovsky19:IRM}, causal mechanism \citep{Peters16:ICP,Rojas-Carulla18:CausalTransfer,Heinze-Deml21:CVP}, and risk functional \citep{krueger_out--distribution_2021}, all aimed at identifying and leveraging these invariant properties. While necessary, this principle faces two challenges: it abstracts away the inherent heterogeneity across environments~\citep{Heckman01:Heterogeneity}, which might give rise to non-invariant yet generalisable properties~\citep{Eastwood23:Spuriosity,Nastl24:Causal-Predictors}. Furthermore, identifying and utilising invariant properties faces practical difficulties due to their need for large sample sizes~\citep{Rosenfeld21:IRM-Risk,Kamath21:IRM-I}.


Addressing these challenges, recent work suggests combining domain-invariant with domain-specific properties~\citep{Liu21:HRM,Mahajan21:DG-CM}. While these approaches have been shown to improve in-domain generalisation performance, how domain-specific properties affect OOD generalisation in unseen environments remains unclear. To overcome this, it is popular to utilise various forms of test-time adaptation via auxiliary tasks~\citep{Sun20:Test-Time,Wang21:Tent,Chen23:Improved-TTT}. However, \citet{Liu21:TTT} has shown that this strategy can improve the pre-trained model only when the auxiliary loss aligns with the main loss. This suggests that a certain degree of precision in aligning losses is essential for effective domain generalisation.



\textbf{Generalisation uncertainty representation. } As opposed to the credal set $\KK(Z)$, some authors have instead adopted a second-order probability (aka meta distributions) as a belief over the ``true'' or ``ideal'' probabilities $\PP(Z)$ \citep{Blanchard11:Generalize,Blanchard21:DG-MTL,Muandet13:DG,Eastwood22:QRM}. However, \citet[Sec. 5.10]{Walley91:SRIP} pointed out that if probability distributions entail behavioural dispositions, it is necessary that the credal set must collapse to a singleton to avoid incoherent behaviour. 
Paradoxically, this implies that assuming the existence of meta-distributions is equivalent to making the IID assumption in the first place, emphasising that one must clearly differentiate generalisation uncertainty from data uncertainty.

\textbf{Learning under imprecision.} Machine learning inherently grapples with imprecision due to its inductive nature. One common approach to mitigate this is to create precision at various stages of model development. Techniques like data up/downsampling~\citep{he2009learning} and fusion~\citep{chau_deconditional_2021,chau_bayesimp_2021} address issues of granularity and missing data by drawing information from a precise empirical distribution. During algorithm selection, approaches like the Bayesian paradigm, ensembling, and AutoML~\citep{he2021automl} are used to handle potential model misspecification by selecting a precise model from a set of alternatives. Furthermore, model deployment requires a precise definition of generalisation, such as optimising for average-case or worst-case risks, to be determined before training.


When the introduced precision is not warranted, imprecise probabilists advocate for learning \emph{along with} imprecision. For instance, Walley's Imprecise Dirichlet Model effectively handles incomplete and missing data~\citep{utkin_imprecise_2021}. Dempster-Shafer Theory~\citep{shafer1992dempster} enables the fusion of multiple information sources, considering all available evidence. Credal learners, including credal decision trees~\citep{abellan2010ensemble}, credal networks~\citep{cozman2000credal}, and imprecise Bayesian neural networks~\citep{caprio_imprecise_2023}, propagate imprecision to prediction, resulting in models that capture the full range of possible outcomes. Central to these methods is the concept of a set of permissible solutions. This approach leads to indeterminate yet credible models, particularly in domains where uncertainty is prevalent. Our research aligns with this line of work, focusing on developing domain generalisation strategies that acknowledge and adapt to imprecision. By embracing imprecision, we aim to create models that offer a range of permissible solutions, empowering model operators to make informed choices at test time. The use of a credal set to model epistemic uncertainty has been concurrently explored by \citet{Caprio24:CLT} to derive generalization bounds under credal uncertainty.

\section{Imprecise Domain Generalisation}
\label{sec:imprecise-dg}

In this work, we advocate for an \emph{imprecise learning}, where learners do not commit to any particular $\PP\in\KK(Z)$ at training time, but express their uncertainty through a credal set $\KK(Z)$, where we discuss our choice in \Cref{sec:cvar}. We operationalise this idea in the context of domain generalisation (DG) problems. To this end, consider data coming from $d$ distinct domains, each with its own distribution $\PP_1,\dots,\PP_d$, and corresponding risk profiles $(\cR_1,\ldots,\cR_d) \triangleq \bm{\cR}$. The learner's objective is to select an optimal hypothesis from $\cH$ considering both the risk profiles and $\KK(Z)$, based on a certain optimality criterion defined below. While we mainly focus on multi-domain environments, this framework is also relevant and adaptable to single-domain scenarios (see \Cref{sec:single-domain} for further discussion).


\textbf{Credal set and partial preference. } A crucial distinction between precise and imprecise learning lies in their approach to learner's preferences~\citep{chau2022spectral_ecml,chau_learning_2022}. Precise learners commit to a specific distribution $\PP\in\KK(Z)$ during training, creating a complete\footnote{For every $f, g \in \cH$, either $f\succeq g$, $g\succeq f$, or both hold.} and transitive preference order $\succeq$ based on empirical risk in $\cH$. That is, for any $f, g\in \cH, f\succeq g$ if and only if $\widehat{\cR}(f) \leq \widehat{\cR}(g)$. Conversely, imprecise learning based on the credal set $\KK(Z)$ results in a \emph{partial} order over $\cH$ \citep{giron_quasi-bayesian_1980,Walley91:SRIP}:
\begin{lemma}\label{lem:incomplete-pref}
    The binary relation $\succeq$ represented by $\KK(Z)$ is such that for $f, g\in\cH$, $f\succeq g$, if and only if  $\mathbb{E}_{\PP}[Z_f] \leq \mathbb{E}_{\PP}[Z_g]$ for every $\PP\in\KK(Z)$.
\end{lemma}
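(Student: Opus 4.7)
The plan is to follow the standard representation construction from Walley's theory of imprecise probabilities, in which a credal set induces a preference via unanimous dominance of expected loss. Concretely, I would adopt (or recall) the definition that a credal set $\KK(Z)$ \emph{represents} a binary relation $\succeq$ on $\cH$ precisely when $f \succeq g$ holds if and only if $f$ is weakly preferred under every admissible precise belief $\PP \in \KK(Z)$. Given that each $\PP$ induces its own complete preference $f \succeq_\PP g \iff \EE_\PP[Z_f] \leq \EE_\PP[Z_g]$ via expected loss comparison, the claim then reduces to taking the intersection $\succeq \,=\, \bigcap_{\PP \in \KK(Z)} \succeq_\PP$, which is exactly the stated ``for every $\PP \in \KK(Z)$'' condition.

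The key steps, in order, are: (i) state formally what it means for $\KK(Z)$ to represent $\succeq$, namely that the relation arises from robust/unanimous dominance across the credal set; (ii) unfold the definition of each $\succeq_\PP$ in terms of expected random loss $Z_f = \ell(f(X), Y)$; (iii) conclude the biconditional by set-theoretic intersection. As a sanity check I would also verify that the resulting relation is reflexive (since $\EE_\PP[Z_f] \leq \EE_\PP[Z_f]$ for all $\PP$) and transitive (since transitivity is preserved under intersection of orders), and observe that completeness can fail: if there exist $\PP_1, \PP_2 \in \KK(Z)$ with $\EE_{\PP_1}[Z_f] < \EE_{\PP_1}[Z_g]$ and $\EE_{\PP_2}[Z_f] > \EE_{\PP_2}[Z_g]$, then neither $f \succeq g$ nor $g \succeq f$, establishing that $\succeq$ is genuinely a \emph{partial} order as announced in the surrounding text.

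The main obstacle, if any, is purely expository rather than technical: the lemma is essentially a definitional unpacking, so care must be taken to state clearly what ``represented by $\KK(Z)$'' means before using it, and to cite the appropriate result in \citet{giron_quasi-bayesian_1980} or \citet{Walley91:SRIP} so the reader does not mistake this for a nontrivial equivalence. Beyond that, no estimates, limiting arguments, or measurability subtleties are needed; the expectations $\EE_\PP[Z_f]$ are well-defined under the standing assumption $\ell \geq 0$, and the partial-order structure transfers from each precise $\succeq_\PP$ to the intersection without friction.
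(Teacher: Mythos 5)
Your proposal is correct and matches the paper's own treatment: the paper provides no explicit proof of this lemma in the appendix, instead treating it as the standard definitional consequence of representing a preference by a credal set via unanimous expected-loss dominance, with the citation to \citet{giron_quasi-bayesian_1980} and \citet{Walley91:SRIP} doing the work — exactly as you anticipate. Your additional sanity checks (reflexivity, transitivity under intersection, and the explicit witness for incompleteness) are sound and, if anything, make the definitional content more transparent than the paper's one-line statement.
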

This leads to an \emph{incomplete} preference ordering. \Cref{lem:incomplete-pref} highlights the challenge of learning with imprecision, implying that unless the learners are willing to exert their judgement over the distributions in $\mathbb{K}(Z)$, as was previously done in precise learning, it is no longer possible to unanimously identify the ``best'' hypothesis in $\cH$ from the observed data alone. 
In the following, we describe how the learners can implement imprecise learning at training time such that the operators can make prediction efficiently at test time.

\subsection{Aggregation Functions and Optimality Criteria}

To facilitate learning, we need a certain notion of optimality taking into account $\KK(Z)$. We formalise this by considering an \emph{aggregated learning algorithm} $\mfA: \bm{\cR}\mapsto h^*$ that takes a risk profile and returns a hypothesis $h^*\in\cH$. In particular, we focus on a specific type of aggregation function called an aggregated risk minimizers:
\begin{equation}\label{eq:arm}
    \mfA: \bm{\cR} \mapsto \arg\min_{\theta\in\Theta}\, \rho_{\lambda}[\bm{\cR}](h_{\theta}), \quad \lambda\in\Lambda,
\end{equation}
where $\rho_\lambda : \cL_2^d(\cH)\to \cL_2(\cH)$ is a risk aggregation function indexed by $\lambda\in\Lambda$, which yields the non-negative real-valued statistical functional $\rho_\lambda[\bm{\cR}]:\cH\to\RR_+$. Here, we assume that our model class $\mathcal{H}$ is parametrized by a parameter space $\Theta\subseteq \RR^p$, e.g., a weight vector in a neural network.
We call $\Lambda$ a choice space which arises exclusively due to the imprecision of the learning problem (cf. \Cref{lem:incomplete-pref}) and serves merely as an index set. 
In practice, we only have access to the empirical risks $(\widehat{\cR}_1,\ldots,\widehat{\cR}_d) \triangleq \widehat{\bm{\cR}}$ which we can substitute directly into \eqref{eq:arm}. 
In \cref{sec:cvar}, we consider Conditional Value-at-Risk (CVaR) as a concrete example of the risk aggregator $\rho_{\lambda}$.
Our formulation \eqref{eq:arm} is not only pertinent to financial risk measures but has also gained traction for creating interpretable, risk-aware machine learning algorithms \citep{pmlr-v97-williamson19a}.

For each $\lambda\in\Lambda$, we denote the Bayes optimal models by $h_{\lambda}^* \in \arg\min_{\theta\in\Theta}\, \rho_{\lambda}[\bm{\cR}](h_{\theta})$ and the associated parameter by $\theta^*_{\lambda}$.
Unfortunately, for continuous choice space, it is unrealistic for the learner to find the Bayes optimal models in $\cH$ simultaneously for all $\lambda\in\Lambda$. For this reason, we generalise the notion of Pareto optimality \citep{Pareto1897:Pareto} from multi-objective optimisation to its continuous counterpart and propose an alternative optimality criterion with respect to all $\lambda\in\Lambda$ : C-Pareto optimality. 

\begin{definition}[C-Pareto optimality]\label{def:c-pareto-optimality}
The hypothesis $h_{\theta}$ dominates $h_{\theta'}$, denoted by $h_{\theta} \triangleright h_{\theta'}$, if $\rho_{\lambda}[\bm{\cR}](h_{\theta}) \leq \rho_{\lambda}[\bm{\cR}](h_{\theta'})$ for all $\lambda\in\Lambda$ and $\rho_{\tilde{\lambda}}[\bm{\cR}](h_{\theta}) < \rho_{\tilde{\lambda}}[\bm{\cR}](h_{\theta'})$ for some $\tilde{\lambda}\in\Lambda$. Then, $h_{\theta}$ is C-Pareto optimal if there exists no $h_{\theta'}$ such that $h_{\theta'} \triangleright h_{\theta}$.
\end{definition}

When $\lambda$ takes values on a finite set $\Lambda$ with $m$ elements, i.e., $\Lambda=\{\lambda_1,\ldots,\lambda_m\}$, \Cref{def:c-pareto-optimality} coincides with the Pareto optimality in standard multi-objective optimization (MOO); see, e.g., \citet{Sener18:MTL-MOO,Lin19:Pareto-MTL,Zhang20:HyperMOO,Ma20:Efficient-Pareto-MTL} and references therein. \citet{Chen23:Pareto-IRM} have recently studies trade-offs between ERM and existing OOD objectives using MOO.

It is not hard to show that, like $\succeq$ introduced in \Cref{lem:incomplete-pref}, $\triangleright$ can be incomplete and that any Bayes optimal models $h_{\lambda}^*$ are also C-Pareto optimal. Intuitively, instead of obtaining the Bayes optimal model for all $\lambda\in\Lambda$, the learner can at best find the non-dominating models, i.e., the models upon which an improvement is only possible at a cost of deterioration of another non-dominating model.



Next, we introduce the notion of C-Pareto stationary used to check if a model is C-Pareto optimal.

\begin{definition}[C-Pareto stationary]
    Suppose $\rho_{\lambda}[\bm{\cR}](h_{\theta})$ is a smooth function of $h_{\theta}$ and define the local gradient at $h_\diamond$ as $v_\lambda := \nabla\rho_{\lambda}[\bm{\cR}](h_\diamond)$. The point $h_\diamond$ is called C-Pareto stationary if and only if there exists a probability density $q$ such that $\int v_\lambda \,d q(\lambda) = 0$.
    \label{def:C-pareto-stationarity}
\end{definition}

\subsection{Conditional Value-at-Risk (CVaR)} 
\label{sec:cvar}

In theory, all aggregation functions $\rho_\lambda[\bm{\cR}]$ can be expressed as a type of weighted average of $\bm{\cR}$, as detailed in Proposition~\ref{prop:cvar_aggregate}. A high level of generality could be achieved by formulating $\KK(Z)$ as the convex hull of $\PP_1,\dots,\PP_d$. This corresponds to treating the choice parameter $\lambda \in \RR^d$ as all possible averaging weight, thus defining $\rho_\lambda[\bm{\cR}] = \lambda^\top \bm{\cR}$. However, this approach has its serious drawbacks, since $\lambda$ might be difficult for the operators to interpret, potentially leading to \emph{irrational} decisions. For instance, operators may inappropriately assign more weight to domains that are easier to train, resulting in atypical ``risk-seeking'' behaviour.

To select an appropriate aggregation function (equivalent to formulating an appropriate credal set) that is both interpretable and aligned with typical behaviour such as risk aversion, we opt for $\rho_\lambda$ from the class of risk measures. This corresponds to formulating credal set as distributions that are mixtures of $\PP_1,\dots\PP_d$ with weights determined by the aggregation function. Notably, we choose the Conditional Value-at-Risk (CVaR):
\begin{definition}[Conditional Value-at-Risk~\citep{rockafellarConditionalValueatriskGeneral2002}]
\label{prop:cvar}
Let $\bm{\cR} = ({\cR}_{1}, \dots, {\cR}_{d})$ represent our risk profile, and $F_{\bm{\cR}}(r) = \frac{1}{d}\sum_{i=1}^d\mathbb{I}[\cR_i\leq r]$ as the cumulative distribution function (CDF) for $\bm{\cR}$. Define $r_\lambda = \min_r\{r \mid F_{\bm{\cR}}(r) \geq \lambda\}$ as the $\lambda$-level quantile. Then, the Conditional Value-at-Risk for $\bm{\cR}$ at level $\lambda$ is given by:
\begin{align}
\sum_{i=1}^d \left(\eta_\lambda \mathbb{I}[\cR_i = r_\lambda] + \frac{(1-\eta_\lambda)\mathbb{I}[\cR_i \geq r_\lambda]}{{\sum_{i=1}^d}\mathbb{I}[\cR_i \geq r_\lambda]}\right)\cR_i
\end{align}
where $\eta_\lambda = (F_{\bf{\cR}}(r_\lambda) - \lambda)(1-\lambda)^{-1}$, indicating the discontinuity level of the CDF at $\lambda$.
\end{definition}

CVaR effectively enables operators to express their level of risk aversion through $\lambda$, which in turn influences the selection of riskier domains for optimization. Additionally, this approach provides a means to transition smoothly between two prevalent notions of generalisation \citep{Robey22:PRL,Eastwood22:QRM,Li23:TERM}, namely optimising average risks ($\lambda = 0$) and worst-case risks ($\lambda = 1$). Furthermore, CVaR belongs to a class of coherent risk measures, which possess desirable properties \citep{Artzner99:CRM} and have been studied in the robust optimisation literature; see, e.g., \citet{Ben-Tal10:Soft-Robust}.

\subsection{Augmented Hypothesis}

To further institutionalize the separation of statistical decision-making, i.e., choosing appropriate notion of generalisation (performed by the operator) from statistical learning (performed by the learner), we propose to shift the problem view to an imprecise setting where the learner does not assume a priori which $\lambda\in\Lambda$ is relevant to the operator, but instead designs a model that allows the operator to choose their own $\lambda$ at deployment time.\footnote{While we focus primarily on the learning aspect and assume throughout that the operator knows how to specify $\lambda$, we acknowledge the challenge of eliciting operators’ preferences at test time; see \Cref{sec:test-time-elicit} for further discussion on test-time elicitation.}

To this end, we extend the hypothesis space to an augmented hypothesis space $\cH_{\Lambda}$ of functions of both $x$ and $\lambda$, and propose to learn an \emph{augmented} hypothesis $\bar{h}_\xi : \cX\times\Lambda\to\cY$ parametrized by a parameter $\xi\in\Xi\subseteq\RR^q$. In contrast with $h_\theta\in\cH$ which is fixed across $\lambda\in\Lambda$, an augmented hypothesis $\bar{h}_\xi\in\cH_{\Lambda}$ describes a range of possible hypothesis $\bar h_\xi(\cdot, \lambda)$ for each $\lambda\in\Lambda$, such that the user can choose the one that best fits their needs. By abuse of notation, we consider $\rho_{\lambda}[\bm{\cR}](\bar{h}_{\xi}(\cdot,\lambda))$ as a point-wise aggregated risk for the augmented hypothesis $\bar{h}_{\xi}\in\cH_{\Lambda}$. The subtle difference here is that we evaluate the objective at $\bar{h}_{\xi}(\cdot,\lambda)$ for the same $\lambda$ used in $\rho_{\lambda}$. While the idea of augmented hypothesis with loss-conditional learning has previously been considered \cite{Brault19:Infinite-Task,Dosovitskiy20:YOTO}, existing work still fall into the setting of precise learning, as we describe subsequently in \Cref{sec:algorithm}.

The function $h^* : (x, \lambda)\mapsto h^*_\lambda(x)$ that maps onto a Bayes optimal model for each $\lambda\in\Lambda$ is an example of such augmented hypothesis. However, we may again prefer to consider a more amenable optimality criterion that seeks optimality jointly across $\Lambda$. To this end, we extend \Cref{def:c-pareto-optimality} to an augmented hypothesis.

\begin{definition}[C-Pareto optimal augmented hypothesis]
    The augmented hypothesis $\bar h_\xi$ dominates $\bar h_{\xi'}$, denoted $\bar h_\xi\triangleright \bar h_{\xi'}$, if $\rho_\lambda[\bm{\cR}]\big(\bar h_\xi(\cdot, \lambda)\big) \leq \rho_\lambda[\bm{\cR}]\big(\bar h_{\xi'}(\cdot, \lambda)\big)$ for all $\lambda\in\Lambda$ and $\rho_{\tilde \lambda}[\bm{\cR}]\big(\bar h_\xi(\cdot, \tilde \lambda)\big) < \rho_{\tilde \lambda}[\bm{\cR}]\big(\bar h_{\xi'}(\cdot, \tilde \lambda)\big)$ for some $\tilde\lambda\in\Lambda$. Then $\bar h_\xi$ is C-Pareto optimal if there exists no $\bar h_{\xi'}$ such that $\bar h_{\xi'}\triangleright\bar h_\xi$.
\end{definition}

We can again verify that a function $h^* : (x, \lambda)\mapsto h^*_\lambda(x)$ that maps onto a Bayes optimal model for each $\lambda$ is in fact C-Pareto optimal. The following result shows that, under the assumption of existence of a Bayes optimal model, C-Pareto optimality is in fact equivalent to Bayes optimality.
\begin{proposition}\label{prop:uniquenessofh}
    Suppose there exists $h^*\in\cH_\Lambda$ such that $h^*(\cdot, \lambda)$ is Bayes optimal for all $\lambda\in\Lambda$. Then an augmented hypothesis $g^*\in\cH_\Lambda$ is C-Pareto optimal if and only if $g^*(\cdot, \lambda)$ is a Bayes optimal model for all $\lambda\in\Lambda$.
\end{proposition}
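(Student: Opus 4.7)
The plan is to prove the two directions of the equivalence separately, and both will be short proofs by contradiction using the very definition of C-Pareto optimality (Definition of dominance for augmented hypotheses). The existence assumption on $h^*\in\cH_\Lambda$ is used essentially only in the harder direction, where it provides a concrete candidate to dominate any hypothesis that fails to be Bayes optimal at some $\lambda$.

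For the easy direction ($\Leftarrow$), suppose $g^*(\cdot,\lambda)$ is Bayes optimal for every $\lambda\in\Lambda$. I would argue that no augmented hypothesis $\bar h_{\xi'}\in\cH_\Lambda$ can dominate $g^*$: if $\bar h_{\xi'}\triangleright g^*$, then there would exist some $\tilde\lambda\in\Lambda$ with $\rho_{\tilde\lambda}[\bm{\cR}](\bar h_{\xi'}(\cdot,\tilde\lambda))<\rho_{\tilde\lambda}[\bm{\cR}](g^*(\cdot,\tilde\lambda))$, directly contradicting that $g^*(\cdot,\tilde\lambda)$ minimises $\rho_{\tilde\lambda}[\bm{\cR}](\cdot)$ over $\cH$. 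Hence $g^*$ is C-Pareto optimal.

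For the non-trivial direction ($\Rightarrow$), I would again argue by contradiction. Assume $g^*\in\cH_\Lambda$ is C-Pareto optimal but there is some $\tilde\lambda\in\Lambda$ such that $g^*(\cdot,\tilde\lambda)$ is not Bayes optimal. By hypothesis, $h^*(\cdot,\tilde\lambda)$ is Bayes optimal, so $\rho_{\tilde\lambda}[\bm{\cR}](h^*(\cdot,\tilde\lambda))<\rho_{\tilde\lambda}[\bm{\cR}](g^*(\cdot,\tilde\lambda))$. Moreover, for every $\lambda\in\Lambda$, Bayes optimality of $h^*(\cdot,\lambda)$ gives $\rho_{\lambda}[\bm{\cR}](h^*(\cdot,\lambda))\leq\rho_{\lambda}[\bm{\cR}](g^*(\cdot,\lambda))$. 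These two facts together are exactly the definition of $h^*\triangleright g^*$, which contradicts the C-Pareto optimality of $g^*$. Therefore $g^*(\cdot,\lambda)$ must be Bayes optimal for every $\lambda$.

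There is no real technical obstacle here; the argument is essentially a bookkeeping exercise about the definitions. The only subtlety worth flagging in the write-up is the role of the existence hypothesis: without the assumption that a single augmented hypothesis $h^*\in\cH_\Lambda$ simultaneously attains Bayes optimality at every $\lambda$, one can only construct candidate dominators pointwise in $\lambda$, and there is no guarantee they fit together into a member of $\cH_\Lambda$. The assumed existence of $h^*$ is precisely what makes C-Pareto optimality collapse to pointwise Bayes optimality, which is the content of the proposition.
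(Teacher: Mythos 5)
Your proof is correct and takes essentially the same route as the paper: both directions rest on using $h^*$ as the candidate dominator, with the paper phrasing the forward direction as a direct unfolding of the negation of dominance while you phrase it as a contradiction — these are logically the same argument. Your closing remark on why the existence of a single $h^*\in\cH_\Lambda$ is needed is a useful observation that the paper leaves implicit.
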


Proposition~\ref{prop:uniquenessofh} illustrates that all C-Pareto optimal augmented hypotheses can simultaneously learn all the Bayes optimal models. While this provides a strong guarantee, finding a C-Pareto optimal solution may still in practice be challenging and, when possible, one will prefer optimising against a scalar objective. 

Let $\Delta(\Lambda)$ be the space of probability density functions over $\Lambda$. In our imprecise learning setting, a learner can scalarise the objective by choosing a distribution $Q\in\Delta(\Lambda)$, and taking an expectation over all objectives. This substitutes the learning problem over all of $\Lambda$ with the scalarised objective
\begin{equation}
    J_Q(\bar h_\xi) = \EE_{\lambda\sim Q}\left[\rho_\lambda[\bm{\cR}](\bar h_\xi(\cdot,\lambda))\right],
    \label{eq:aggregate-with-p-lambda}
\end{equation}
where the choice of distribution $Q$ corresponds to a choice of scalarisation from the learner.  The following proposition shows that all choices of $Q$ lead to Bayes optimal models on their support.
\begin{proposition}\label{imprecise-aggregation-result}
    Let $Q\in\Delta(\Lambda)$. If $g^*\in \cH_\Lambda$ solves the scalarised optimisation problem, i.e., $g^* \in {\arg\min}_{g\in \cH_\Lambda}\,J_Q(g)$,
    then $g^*(\cdot, \lambda)$ is a Bayes optimal model for all $\lambda\in\Lambda$ such that $Q(\lambda) > 0$.
\end{proposition}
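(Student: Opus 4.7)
The plan is to exploit the pointwise structure of the scalarised objective
\[
J_Q(g) = \EE_{\lambda\sim Q}\bigl[\rho_\lambda[\bm{\cR}](g(\cdot,\lambda))\bigr].
\]
Because the slices $g(\cdot,\lambda)$ of an augmented hypothesis $g\in\cH_\Lambda$ can (essentially) be chosen independently across $\lambda$, the integrand depends on $g$ only via its value at $\lambda$. This yields the pointwise lower bound $\rho_\lambda[\bm{\cR}](g(\cdot,\lambda))\geq \inf_{h\in\cH}\rho_\lambda[\bm{\cR}](h) = \rho_\lambda[\bm{\cR}](h^*_\lambda)$ for every $\lambda$, where $h^*_\lambda$ is a Bayes optimal model (whose existence is the same standing assumption invoked in Proposition \ref{prop:uniquenessofh}).

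First, I would assemble the pointwise minimisers into an augmented hypothesis $h^*:(x,\lambda)\mapsto h^*_\lambda(x)$ and note that $h^*\in\cH_\Lambda$ achieves $J_Q(h^*) = \EE_{\lambda\sim Q}[\inf_{h\in\cH}\rho_\lambda[\bm{\cR}](h)]$, so $J_Q(g)\geq J_Q(h^*)$ for every $g\in\cH_\Lambda$. Second, if $g^*$ is a minimiser of $J_Q$, subtracting the two sides gives
\[
\EE_{\lambda\sim Q}\bigl[\rho_\lambda[\bm{\cR}](g^*(\cdot,\lambda)) - \rho_\lambda[\bm{\cR}](h^*_\lambda)\bigr] = 0.
\]
Third, since this integrand is pointwise non-negative by the lower bound above, it must vanish $Q$-almost everywhere. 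Hence $\rho_\lambda[\bm{\cR}](g^*(\cdot,\lambda)) = \inf_{h\in\cH}\rho_\lambda[\bm{\cR}](h)$ for $Q$-a.e.\ $\lambda$, which is exactly the statement that $g^*(\cdot,\lambda)$ is Bayes optimal on the support of $Q$, i.e.\ on $\{\lambda : Q(\lambda) > 0\}$.

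The main obstacle is not conceptual but a matter of regularity. Two small points need care. (i) One must ensure that the pointwise composite $h^*:(x,\lambda)\mapsto h^*_\lambda(x)$ actually lies in $\cH_\Lambda$, so that $J_Q(h^*)$ is a legitimate value of the optimisation problem; this is a mild richness/measurable-selection property of $\cH_\Lambda$ already implicit in Proposition \ref{prop:uniquenessofh}. (ii) When $\Lambda$ is continuous, the conclusion is naturally a $Q$-a.e.\ statement, and identifying ``$Q$-a.e.'' with ``for all $\lambda$ with $Q(\lambda)>0$'' requires either reading $Q(\lambda)>0$ as ``$\lambda$ is in the support'' (which is fine under mild continuity of $\lambda\mapsto \rho_\lambda[\bm{\cR}](g^*(\cdot,\lambda))$) or simply restating the conclusion in $Q$-a.e.\ form. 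Neither issue affects the core one-line argument: minimising an expectation of a pointwise-lower-bounded functional forces equality with the lower bound on the support.
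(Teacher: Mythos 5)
Your proposal is correct and follows essentially the same argument as the paper: both use the pointwise Bayes-optimality of $h^*$ to get $\EE_{\lambda\sim Q}[\rho_\lambda[\bm{\cR}](g^*(\cdot,\lambda)) - \rho_\lambda[\bm{\cR}](h^*(\cdot,\lambda))]\geq 0$, the minimality of $g^*$ to get the reverse inequality, and non-negativity of the integrand to conclude it vanishes wherever $Q(\lambda)>0$. Your added remarks on the existence of $h^*$ in $\cH_\Lambda$ and the $Q$-a.e.\ versus support distinction are fair caveats that the paper handles only implicitly via the standing assumption in Proposition~\ref{prop:uniquenessofh}.
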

    
A similar result has previously been shown in \citet[Proposition 1]{Dosovitskiy20:YOTO} under the continuity and infinite model capacity assumptions.
This result implies in particular that for any choice of distribution $Q$ with full support, the scalarised objective can in theory yields a Bayes optimal model for every $\lambda\in\Lambda$. 

\section{Imprecise Risk Optimisation}
\label{sec:algorithm}

Unfortunately, \Cref{imprecise-aggregation-result} does not inform specific choices of $Q$ for the learner, leaving them in a state of ignorance. Under this scenario, the most popular narrative in the literature is to leave the choice of $Q$ to the operators or to adopt non-informative priors such as Jeffreys prior and uniform priors \citep{Brault19:Infinite-Task,Dosovitskiy20:YOTO}.
However, both approaches would defeat the purpose of this work as they render the learning problem precise again (see the discussions in \Cref{sec:preliminaries}). In particular, it has been argued that complete or partial ignorance cannot be fully represented by a single precise probability \citep[Sec. 5.5]{Walley91:SRIP}. 
For example, uniform distribution is not an appropriate way of representing ignorance because it coincides with a precise judgement of uniform belief.

\textbf{C-Pareto improvement.} To overcome this challenge, we adopt the concept of \emph{C-Pareto improvement} which allows us to develope a learning algorithm that respects not only the limitation of evidence and resource, but also the complete ignorance of the learner. Specifically, we focus on the gradient-based method:
\begin{equation}\label{eq:gradient-update}
    \xi_{t} \leftarrow \xi_{t-1} - \eta\cdot\nabla_\xi J_{Q_t}(\bar h_\xi), \quad Q_t\in\Delta(\Lambda).
\end{equation}
We say that the update \eqref{eq:gradient-update} makes a C-Pareto improvement if $\bar{h}_{\xi_t}$ dominates $\bar{h}_{\xi_{t-1}}$ according to the aggregation $\rho_{\lambda}[\bm{\cR}]$. The central concept involves the adaptive selection of $Q_t$ at each step, ensuring that the parameter update remains consistently non-dominant. This approach bears resemblance to the multiple-gradient descent algorithm (MGDA) utilised in multi-objective optimisation \citep{MGDA12}. The subsequent result demonstrates the specific selection of $Q_t$ that results in C-Pareto improvement.

\begin{theorem}\label{theorem:pareto-improvement}
For $\lambda\in\Lambda$, suppose $\xi\mapsto\rho_\lambda[\bm{\cR}](\bar{h}_\xi(\cdot,\lambda))$ is locally continuously differentiable in a neighbourhood of $\xi$. Define 
\begin{equation}
Q_t^* \in \underset{Q \in \Delta(\Lambda)}{\arg\min}\, \left\|\nabla_{\xi_{t-1}} J_Q(\xi_{t-1}) \right\|_2 \label{eq:choose-qt}
\end{equation}
and $v_t(\xi_t) = \nabla_{\xi_t} J_{Q^*_t}(\xi_t)$. Then the update $\xi_{t} \leftarrow \xi_{t-1} - \eta\cdot v_{t}(\xi_t)$ for an appropriate choice of $\eta >0$ always makes C-Pareto improvement. 
\end{theorem}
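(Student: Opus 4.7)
The plan is to adapt the reasoning underlying the multiple-gradient descent algorithm of \citet{MGDA12} to our augmented-hypothesis setting. First, I would exploit linearity of expectation and differentiation (justified by the local continuous differentiability assumption) to rewrite the scalarised-objective gradient as
\begin{equation*}
\nabla_\xi J_Q(\xi) = \int_\Lambda v_\lambda(\xi)\,d Q(\lambda), \qquad v_\lambda(\xi) := \nabla_\xi \rho_\lambda[\bm{\cR}](\bar h_\xi(\cdot,\lambda)).
\end{equation*}
Hence the minimisation defining $Q_t^*$ in \eqref{eq:choose-qt} is equivalent to finding the minimum-norm element of the closed convex set $C_{\xi_{t-1}} := \overline{\operatorname{conv}}\{v_\lambda(\xi_{t-1}) : \lambda\in\Lambda\}$, and $v_t(\xi_{t-1})$ coincides with that projection of the origin.

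Second, I would split into two cases. If $v_t(\xi_{t-1}) = 0$, then $Q_t^*$ certifies that $\xi_{t-1}$ is already C-Pareto stationary in the sense of \Cref{def:C-pareto-stationarity}, and the update is trivially non-dominating. Otherwise, since $v_t(\xi_{t-1})$ is the metric projection of $0$ onto the closed convex set $C_{\xi_{t-1}}$, the standard variational inequality gives $\langle v_t(\xi_{t-1}),\, u - v_t(\xi_{t-1})\rangle \geq 0$ for every $u\in C_{\xi_{t-1}}$. Specialising to $u = v_\lambda(\xi_{t-1})$ yields the key MGDA inequality
\begin{equation*}
\langle v_\lambda(\xi_{t-1}),\, v_t(\xi_{t-1})\rangle \;\geq\; \|v_t(\xi_{t-1})\|_2^2 \;>\; 0 \quad \text{for every } \lambda\in\Lambda,
\end{equation*}
which identifies $-v_t(\xi_{t-1})$ as a simultaneous descent direction for the family $\{\rho_\lambda[\bm{\cR}](\bar h_\cdot(\cdot,\lambda))\}_{\lambda\in\Lambda}$.

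Third, I would invoke a first-order Taylor expansion along the direction $-v_t(\xi_{t-1})$: by the continuous-differentiability hypothesis,
\begin{equation*}
\rho_\lambda[\bm{\cR}]\bigl(\bar h_{\xi_{t-1} - \eta v_t}(\cdot,\lambda)\bigr) - \rho_\lambda[\bm{\cR}]\bigl(\bar h_{\xi_{t-1}}(\cdot,\lambda)\bigr) = -\eta\,\langle v_\lambda(\xi_{t-1}),\, v_t(\xi_{t-1})\rangle + o(\eta),
\end{equation*}
so combining with the previous inequality yields a strict decrease of each $\rho_\lambda$ for all sufficiently small $\eta > 0$, which is precisely C-Pareto improvement.

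The main obstacle, and the reason the argument is not purely textbook, is that $\Lambda$ is a continuum: a priori the admissible step sizes $\eta_\lambda$ furnished by the pointwise Taylor expansion could shrink to zero over $\lambda\in\Lambda$, and no single $\eta$ would work uniformly. Overcoming this requires a uniform remainder bound, obtained for instance by assuming continuity of $\lambda \mapsto v_\lambda(\xi)$ on a compact $\Lambda$ together with a local uniform Lipschitz bound on $\xi\mapsto v_\lambda(\xi)$; one can then extract a single $\eta > 0$ delivering strict decrease simultaneously for every $\lambda\in\Lambda$, thereby completing the proof.
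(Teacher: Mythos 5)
Your proposal is correct and, at its core, follows the same route as the paper's proof in \Cref{appendix:c-pareto}: both arguments reduce to showing that the minimum-norm gradient $v_t$ satisfies $\langle v_\lambda, v_t\rangle \geq \|v_t\|_2^2$ for every $\lambda\in\Lambda$, and then conclude via a first-order expansion that $-v_t$ is a common descent direction. The difference is in how that key inequality is obtained and in how carefully the final step is handled. You invoke the standard variational inequality for the metric projection of the origin onto the closed convex hull $\overline{\operatorname{conv}}\{v_\lambda(\xi_{t-1})\}$; the paper instead perturbs $Q_t^*$ along mixtures $\epsilon Q + (1-\epsilon)Q_t^*$ and lets $\epsilon\to 0^+$, which is precisely a hands-on derivation of that same variational inequality, so the two are mathematically equivalent (your framing is arguably cleaner and makes the MGDA connection explicit). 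More substantively, you correctly identify the genuine delicacy of the last step: because $\Lambda$ is a continuum, the pointwise Taylor remainders must be controlled uniformly in $\lambda$ before a single step size $\eta>0$ can be extracted, and you propose the right hypotheses (compactness of $\Lambda$, continuity of $\lambda\mapsto v_\lambda$, a local uniform Lipschitz bound on $\xi\mapsto v_\lambda(\xi)$) to do so. The paper's proof elides this point: it closes by appealing to \Cref{proposition:dotw}, but that proposition only establishes the necessity direction (non-optimality implies existence of a direction aligned with all gradients), not the sufficiency needed here, and its pointwise ``$\epsilon\to 0$'' argument does not by itself furnish a single $\eta$ valid for all $\lambda$ simultaneously. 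Your treatment therefore fills a small but real gap in the published argument, and both approaches otherwise buy the same conclusion.
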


\subsection{Practical Algorithm with Theoretical Justification}

In practice, we have access to data from $d$ distinct domains. The empirical risk for the augmented hypothesis $\bar h_\xi\in\cH_\Lambda$ for the $i$\textsuperscript{th} domain can be computed for each $\lambda\in\Lambda$ as
\begin{equation}
\widehat\cR_i(\bar h_\xi(\cdot, \lambda)) = \frac{1}{n}\sum_{j=1}^n \ell(\bar h_\xi(x_j^{(i)},\lambda),y_j^{(i)}),
    \label{eq:augmented-risk-functional}
\end{equation}
where $(x_j^{(i)},y_j^{(i)})\sim \mathbb{P}_i$. In principle, the choice of $\lambda$ determines how to aggregate the risk profile. However, in practice once $\lambda$ is known, only then the corresponding $\bar{h}_\xi(\cdot,\lambda)\in\cH_{\Lambda}$ can be used to compute the empirical risk profile. For a particular objective $\rho_{\lambda}[\bm{\widehat{\cR}}](\bar{h}_\xi(\cdot,\lambda))$, we can compute the corresponding empirical risk profile as $\bm{\widehat{\cR}}(\bar{h}_\xi(\cdot,\lambda))=\{\widehat{\cR}_i(\bar{h}_\xi(\cdot,\lambda)),\dots,\widehat{\cR}_d(\bar{h}_\xi(\cdot,\lambda))\}$. 
If $Q$ is known to the learner, they can sample $\{\lambda_j\}_{j=1}^m\sim Q$ and compute the corresponding empirical risk profiles for each $\lambda_j$. 
However, for an imprecise learner, the right choice of distribution $Q$ is unknown a priori. Therefore, we defer the computation of the empirical risk profile until the corresponding $\lambda$ is known. That is, given a candidate distribution  $Q \in \Delta(\Lambda)$ we compute the risk profile and aggregate it with $\{\lambda_j\}_{j=1}^m\sim Q$. We can then estimate $Q_t^*$ with $\widehat{Q}_t$ using Monte Carlo estimate of \eqref{eq:choose-qt}, i.e., 
\begin{equation}
    \widehat{Q}_t = \underset{Q \in \Delta(\Lambda)}{\arg\min}\left\|\frac{1}{m}\sum_{j=1}^m\nabla\rho_{\lambda_j}[\bm{\widehat{\cR}}](\bar{h}_\xi(\cdot,\lambda_j))\right\|_2,
    \label{eq:discrete-p-stationary-dist}
\end{equation}
where $\{\lambda_j\}_{j=1}^m\sim Q$. The direction of C-Pareto improvement is obtained by $\hat{v}_t(\xi_t) = \nabla_{\xi_t} J_{\widehat{Q}_t}(\xi_t)$. 
\Cref{iro-algorithm} summarises the proposed algorithm.

\begin{algorithm}[t]
\caption{Imprecise Risk Optimisation (IRO)}
\label{iro-algorithm}
\begin{algorithmic}[1]
    \STATE \textbf{Input:} Data from $d$ distinct domains $\{x_i^{(d)},y_i^{(d)}\}_{i=1}^n \sim \mathbb{P}_d(X,Y)$, a loss function $\ell:\cY\times\cY\to\RR_+$, a probability space $\Delta(\Lambda)$, a (augmented) hypothesis class $\cH_{\Lambda}$, risk aggregator $\rho_\lambda:\cL^d(\cH)\rightarrow\cL(\cH)$, number of Monte Carlo samples $m$.
    \STATE Initialise the parameter $\xi \in \Xi$.
    \REPEAT
    \STATE Estimate $Q^*_t$ with $\widehat{Q}_t$ by solving \eqref{eq:discrete-p-stationary-dist} by computing $\widehat{Q}_t = \underset{Q\in \Delta(\Lambda)}{\arg\min}\,\|\frac{1}{m}\sum_{j=1}^m\nabla\rho_{\lambda_j}[\bm{\widehat{\cR}}](\bar{h}_\xi(\cdot,\lambda_j))\|_2$
    where $\lambda_1,\ldots,\lambda_m \sim Q$.
    \STATE Compute $\hat{v}_t(\xi) = \frac{1}{m'}\sum_{k=1}^{m'}\nabla \rho_{\lambda_k}[\bm{\widehat{\cR}}](\bar{h}_\xi(\cdot,\lambda_k))$ where $\lambda_1,\ldots,\lambda_{m'} \sim \widehat{Q}_t$.
    \STATE Update $\xi = \xi - \eta \hat{v}_t(\xi)$.
    \UNTIL{$\|\hat{v}_t(\xi)\|_2>\epsilon$}
\end{algorithmic}
\end{algorithm}

\begin{proposition}\label{prop:regret-bound}
    Let $Q\in\Delta(\Lambda)$ and let $\lambda_{\text{\textnormal{op}}}\in\Lambda$ such that $Q(\lambda_{\text{\textnormal{op}}}) > 0$. Assume that $\rho_\lambda$ is a linear, idempotent aggregation operator and that the loss $\ell$ is upper bounded by $M \geq 0$. Let $n \geq 1$ be the number of samples we observe from each environment, assumed equal across environments. Then, there exists $q \in (0, 1)$ such that if
    \begin{equation}
        \hat g \in\underset{\bar{g}\in\cH_\Lambda}{\arg\min}\, \frac{1}{m}\sum_{i=1}^m \rho_{\lambda_i}[\bm{\widehat\cR}](\bar{g}(\cdot, \lambda_i))
    \end{equation}
    where $\lambda_1, \ldots, \lambda_m\sim Q$, then for any $\delta > q^m$, the following inequality holds with probability $1 - \delta$:
    \begin{align}
        \begin{split}
        & \big|\rho_{\lambda_{\text{\textnormal{op}}}}[\bm{\cR}](\hat g(\cdot, \lambda_{\text{\textnormal{op}}})) - \rho_{\lambda_{\text{\textnormal{op}}}}[\bm{\cR}](h^*(\cdot, \lambda_{\text{\textnormal{op}}}))\big | \\
        & \qquad \leq 2M \left(\sqrt{\frac{\log(6/\eta_\delta)}{2n}} + \sqrt{\frac{\log(6/\eta_\delta)}{2 m (1-q)(1-q^m)}}\right),
        \end{split}
    \end{align}
    where $\eta_\delta = (\delta - q^m)/(1-q^m)$.
\end{proposition}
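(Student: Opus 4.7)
My approach bounds the excess risk at $\lambda_{\text{op}}$ by pairing a triangle decomposition with two Hoeffding-type concentration inequalities, handling separately the randomness in the $n$ per-environment samples and in the $m$ Monte Carlo draws of $\lambda$. The condition $\delta > q^m$ signals that we first condition on the event $E = \{\exists\, i : \lambda_i = \lambda_{\text{op}}\}$: taking $q = 1 - Q(\lambda_{\text{op}})$ (the natural choice, since $\lambda_1, \ldots, \lambda_m$ are iid from $Q$) gives $\mathbb{P}(E^c) = q^m$, and the law of total probability yields $\mathbb{P}(\text{failure}) \leq \mathbb{P}(\text{failure} \mid E)(1 - q^m) + q^m$. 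It therefore suffices to establish a conditional bound on $E$ with failure probability $\eta_\delta = (\delta - q^m)/(1 - q^m)$, which lies in $(0,1)$ precisely because $\delta > q^m$.

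On $E$, I would decompose
\begin{align*}
    & \rho_{\lambda_{\text{op}}}[\bm{\cR}](\hat g(\cdot, \lambda_{\text{op}})) - \rho_{\lambda_{\text{op}}}[\bm{\cR}](h^*(\cdot, \lambda_{\text{op}})) \\
    & \quad = (A - B) + (B - C) + (C - D),
\end{align*}
where $A, D$ are the two population quantities above and $B, C$ their empirical analogues obtained by replacing $\bm{\cR}$ with $\bm{\widehat{\cR}}$. Linearity and idempotence of $\rho_\lambda$ let me write $\rho_\lambda[\bm{\cR}](f) = \sum_i w_i(\lambda)\,\cR_i(f)$ with non-negative weights summing to $1$; a weighted Hoeffding inequality over the $n$ losses in $[0, M]$ then controls $|A - B|$ and $|C - D|$ by $M\sqrt{\log(6/\eta_\delta)/(2n)}$, each with failure probability $\eta_\delta/3$ (the factor $6$ reflects Hoeffding's two-sided form combined with a union bound over three failure events).

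The main obstacle is the middle slack $B - C$. Because $\hat g$ minimises the scalarised empirical objective $\hat J_Q^{\mathrm{emp}}(g) = \frac{1}{m}\sum_{i=1}^m \rho_{\lambda_i}[\bm{\widehat{\cR}}](g(\cdot, \lambda_i))$ rather than $\rho_{\lambda_{\text{op}}}[\bm{\widehat{\cR}}]$ directly, its value at $\lambda_{\text{op}}$ is pinned down only through the $N$ sampled copies of $\lambda_{\text{op}}$, where on $E$ the variable $N$ is truncated binomial with conditional mean $m(1-q)/(1-q^m)$. Isolating those terms in the inequality $\hat J_Q^{\mathrm{emp}}(\hat g) \leq \hat J_Q^{\mathrm{emp}}(h^*)$ and controlling the residual fluctuations by a Hoeffding bound over the $\lambda$-sampling, rescaled by the effective sample size produced by the conditioning on $E$, yields the $M\sqrt{\log(6/\eta_\delta)/(2m(1-q)(1-q^m))}$ contribution; the denominator $m(1-q)(1-q^m)$ arises naturally as the product of the expected number of useful $\lambda$-draws and the probability of $E$. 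Combining the three conditional failure events by union bound and re-introducing the $q^m$ probability of $E^c$ via Step~1 completes the argument.
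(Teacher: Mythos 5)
Your overall architecture (triangle decomposition, Hoeffding for the $n$-sample error, a conditioning event whose complement has probability $q^m$, then the law of total probability) mirrors the paper's, but two of your key steps have genuine gaps. First, the event $E=\{\exists\, i:\lambda_i=\lambda_{\text{op}}\}$ with $q=1-Q(\lambda_{\text{op}})$ only makes sense if $Q$ is a probability mass function; in the intended setting $Q$ is a \emph{density} on a continuous $\Lambda$ (e.g.\ $[0,1]$ for CVaR), so $\PP(\lambda_i=\lambda_{\text{op}})=0$, $E$ is a null event, and $Q(\lambda_{\text{op}})$ can exceed $1$, making your $q$ negative. The paper instead constructs a rejection-sampling subsampling operator: it assumes the pair $\big(\rho_{\lambda}[\bm{\widehat\cR}](\hat g(\cdot,\lambda)),\rho_{\lambda}[\bm{\widehat\cR}](\hat h(\cdot,\lambda))\big)$ has a continuous, positive density near its value at $\lambda_{\text{op}}$, accepts each index $i$ with probability $1/r$ so that accepted indices yield \emph{unbiased estimators} of the risks at $\lambda_{\text{op}}$, and sets $q=1-1/r$; the number of accepted indices is $\operatorname{Binomial}(m,1/r)$ and $\EE[1/p\mid p\geq 1]\leq \tfrac{1}{m(1-q)(1-q^m)}$ produces the second term of the bound. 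Your "effective sample size" intuition is right, but it needs this construction (or something like it) to be meaningful for continuous $Q$.

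Second, and more seriously, your treatment of the middle slack $B-C$ does not go through. From $\hat J_Q^{\mathrm{emp}}(\hat g)\leq \hat J_Q^{\mathrm{emp}}(h^*)$, isolating the terms at (or near) $\lambda_{\text{op}}$ leaves a residual sum $\sum_{i}\tfrac{1}{m}\big(\rho_{\lambda_i}[\bm{\widehat\cR}](h^*(\cdot,\lambda_i))-\rho_{\lambda_i}[\bm{\widehat\cR}](\hat g(\cdot,\lambda_i))\big)$ over the other draws, and these summands have no sign: $h^*$ is the \emph{population} Bayes optimal, so $\hat g$ may beat it on the empirical objective at many $\lambda_i$, allowing the isolated terms at $\lambda_{\text{op}}$ to be order one. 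The paper closes this hole by assuming the existence of $\hat h\in\cH_\Lambda$ with $\hat h(\cdot,\lambda)\in\arg\min_{f\in\cH}\rho_\lambda[\bm{\widehat\cR}](f)$ for \emph{every} $\lambda$, and proving (Lemma~\ref{lemma:AAA}) that then $\hat g(\cdot,\lambda_i)$ is itself an exact minimiser of $\rho_{\lambda_i}[\bm{\widehat\cR}]$ for each sampled $\lambda_i$ — each summand is nonnegative and the total is nonpositive, so all vanish. With that in hand the middle term reduces to two pure Monte Carlo deviations of unbiased estimators (over the accepted subsample), which is where your Hoeffding-over-$\lambda$ step legitimately applies. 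Without an analogue of this lemma (or the infinite-capacity assumption behind it), your argument for the $\sqrt{\log(6/\eta_\delta)/(2m(1-q)(1-q^m))}$ term does not close.
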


This proposition shows that even when the learner does not know the operator's true preference $\lambda_\text{op}$, the operator excess risk on the solution of the empirical scalarised IRO problem $\hat g$ is bounded with high probability in $O(n^{-1/2} + m^{-1/2})$, provided $Q$ has full support. This means in particular that, provided an unlimited budget on the number of samples (the $\lambda_i$s) that can be drawn from $Q$, the operator excess risk has a bound that matches standard learning rates for ERM.

The constant $q\in (0, 1)$ depends on the choice of distribution $Q$ and the operator's true preference $\lambda_\text{op}$. If $Q$ has a high density around $\lambda_\text{op}$, then $q$ can be chosen closer to zero. Conversely, if $Q$ has a lower density around $\lambda_\text{op}$, the values of $q$ will be closer to one, requiring a larger number of samples $\lambda_1, \ldots, \lambda_m$ to achieve a comparable bound.


\begin{figure*}
    \centering
    \hfill 
    \begin{subfigure}[b]{0.32\textwidth}
        \includegraphics[width=\textwidth]{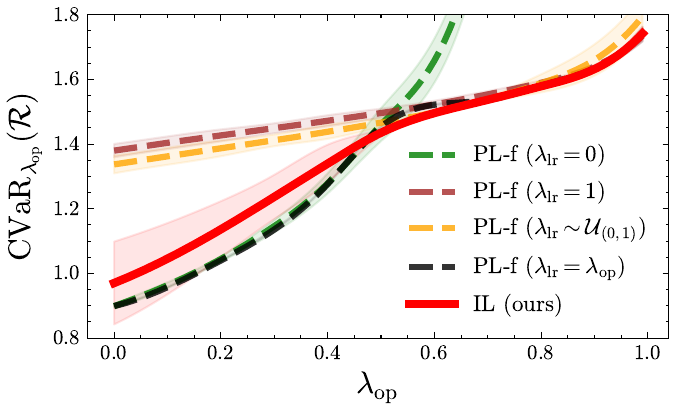}
        
        \caption{(Synthetic data) Comparing \textbf{IL} with $\textbf{PL-}f$ trained using different $\lambda_{\text{lr}}$.}
        \label{fig:precise-vs-imprecise}
    \end{subfigure}
    \hfill
    \begin{subfigure}[b]{0.32\textwidth}
        \includegraphics[width=\textwidth]{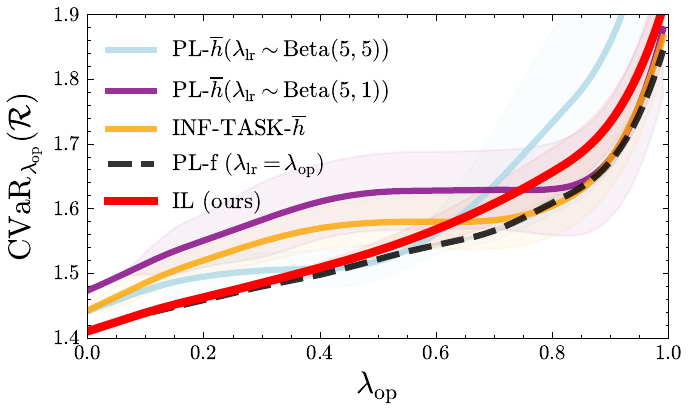}
        \caption{(Synthetic data) Comparing \textbf{IL} with $\textbf{PL-}\bar{h}$ trained using different priors over  $\lambda_{\text{lr}}$.}
        \label{fig:beta-ablation-main-paper}
    \end{subfigure}
    \hfill
    \begin{subfigure}[b]{0.32\textwidth}
        \includegraphics[width=\textwidth]{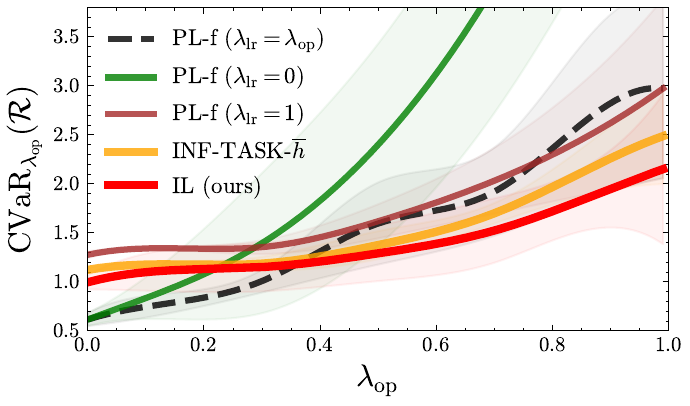}
        \caption{(UCI Bike Rentals) Comparing \textbf{IL} with various $\textbf{PL-}\bar{h}$ and $\textbf{PL-}f$.}
        \label{fig:bike-rentals}
    \end{subfigure}
    \vspace{-.5em}
    \caption{Experiments comparing imprecise learning (\textbf{IL}) with various precise learners with precise hypothesis ($\textbf{PL-}f$) and with augmented hypothesis ($\textbf{PL-}\bar{h}$). 1 standard deviation is included and experiments are repeated 5 times.}
    \label{fig:evaluation_big}
\end{figure*}

\section{Experiments}
\label{sec:experiments}


Our framework features a learner, who trains the model, and an operator, who employs it, with their preferences denoted as $\lambda_{\text{lr}}$ and $\lambda_{\text{op}}$. Due to the institutional separation, the operator's preferred generalisation strategy cannot be communicated to the learner. We assess our Imprecise Learning (IL) framework, allowing learners to train an augmented hypothesis $\bar{h}$ using our IRO algorithm (see \Cref{iro-algorithm}), enabling operators to provide $\lambda_{\text{op}}$ at deployment. This contrasts with Precise Learners (PL-$f$) who commit to a fixed generalisation ($\lambda_{\text{lr}}$) during training, producing a precise hypothesis $f:\cX\to\cY$, and PL-$\bar{h}$, who create an augmented hypothesis but with a pre-determined prior over $\lambda_{\text{lr}}$.

We evaluate using the objective $\rho_{\lambda_{\text{op}}}[\bm{\cR}]$, comparing \textbf{IL}, $\textbf{PL-}f$ with fixed (0 or 1) or uniform $\lambda_{\text{lr}}$, and $\textbf{PL-}\bar{h}$ with prior of $\lambda_{\text{lr}}$ as Beta distributions (5,5), (5,1), and (1,1). The strategy aligning with $\text{Beta}(1, 1)$ corresponds to the approach in \citet{Brault19:Infinite-Task}, thus is termed \textbf{INF-TASK-}$\bar{h}$. We benchmark against an ideal scenario where $\lambda_{\text{lr}}$ equals $\lambda_{\text{op}}$ and also calculate the maximum regret, i.e., for any model $\bar{h}$ (or $f$), $\text{max-regret}(\bar{h}) \triangleq \sup_{\lambda_{\text{op}}\in \Lambda}\rho_{\lambda_{\text{op}}}[\bm{\cR}](\bar{h})-\rho_{\lambda_{\text{op}}}[\bm{\cR}](\bar{h}^*_{\lambda_{\text{op}}})$, to gauge the models' deviation from optimality across all $\lambda_{\text{op}}$.

\textbf{Synthetic data}: Following \citet{eastwood2022}, we construct a simulated experiment to compare learners. We consider a linear model for each domain $d$: $Y_{d}=\theta_{d}X+\epsilon$ with $X \sim \mathcal{N}(1,0.5)$ and $\epsilon\sim \mathcal{N}(0,0.1)$. We simulate different domains by drawing $\theta_d$ with probability $p=0.5$ from Uniform distributions $\mathcal{U}_{(1,1.1)}$ and $\mathcal{U}_{(-1.1,-1)}$. This allows data to exhibit multi-modality, thus creating a discontinuous risk profile which becomes harder for a single augmented hypothesis to capture. We consider 250 train and 250 test domains with 100 samples from each domain.

\textbf{CMNIST dataset.} We also experiment on the CMNIST dataset~\cite {arjovski2021}, which is a modified version of the MNIST dataset. The task is to classify digits $\{0,1,2,3,4\}$ and $\{5,6,7,8,9\}$ into negative and positive classes, respectively. A color is introduced as an additional domain-specific predictive feature that varies across domains, e.g., $\PP(Y=1\,|\,\text{color}=\text{red}) = 0.9$ for domains in which the true label is highly correlated with the color feature. As a result, the mechanism by which color influences the label changes across domains, but the shape has a stable mechanism across domains (see \Cref{fig:dag_of_cmnist}). We sample 10 training environments from a long-tailed $\text{Beta}(0.9,1)$ distribution, resulting in over-represented (majority) and under-represented (minority) subgroups (see \Cref{fig:dist_of_cmnist_train}). Note that we do not make the IID assumption over environments since we evaluate all subgroups at test time. We further discuss the dataset and experiment setup in \Cref{appendix:cmnist}.

\textbf{Real-world data:} Following \citet{rothenhausler2021anchor} and \citet{subbaswamy2019preventing}, we use the UCI Bike Sharing dataset~\citep{fanaee2014event} to predict the number of hourly bike rentals $R$ from various weather-related features. Here, $R$ is transformed from count to continuous with normalization. The data contains $17,379$ observations with temporal information such as season and year. The data is partitioned by season (1-4) and year (1-2) to create $8$ different domains. Domains from the first year are used for training and the subsequent year as test domains.

\begin{table}[t!]
    \caption{Reporting the maximum regret averaged over 5 repetitions for each experiment with one standard error included. \textbf{Top}: Comparing \textbf{IL} with $\textbf{PL-}f$ (Synthetic). \textbf{Middle}: Comparing \textbf{IL} with $\textbf{PL-}\bar{h}$ (Synthetic). \textbf{Bottom}: Comparing \textbf{IL} with $\textbf{PL-}f$ and $\textbf{PL-}\bar{h}$ (Bike Rentals).}
    \vspace{-1em}
    \centering
    \resizebox{\columnwidth}{!}{
    \begin{tabular}{llll}
        \toprule 
        $\textbf{PL-}f~(\mathcal{U}(0,1))$ & $\textbf{PL-}f~(\lambda_{lr}=0)$ & $\textbf{PL-}f~(\lambda_{lr}=1)$ & \textbf{IL} (ours) \\
        \toprule 
        1.971 $\pm$ {\tiny (0.0098)} & 6.177 $\pm$ {\tiny (0.0617)} & 2.010 $\pm$ {\tiny (0.0564)} & \textbf{0.867} $\pm$ {\tiny (0.0058)} \\
        \midrule
        \toprule $\textbf{PL-}\bar{h}~($Beta(5,5)) & $\textbf{PL-}\bar{h}~($Beta(5,1)) & INF-TASK-$\bar{h}$ & \textbf{IL} (ours)\\
        \hline 
         1.79 $\pm$ {\tiny (0.12)} & 1.57 $\pm$ {\tiny (0.03)} & 0.935 $\pm$ {\tiny (0.04)} & \textbf{0.56} $\pm$ {\tiny (0.00)} \\
        \midrule
        \toprule 
        $\textbf{PL-}f(\lambda_{lr}=0)$ & $\textbf{PL-}f(\lambda_{lr}=1)$ & INF-TASK-$\bar{h}$ & \textbf{IL} (ours) \\
        \hline 
        4.81$\pm$ {\tiny{0.27}} & 0.66 $\pm$ {\tiny{0.01}}& 0.72 $\pm$ {\tiny{0.13}} & \textbf{0.42} $\pm$ {\tiny{0.08}}\\
        \hline 
    \end{tabular}}
    \label{tab:regret}
\end{table}

\subsection{Insights from Experiments}
\textbf{Comparing \textbf{IL} with $\textbf{PL-}f$.} Our initial experiment on synthetic data contrasts Imprecise Learning (\textbf{IL}) with Precise Learners ($\textbf{PL-}f$) across different $\lambda_{\text{lr}}$ settings, including average-case ($\lambda_{\text{lr}} = 0$) and worst-case ($\lambda_{\text{lr}}=1$) scenarios. 
Results, shown in \cref{fig:precise-vs-imprecise}, indicate that $\textbf{PL-}f$ models achieve the lowest aggregated risk compared to other learners when $\lambda_{\text{lr}}=\lambda_{\text{op}}$. However, when $\lambda_{\text{lr}}\neq\lambda_{\text{op}}$, $\textbf{PL-}f$ then deviates from this ideal scenario, which is expected since $\textbf{PL-}f$ models are finely tuned to their specific $\lambda_{\text{lr}}$. 
Conversely, \textbf{IL} achieves aggregated risks that remain close to the ideal scenario across the spectrum of $\lambda_{\text{op}}$, matching or exceeding the worst-case PL-$f$ in risk-averse settings ($\lambda_{\text{op}} > 0.6$) and surpassing both average-case and worst-case $\textbf{PL-}f$ as $\lambda_{\text{op}}$ increases. Notably, \textbf{IL} achieved the lowest maximum regret (see \cref{tab:regret}), underscoring the advantage of imprecise learning in handling generalization uncertainty.


\textbf{Comparing IL with PL-$\bar{h}$.} In our second experiment using synthetic data, we evaluate \textbf{IL}'s augmented hypothesis trained using imprecise risk optimisation (IRO) against precise learners (PL-$\bar{h}$) employing various optimization strategies influenced by their subjective beliefs about $\lambda_{\text{op}}$. Results in \cref{fig:beta-ablation-main-paper} indicate \textbf{IL}'s performance is close to the ideal baseline across most $\lambda_{\text{op}}$ values, except at higher risk levels where INF-TASK and $\textbf{PL-}\bar{h}$ trained under Beta(5,1) excel. This outcome aligns with expectations, as INF-TASK uniformly aggregates objectives, favoring higher-risk scenarios, similar to Beta(5,1)'s weighting towards higher $\lambda$. Despite this, \textbf{IL} outperforms these methods across other $\lambda$ and achieves the lowest maximum regret (see \cref{tab:regret}), demonstrating the efficacy of the proposed method.

\textbf{Comparing DG algorithms on CMNIST.} In \Cref{table:cmnist}, we compare \textbf{IL} to other DG methods on three representative domains from minority and majority subgroups (see \Cref{fig:exp-setup}). The domains $e\in\{0.0,1.0\}$ demonstrate opposite mechanisms, i.e., in domain $e=0.0$, the color red is fully predictive of the negative class, whereas for $e=1.0$, it is fully predictive of the positive class. In domain $e=0.5$, color is uncorrelated with the target. We can see that \textbf{IL} can learn relevant features in context with appropriate $\lambda$ and generalises in all scenarios. By setting $\lambda=0$, the model operator can be less risk averse and generalise better to domains from the majority subgroup, as noted in the performance of \textbf{IL} for $e=0.0$. With $\lambda\rightarrow 1$, the model operator can be risk averse and generalise better to the minority subgroup and is also reflected in the performance of \textbf{IL} for $e\in\{0.5,1.0\}$. Furthermore, with $\lambda\rightarrow 1$, it performs similarly to the invariant learners. We discuss the results on all test domains in \Cref{table:cmnist_regret} in \Cref{appendix:cmnist}.

\textbf{Real-world experiment.} \Cref{fig:bike-rentals} demonstrates similar comparisons between \textbf{IL} and various $\textbf{PL-}f$ and $\textbf{PL-}\bar{h}$ models as in previous experiments. Notably, \textbf{IL} surpassed the ideal scenario at higher risk levels. This can be attributed to the fact that CVAR as an objective discards data from lower-risk environments (see \cref{sec:cvar}), thus the optimisation has lower statistical efficiency as risk level increases. Augmented hypothesis mitigates this downside because it is smooth in the $\lambda$ parameter by design, thus can ``borrow'' information from nearby risk regions. At last, \textbf{IL} consistently achieved the lowest maximum regret as shown in \cref{tab:regret}.

\begin{table}[t!]
\centering
\caption{Accuracy and maximal regret of different domain generalisation algorithms on the CMNIST test environments from $\PP(Y=1\,|\,\text{color}=\text{red})=e$ with $e\in\{0.0, 0.5,1.0\}$, respectively. The hypothetical best invariant and Bayes classifier are listed in \textbf{bold}. Domain-wise best acc \& regret are highlighted in \textcolor{applegreen}{\textbf{green}}. Bayes classifier is defined w.r.t. the IID learner trained for a particular environment}
\resizebox{\columnwidth}{!}{
\begin{tabular}{llllll}
\toprule
{\bfseries Objective} & {\bfseries Algorithm} & $e=0.0$ & $e=0.5$ &$e=1.0$ & {\bfseries Regret} \\
\midrule
Average & ERM  & \textcolor{applegreen}{\textbf{96.1}} & 59.2 & 28.3   & 72.7\\
\cline{2-5} 
\multirow{2}{*}{Worst} & GrpDRO & 54.1 & 64.5  &\textcolor{applegreen}{\textbf{75.5}} & 46.9\\
& SD & 52.1 & 63.7 &73.3 & 47.9\\
\cline{2-5} 
\multirow{5}{*}{Invariance} 
&IGA & 71.8 & 65.2 &50.3 & 49.7\\
& IRM  & 72.0 & 69.7 & 67.7 & 32.3\\
&VREx & 72.7 & 69.5 &68.5& 31.5 \\
&EQRM & 67.8 & 69.1 &72.1 & 32.2\\
\cline{3-5}
& Oracle & \multicolumn{3}{c}{\textbf{73.5}} & \textbf{27.9}\\
\cline{2-5} 
\textbf{PL}-$\bar{h}$& Inf-Task & 96.0 & 63.1& 68.3 & 31.7 \\
\cline{2-5} 
\textbf{IL} (Ours) & IRO & 95.8& 69.5 & 70.3 & \textcolor{applegreen}{\textbf{29.7}} \\
\cline{2-5} 
Bayes & ERM (IID) & \textbf{100.0}& \textbf{75.0} & \textbf{100.0} \\
\bottomrule
\end{tabular}}
\label{table:cmnist}
\end{table}

\section{Conclusion}
\label{sec:conclusion}

In out-of-distribution (OOD) generalisation, a clear institutional separation between machine learners and model operators creates generalisation uncertainty that prevents consensus on a specific generalisation approach during training. 
To overcome this, we presented imprecise domain generalisation. 
Our approach incorporates imprecise risk optimisation, allowing learners to maintain imprecision during training, coupled with a model framework that lets operators specify their generalisation strategy at deployment. Both theoretical analysis and experimental evaluations demonstrate the effectiveness of our proposed framework. 

Our work faces two main limitations. First, it assumes that model operators are aware of their level of risk aversion. In practice, they may however struggle to precisely articulate their preferences. Consequently, this necessitates preference elicitation at test time, which may result in a probability distribution over $\lambda$ rather than a single value. Second, imprecise learning is more computationally intensive compared to precise counterparts as it involves optimising for a continuum of objectives. In our future work, we aim to broaden the scope of imprecise learning by implementing methods to elicit user preferences more effectively, improving computational efficiency, and exploring alternative aggregation functions. This approach would empower operators to weigh various criteria such as fairness, privacy, and algorithmic performance effectively.

\section*{Impact Statement}
This paper presents work whose goal is to advance the field of 
Machine Learning. There are many potential societal consequences 
of our work, none which we feel must be specifically highlighted here.
\section*{Acknowledgement}
We thank Kevin Murphy, Chris Holmes, Uri Shalit, Emtiyaz Khan, Hugo Monz\'on, Shai Ben-David, and Amartya Sanyal for fruitful discussion, and anonymous reviewers for their insightful feedback. We are indebted to Simon F\"oll for his contribution in conducting an initial set of experiments. 


\bibliographystyle{abbrvnat}
\bibliography{refs,ref_alan,ref_krik,ref_krik_ext}

\begin{thebibliography}{97}
\providecommand{\natexlab}[1]{#1}
\providecommand{\url}[1]{\texttt{#1}}
\expandafter\ifx\csname urlstyle\endcsname\relax
  \providecommand{\doi}[1]{doi: #1}\else
  \providecommand{\doi}{doi: \begingroup \urlstyle{rm}\Url}\fi

\bibitem[Abellan and Masegosa(2010)]{abellan2010ensemble}
J.~Abellan and A.~R. Masegosa.
\newblock An ensemble method using credal decision trees.
\newblock \emph{European journal of operational research}, 205\penalty0
  (1):\penalty0 218--226, 2010.

\bibitem[Arjovski(2021)]{arjovski2021}
M.~Arjovski.
\newblock \emph{Out of {{Distribution Generalization}} in {{Machine
  Learning}}}.
\newblock PhD thesis, New York University, 2021.

\bibitem[Arjovsky(2019)]{Arjovsky19:OOD}
M.~Arjovsky.
\newblock \emph{Out of Distribution Generalization in Machine Learning}.
\newblock PhD thesis, Courant Institute of Mathematical Sciences, New York
  University, 2019.

\bibitem[Arjovsky et~al.(2019)Arjovsky, Bottou, Gulrajani, and
  Lopez-Paz]{Arjovsky19:IRM}
M.~Arjovsky, L.~Bottou, I.~Gulrajani, and D.~Lopez-Paz.
\newblock Invariant risk minimization.
\newblock \emph{ArXiv}, abs/1907.02893, 2019.

\bibitem[Artzner et~al.(1999)Artzner, Delbaen, Eber, and Heath]{Artzner99:CRM}
P.~Artzner, F.~Delbaen, J.-M. Eber, and D.~Heath.
\newblock Coherent measures of risk.
\newblock \emph{Mathematical Finance}, 9\penalty0 (3):\penalty0 203--228, 1999.

\bibitem[Beede et~al.(2020)Beede, Baylor, Hersch, Iurchenko, Wilcox,
  Ruamviboonsuk, and Vardoulakis]{Beede20:AI}
E.~Beede, E.~Baylor, F.~Hersch, A.~Iurchenko, L.~Wilcox, P.~Ruamviboonsuk, and
  L.~M. Vardoulakis.
\newblock A human-centered evaluation of a deep learning system deployed in
  clinics for the detection of diabetic retinopathy.
\newblock In \emph{Proceedings of the 2020 CHI Conference on Human Factors in
  Computing Systems}, pages 1–--12, New York, NY, USA, 2020. Association for
  Computing Machinery.

\bibitem[Beery et~al.(2018)Beery, Van~Horn, and Perona]{Beery18:WildLife}
S.~Beery, G.~Van~Horn, and P.~Perona.
\newblock Recognition in terra incognita.
\newblock In \emph{Proceedings of the European Conference on Computer Vision
  (ECCV)}, September 2018.

\bibitem[Beery et~al.(2020)Beery, Cole, and Gjoka]{Beery20:iWildCam}
S.~Beery, E.~Cole, and A.~Gjoka.
\newblock The {iWildCam} 2020 competition dataset.
\newblock \emph{CoRR}, 2020.
\newblock URL \url{https://arxiv.org/abs/2004.10340}.

\bibitem[Ben-Tal et~al.(2009)Ben-Tal, Ghaoui, and Nemirovski]{Ben-Tal09:RO}
A.~Ben-Tal, L.~E. Ghaoui, and A.~Nemirovski.
\newblock \emph{Robust Optimization}, volume~28.
\newblock Princeton University Press, 2009.

\bibitem[Ben-Tal et~al.(2010)Ben-Tal, Bertsimas, and
  Brown]{Ben-Tal10:Soft-Robust}
A.~Ben-Tal, D.~Bertsimas, and D.~B. Brown.
\newblock A soft robust model for optimization under ambiguity.
\newblock \emph{Operations Research}, 58:\penalty0 1220--1234, 2010.

\bibitem[Blanchard et~al.(2011)Blanchard, Lee, and
  Scott]{Blanchard11:Generalize}
G.~Blanchard, G.~Lee, and C.~Scott.
\newblock Generalizing from several related classification tasks to a new
  unlabeled sample.
\newblock In \emph{Advances in Neural Information Processing Systems (NIPS)},
  pages 2178--2186. 2011.

\bibitem[Blanchard et~al.(2021)Blanchard, Deshmukh, Dogan, Lee, and
  Scott]{Blanchard21:DG-MTL}
G.~Blanchard, A.~A. Deshmukh, U.~Dogan, G.~Lee, and C.~Scott.
\newblock Domain generalization by marginal transfer learning.
\newblock \emph{Journal of Machine Learning Research}, 22\penalty0
  (2):\penalty0 1--55, 2021.

\bibitem[Brault et~al.(2019)Brault, Lambert, Szabo, Sangnier, and d'Alche
  Buc]{Brault19:Infinite-Task}
R.~Brault, A.~Lambert, Z.~Szabo, M.~Sangnier, and F.~d'Alche Buc.
\newblock Infinite task learning in rkhss.
\newblock In \emph{Proceedings of the Twenty-Second International Conference on
  Artificial Intelligence and Statistics}, volume~89, pages 1294--1302. PMLR,
  2019.

\bibitem[Caprio et~al.(2023)Caprio, Dutta, Jang, Lin, Ivanov, Sokolsky, and
  Lee]{caprio_imprecise_2023}
M.~Caprio, S.~Dutta, K.~J. Jang, V.~Lin, R.~Ivanov, O.~Sokolsky, and I.~Lee.
\newblock Imprecise {Bayesian} {Neural} {Networks}, May 2023.
\newblock URL \url{http://arxiv.org/abs/2302.09656}.
\newblock arXiv:2302.09656 [cs, stat].

\bibitem[Caprio et~al.(2024)Caprio, Sultana, Elia, and Cuzzolin]{Caprio24:CLT}
M.~Caprio, M.~Sultana, E.~Elia, and F.~Cuzzolin.
\newblock Credal learning theory, 2024.

\bibitem[Chau et~al.(2021{\natexlab{a}})Chau, Bouabid, and
  Sejdinovic]{chau_deconditional_2021}
S.~L. Chau, S.~Bouabid, and D.~Sejdinovic.
\newblock Deconditional {Downscaling} with {Gaussian} {Processes}.
\newblock In \emph{Advances in {Neural} {Information} {Processing} {Systems}},
  volume~34, pages 17813--17825. Curran Associates, Inc., 2021{\natexlab{a}}.

\bibitem[Chau et~al.(2021{\natexlab{b}})Chau, Ton, González, Teh, and
  Sejdinovic]{chau_bayesimp_2021}
S.~L. Chau, J.-F. Ton, J.~González, Y.~Teh, and D.~Sejdinovic.
\newblock {BayesIMP}: {Uncertainty} {Quantification} for {Causal} {Data}
  {Fusion}.
\newblock In \emph{Advances in {Neural} {Information} {Processing} {Systems}},
  volume~34, pages 3466--3477. Curran Associates, Inc., 2021{\natexlab{b}}.

\bibitem[Chau et~al.(2022{\natexlab{a}})Chau, Cucuringu, and
  Sejdinovic]{chau2022spectral_ecml}
S.~L. Chau, M.~Cucuringu, and D.~Sejdinovic.
\newblock Spectral ranking with covariates.
\newblock In \emph{Joint European Conference on Machine Learning and Knowledge
  Discovery in Databases}, pages 70--86. Springer, 2022{\natexlab{a}}.

\bibitem[Chau et~al.(2022{\natexlab{b}})Chau, Gonzalez, and
  Sejdinovic]{chau_learning_2022}
S.~L. Chau, J.~Gonzalez, and D.~Sejdinovic.
\newblock Learning {Inconsistent} {Preferences} with {Gaussian} {Processes}.
\newblock In \emph{Proceedings of {The} 25th {International} {Conference} on
  {Artificial} {Intelligence} and {Statistics}}, pages 2266--2281. PMLR, May
  2022{\natexlab{b}}.
\newblock ISSN: 2640-3498.

\bibitem[Chen et~al.(2023{\natexlab{a}})Chen, Zhang, Song, Shan, and
  Liu]{Chen23:Improved-TTT}
L.~Chen, Y.~Zhang, Y.~Song, Y.~Shan, and L.~Liu.
\newblock Improved test-time adaptation for domain generalization.
\newblock In \emph{Proceedings of the IEEE/CVF Conference on Computer Vision
  and Pattern Recognition (CVPR)}, pages 24172--24182, June 2023{\natexlab{a}}.

\bibitem[Chen et~al.(2023{\natexlab{b}})Chen, Zhou, Bian, Xie, Wu, Zhang,
  KAILI, Yang, Zhao, Han, and Cheng]{Chen23:Pareto-IRM}
Y.~Chen, K.~Zhou, Y.~Bian, B.~Xie, B.~Wu, Y.~Zhang, M.~KAILI, H.~Yang, P.~Zhao,
  B.~Han, and J.~Cheng.
\newblock Pareto invariant risk minimization: Towards mitigating the
  optimization dilemma in out-of-distribution generalization.
\newblock In \emph{The Eleventh International Conference on Learning
  Representations}, 2023{\natexlab{b}}.

\bibitem[Cohen and Welling(2016)]{Cohen16:Group}
T.~Cohen and M.~Welling.
\newblock Group equivariant convolutional networks.
\newblock In \emph{International conference on machine learning}, pages
  2990--2999. PMLR, 2016.

\bibitem[Cozman(2000)]{cozman2000credal}
F.~G. Cozman.
\newblock Credal networks.
\newblock \emph{Artificial intelligence}, 120\penalty0 (2):\penalty0 199--233,
  2000.

\bibitem[Cucker and Smale(2002)]{Cucker02:SLT}
F.~Cucker and S.~Smale.
\newblock On the mathematical foundations of learning.
\newblock \emph{Bulletin of the American Mathematical Society}, 39\penalty0
  (1):\penalty0 1--49, 2002.

\bibitem[D\'esid\'eri(2012)]{MGDA12}
J.-A. D\'esid\'eri.
\newblock Multiple-gradient descent algorithm (mgda) for multiobjective
  optimization.
\newblock \emph{Comptes Rendus Mathematique}, 350\penalty0 (5):\penalty0
  313--318, 2012.

\bibitem[Dosovitskiy and Djolonga(2020)]{Dosovitskiy20:YOTO}
A.~Dosovitskiy and J.~Djolonga.
\newblock You only train once: Loss-conditional training of deep networks.
\newblock In \emph{International Conference on Learning Representations}, 2020.

\bibitem[Eastwood et~al.(2022{\natexlab{a}})Eastwood, Robey, Singh, von
  K\"{u}gelgen, Hassani, Pappas, and Sch\"{o}lkopf]{Eastwood22:QRM}
C.~Eastwood, A.~Robey, S.~Singh, J.~von K\"{u}gelgen, H.~Hassani, G.~J. Pappas,
  and B.~Sch\"{o}lkopf.
\newblock Probable domain generalization via quantile risk minimization.
\newblock In \emph{Advances in Neural Information Processing Systems},
  volume~35, pages 17340--17358. Curran Associates, Inc., 2022{\natexlab{a}}.

\bibitem[Eastwood et~al.(2022{\natexlab{b}})Eastwood, Robey, Singh, {von
  K{\"u}gelgen}, Hassani, Pappas, and Sch{\"o}lkopf]{eastwood2022}
C.~Eastwood, A.~Robey, S.~Singh, J.~{von K{\"u}gelgen}, H.~Hassani, G.~J.
  Pappas, and B.~Sch{\"o}lkopf.
\newblock Probable {{Domain Generalization}} via {{Quantile Risk
  Minimization}}.
\newblock In \emph{Adv. {{Neural Inf}}. {{Process}}. {{Syst}}.}, volume~35.
  {Curran Associates, Inc.}, Oct. 2022{\natexlab{b}}.

\bibitem[Eastwood et~al.(2023)Eastwood, Singh, Nicolicioiu, Vlastelica, von
  K\"ugelgen, and Sch\"olkopf]{Eastwood23:Spuriosity}
C.~Eastwood, S.~Singh, A.~L. Nicolicioiu, M.~Vlastelica, J.~von K\"ugelgen, and
  B.~Sch\"olkopf.
\newblock Spuriosity didn't kill the classifier: Using invariant predictions to
  harness spurious features, 2023.

\bibitem[Fanaee-T and Gama(2014)]{fanaee2014event}
H.~Fanaee-T and J.~Gama.
\newblock Event labeling combining ensemble detectors and background knowledge.
\newblock \emph{Progress in Artificial Intelligence}, 2:\penalty0 113--127,
  2014.

\bibitem[Föll et~al.(2023)Föll, Dubatovka, Ernst, Chau, Maritsch, Okanovic,
  Thäter, Buhmann, Wortmann, and Muandet]{foll_gated_2023}
S.~Föll, A.~Dubatovka, E.~Ernst, S.~L. Chau, M.~Maritsch, P.~Okanovic,
  G.~Thäter, J.~M. Buhmann, F.~Wortmann, and K.~Muandet.
\newblock Gated {Domain} {Units} for {Multi}-source {Domain} {Generalization},
  May 2023.
\newblock URL \url{http://arxiv.org/abs/2206.12444}.
\newblock arXiv:2206.12444 [cs].

\bibitem[Gagnon-Audet et~al.(2022)Gagnon-Audet, Ahuja, Bayazi, Dumas, and
  Rish]{Gagnon-Audet22:WOODSBF}
J.-C. Gagnon-Audet, K.~Ahuja, M.~J.~D. Bayazi, G.~Dumas, and I.~Rish.
\newblock Woods: Benchmarks for out-of-distribution generalization in time
  series tasks.
\newblock \emph{ArXiv}, abs/2203.09978, 2022.

\bibitem[Ghifary et~al.(2015)Ghifary, Kleijn, Zhang, and
  Balduzzi]{Ghifary15:DG-MTA}
M.~Ghifary, W.~Kleijn, M.~Zhang, and D.~Balduzzi.
\newblock Domain generalization for object recognition with multi-task
  autoencoders.
\newblock In \emph{2015 IEEE International Conference on Computer Vision
  (ICCV)}, pages 2551--2559, Los Alamitos, CA, USA, 2015. {IEEE} Computer
  Society.

\bibitem[Giron and Rios(1980)]{giron_quasi-bayesian_1980}
F.~J. Giron and S.~Rios.
\newblock Quasi-{Bayesian} {Behaviour}: {A} more realistic approach to decision
  making?
\newblock \emph{Trabajos de Estadistica Y de Investigacion Operativa},
  31\penalty0 (1):\penalty0 17--38, Feb. 1980.
\newblock ISSN 0041-0241.
\newblock \doi{10.1007/BF02888345}.
\newblock URL \url{http://link.springer.com/10.1007/BF02888345}.

\bibitem[Goodfellow et~al.(2014)Goodfellow, Shlens, and
  Szegedy]{Goodfellow14:Explaining}
I.~J. Goodfellow, J.~Shlens, and C.~Szegedy.
\newblock Explaining and harnessing adversarial examples.
\newblock \emph{arXiv preprint arXiv:1412.6572}, 2014.

\bibitem[Goodfellow et~al.(2016)Goodfellow, Bengio, and
  Courville]{Goodfellow16:DL}
I.~J. Goodfellow, Y.~Bengio, and A.~Courville.
\newblock \emph{Deep Learning}.
\newblock MIT Press, Cambridge, MA, USA, 2016.
\newblock \url{http://www.deeplearningbook.org}.

\bibitem[Grab and Savage(1954)]{grab1954tables}
E.~L. Grab and I.~R. Savage.
\newblock Tables of the expected value of 1/x for positive bernoulli and
  poisson variables.
\newblock \emph{Journal of the American Statistical Association}, 49\penalty0
  (265):\penalty0 169--177, 1954.

\bibitem[Ha et~al.(2016)Ha, Dai, and Le]{ha2016hypernetworks}
D.~Ha, A.~M. Dai, and Q.~V. Le.
\newblock Hypernetworks.
\newblock In \emph{International Conference on Learning Representations}, 2016.

\bibitem[Hardt et~al.(2016)Hardt, Megiddo, Papadimitriou, and
  Wootters]{Hardt16:Strategic}
M.~Hardt, N.~Megiddo, C.~Papadimitriou, and M.~Wootters.
\newblock Strategic classification.
\newblock In \emph{Proceedings of the 2016 ACM Conference on Innovations in
  Theoretical Computer Science}, pages 111--122, New York, NY, USA, 2016.
  Association for Computing Machinery.

\bibitem[He and Garcia(2009)]{he2009learning}
H.~He and E.~A. Garcia.
\newblock Learning from imbalanced data.
\newblock \emph{IEEE Transactions on knowledge and data engineering},
  21\penalty0 (9):\penalty0 1263--1284, 2009.

\bibitem[He et~al.(2021)He, Zhao, and Chu]{he2021automl}
X.~He, K.~Zhao, and X.~Chu.
\newblock Automl: A survey of the state-of-the-art.
\newblock \emph{Knowledge-Based Systems}, 212:\penalty0 106622, 2021.

\bibitem[Heckman(2001)]{Heckman01:Heterogeneity}
J.~J. Heckman.
\newblock Micro data, heterogeneity, and the evaluation of public policy: Nobel
  lecture.
\newblock \emph{Journal of Political Economy}, 109\penalty0 (4):\penalty0
  673--748, 2001.

\bibitem[Heinze-Deml and Meinshausen(2021)]{Heinze-Deml21:CVP}
C.~Heinze-Deml and N.~Meinshausen.
\newblock Conditional variance penalties and domain shift robustness.
\newblock \emph{Machine Learning}, 110\penalty0 (2):\penalty0 303--348, 2021.

\bibitem[Heinze-Deml et~al.(2018)Heinze-Deml, Peters, and
  Meinshausen]{HeinzeDeml18:ICP-Nonlinear}
C.~Heinze-Deml, J.~Peters, and N.~Meinshausen.
\newblock Invariant causal prediction for nonlinear models.
\newblock \emph{Journal of Causal Inference}, 6\penalty0 (2):\penalty0
  20170016, 2018.

\bibitem[Hofmann et~al.(2008)Hofmann, Sch\"{o}lkopf, and Smola]{HofSchSmo08}
T.~Hofmann, B.~Sch\"{o}lkopf, and A.~J. Smola.
\newblock Kernel methods in machine learning.
\newblock \emph{Annals of Statistics}, 36\penalty0 (3):\penalty0 1171--1220,
  2008.

\bibitem[Kamath et~al.(2021)Kamath, Tangella, Sutherland, and
  Srebro]{Kamath21:IRM-I}
P.~Kamath, A.~Tangella, D.~Sutherland, and N.~Srebro.
\newblock Does invariant risk minimization capture invariance?
\newblock In \emph{Proceedings of The 24th International Conference on
  Artificial Intelligence and Statistics}, volume 130, pages 4069--4077. PMLR,
  2021.

\bibitem[Koh et~al.(2021)Koh, Sagawa, Marklund, Xie, Zhang, Balsubramani, Hu,
  Yasunaga, Phillips, Gao, Lee, David, Stavness, Guo, Earnshaw, Haque, Beery,
  Leskovec, Kundaje, Pierson, Levine, Finn, and Liang]{Koh21:WILDS}
P.~W. Koh, S.~Sagawa, H.~Marklund, S.~M. Xie, M.~Zhang, A.~Balsubramani, W.~Hu,
  M.~Yasunaga, R.~L. Phillips, I.~Gao, T.~Lee, E.~David, I.~Stavness, W.~Guo,
  B.~Earnshaw, I.~Haque, S.~M. Beery, J.~Leskovec, A.~Kundaje, E.~Pierson,
  S.~Levine, C.~Finn, and P.~Liang.
\newblock {WILDS}: A benchmark of in-the-wild distribution shifts.
\newblock In \emph{Proceedings of the 38th International Conference on Machine
  Learning}, volume 139 of \emph{Proceedings of Machine Learning Research},
  pages 5637--5664. PMLR, 18--24 Jul 2021.

\bibitem[Koyama and Yamaguchi(2020)]{koyama2020out}
M.~Koyama and S.~Yamaguchi.
\newblock Out-of-distribution generalization with maximal invariant predictor.
\newblock 2020.

\bibitem[Krizhevsky et~al.(2012)Krizhevsky, Sutskever, and
  Hinton]{Krizhevsky12:ImageNet}
A.~Krizhevsky, I.~Sutskever, and G.~E. Hinton.
\newblock {ImageNet} classification with deep convolutional neural networks.
\newblock In \emph{Advances in Neural Information Processing Systems},
  volume~25, 2012.

\bibitem[Krueger et~al.(2021)Krueger, Caballero, Jacobsen, Zhang, Binas, Zhang,
  Priol, and Courville]{krueger_out--distribution_2021}
D.~Krueger, E.~Caballero, J.-H. Jacobsen, A.~Zhang, J.~Binas, D.~Zhang, R.~L.
  Priol, and A.~Courville.
\newblock Out-of-{Distribution} {Generalization} via {Risk} {Extrapolation}
  ({REx}), Feb. 2021.
\newblock URL \url{http://arxiv.org/abs/2003.00688}.
\newblock arXiv:2003.00688 [cs, stat].

\bibitem[LeCun et~al.(2015)LeCun, Bengio, and Hinton]{Lecun15:DL}
Y.~LeCun, Y.~Bengio, and G.~Hinton.
\newblock Deep learning.
\newblock \emph{Nature}, 521\penalty0 (7553):\penalty0 436--444, 2015.

\bibitem[Li et~al.(2023)Li, Beirami, Sanjabi, and Smith]{Li23:TERM}
T.~Li, A.~Beirami, M.~Sanjabi, and V.~Smith.
\newblock On tilted losses in machine learning: Theory and applications.
\newblock \emph{Journal of Machine Learning Research}, 24\penalty0
  (142):\penalty0 1--79, 2023.

\bibitem[Lin et~al.(2019)Lin, Zhen, Li, Zhang, and Kwong]{Lin19:Pareto-MTL}
X.~Lin, H.-L. Zhen, Z.~Li, Q.-F. Zhang, and S.~Kwong.
\newblock Pareto multi-task learning.
\newblock In \emph{Advances in Neural Information Processing Systems},
  volume~32. Curran Associates, Inc., 2019.

\bibitem[Liu et~al.(2021{\natexlab{a}})Liu, Hu, Cui, Li, and Shen]{Liu21:HRM}
J.~Liu, Z.~Hu, P.~Cui, B.~Li, and Z.~Shen.
\newblock Heterogeneous risk minimization.
\newblock In \emph{Proceedings of the 38th International Conference on Machine
  Learning}, volume 139, pages 6804--6814. PMLR, 2021{\natexlab{a}}.

\bibitem[Liu et~al.(2021{\natexlab{b}})Liu, Kothari, van Delft, Bellot-Gurlet,
  Mordan, and Alahi]{Liu21:TTT}
Y.~Liu, P.~Kothari, B.~van Delft, B.~Bellot-Gurlet, T.~Mordan, and A.~Alahi.
\newblock {TTT}++: When does self-supervised test-time training fail or thrive?
\newblock In \emph{Advances in Neural Information Processing Systems},
  volume~34, pages 21808--21820. Curran Associates, Inc., 2021{\natexlab{b}}.

\bibitem[Ma et~al.(2020)Ma, Du, and Matusik]{Ma20:Efficient-Pareto-MTL}
P.~Ma, T.~Du, and W.~Matusik.
\newblock Efficient continuous pareto exploration in multi-task learning.
\newblock In \emph{International Conference on Machine Learning}, pages
  6522--6531. PMLR, 2020.

\bibitem[Mahajan et~al.(2021)Mahajan, Tople, and Sharma]{Mahajan21:DG-CM}
D.~Mahajan, S.~Tople, and A.~Sharma.
\newblock Domain generalization using causal matching.
\newblock In \emph{Proceedings of the 38th International Conference on Machine
  Learning}, volume 139, pages 7313--7324. PMLR, 2021.

\bibitem[Mansour et~al.(2012)Mansour, Mohri, and
  Rostamizadeh]{mansour_multiple_2012}
Y.~Mansour, M.~Mohri, and A.~Rostamizadeh.
\newblock Multiple {Source} {Adaptation} and the {Renyi} {Divergence}, May
  2012.
\newblock URL \url{http://arxiv.org/abs/1205.2628}.
\newblock arXiv:1205.2628 [cs, stat].

\bibitem[Muandet et~al.(2013)Muandet, Balduzzi, and Schölkopf]{Muandet13:DG}
K.~Muandet, D.~Balduzzi, and B.~Schölkopf.
\newblock Domain generalization via invariant feature representation.
\newblock In \emph{Proceedings of the 30th International Conference on Machine
  Learning}, volume~28 of \emph{Proceedings of Machine Learning Research},
  pages 10--18, 2013.

\bibitem[Nastl and Hardt(2024)]{Nastl24:Causal-Predictors}
V.~Y. Nastl and M.~Hardt.
\newblock Predictors from causal features do not generalize better to new
  domains, 2024.

\bibitem[Pareto(1897)]{Pareto1897:Pareto}
V.~Pareto.
\newblock The new theories of economics.
\newblock \emph{Journal of Political Economy}, 5, 1897.

\bibitem[Perdomo et~al.(2020)Perdomo, Zrnic, Mendler-D{\"u}nner, and
  Hardt]{Perdomo20:Performative}
J.~Perdomo, T.~Zrnic, C.~Mendler-D{\"u}nner, and M.~Hardt.
\newblock Performative prediction.
\newblock In \emph{Proceedings of the 37th International Conference on Machine
  Learning}, volume 119, pages 7599--7609. PMLR, 2020.

\bibitem[Perez et~al.(2018)Perez, Strub, De~Vries, Dumoulin, and
  Courville]{perez2018film}
E.~Perez, F.~Strub, H.~De~Vries, V.~Dumoulin, and A.~Courville.
\newblock Film: Visual reasoning with a general conditioning layer.
\newblock In \emph{Proceedings of the AAAI conference on artificial
  intelligence}, volume~32, 2018.

\bibitem[Peters et~al.(2016)Peters, BÃŒhlmann, and Meinshausen]{Peters16:ICP}
J.~Peters, P.~BÃŒhlmann, and N.~Meinshausen.
\newblock Causal inference by using invariant prediction: identification and
  confidence intervals.
\newblock \emph{Journal of the Royal Statistical Society: Series B (Statistical
  Methodology)}, 78\penalty0 (5):\penalty0 947--1012, 2016.

\bibitem[Pezeshki et~al.(2021)Pezeshki, Kaba, Bengio, Courville, Precup, and
  Lajoie]{pezeshki2021gradient}
M.~Pezeshki, O.~Kaba, Y.~Bengio, A.~C. Courville, D.~Precup, and G.~Lajoie.
\newblock Gradient starvation: A learning proclivity in neural networks.
\newblock \emph{Advances in Neural Information Processing Systems},
  34:\penalty0 1256--1272, 2021.

\bibitem[Quionero-Candela et~al.(2009)Quionero-Candela, Sugiyama, Schwaighofer,
  and Lawrence]{Candela09:DS}
J.~Quionero-Candela, M.~Sugiyama, A.~Schwaighofer, and N.~D. Lawrence.
\newblock \emph{Dataset Shift in Machine Learning}.
\newblock The MIT Press, 2009.

\bibitem[Rahimian and Mehrotra(2022)]{rahimian_distributionally_2022}
H.~Rahimian and S.~Mehrotra.
\newblock Distributionally {Robust} {Optimization}: {A} {Review}.
\newblock \emph{Open Journal of Mathematical Optimization}, 3:\penalty0 1--85,
  July 2022.
\newblock ISSN 2777-5860.
\newblock \doi{10.5802/ojmo.15}.
\newblock URL \url{http://arxiv.org/abs/1908.05659}.
\newblock arXiv:1908.05659 [cs, math, stat].

\bibitem[Ratner et~al.(2019)Ratner, Alistarh, Alonso, Andersen, Bailis, Bird,
  Carlini, Catanzaro, Chung, Dally, Dean, Dhillon, Dimakis, Dubey, Elkan,
  Fursin, Ganger, Getoor, Gibbons, Gibson, Gonzalez, Gottschlich, Han,
  Hazelwood, Huang, Jaggi, Jamieson, Jordan, Joshi, Khalaf, Knight,
  Kone{\v{c}}n{\'y}, Kraska, Kumar, Kyrillidis, Li, Madden, McMahan, Meijer,
  Mitliagkas, Monga, Murray, Papailiopoulos, Pekhimenko, Rekatsinas,
  Rostamizadeh, R{\'{e}}, Sa, Sedghi, Sen, Smith, Smola, Song, Sparks, Stoica,
  Sze, Udell, Vanschoren, Venkataraman, Vinayak, Weimer, Wilson, Xing, Zaharia,
  Zhang, and Talwalkar]{Ratner19:MLsys}
A.~Ratner, D.~Alistarh, G.~Alonso, D.~G. Andersen, P.~Bailis, S.~Bird,
  N.~Carlini, B.~Catanzaro, E.~S. Chung, B.~Dally, J.~Dean, I.~S. Dhillon,
  A.~G. Dimakis, P.~Dubey, C.~Elkan, G.~Fursin, G.~R. Ganger, L.~Getoor, P.~B.
  Gibbons, G.~A. Gibson, J.~E. Gonzalez, J.~Gottschlich, S.~Han, K.~M.
  Hazelwood, F.~Huang, M.~Jaggi, K.~G. Jamieson, M.~I. Jordan, G.~Joshi,
  R.~Khalaf, J.~Knight, J.~Kone{\v{c}}n{\'y}, T.~Kraska, A.~Kumar,
  A.~Kyrillidis, J.~Li, S.~Madden, H.~B. McMahan, E.~Meijer, I.~Mitliagkas,
  R.~Monga, D.~G. Murray, D.~S. Papailiopoulos, G.~Pekhimenko, T.~Rekatsinas,
  A.~Rostamizadeh, C.~R{\'{e}}, C.~D. Sa, H.~Sedghi, S.~Sen, V.~Smith,
  A.~Smola, D.~Song, E.~R. Sparks, I.~Stoica, V.~Sze, M.~Udell, J.~Vanschoren,
  S.~Venkataraman, R.~Vinayak, M.~Weimer, A.~G. Wilson, E.~P. Xing, M.~Zaharia,
  C.~Zhang, and A.~Talwalkar.
\newblock {SysML}: The new frontier of machine learning systems.
\newblock \emph{CoRR}, abs/1904.03257, 2019.
\newblock URL \url{http://arxiv.org/abs/1904.03257}.

\bibitem[Robey et~al.(2022)Robey, Chamon, Pappas, and Hassani]{Robey22:PRL}
A.~Robey, L.~Chamon, G.~J. Pappas, and H.~Hassani.
\newblock Probabilistically robust learning: Balancing average and worst-case
  performance.
\newblock In \emph{International Conference on Machine Learning}, pages
  18667--18686. PMLR, 2022.

\bibitem[Rockafellar and
  Uryasev(2002)]{rockafellarConditionalValueatriskGeneral2002}
R.~T. Rockafellar and S.~Uryasev.
\newblock Conditional value-at-risk for general loss distributions.
\newblock 2002.

\bibitem[Rojas-Carulla et~al.(2018)Rojas-Carulla, Sch\"{o}lkopf, Turner, and
  Peters]{Rojas-Carulla18:CausalTransfer}
M.~Rojas-Carulla, B.~Sch\"{o}lkopf, R.~Turner, and J.~Peters.
\newblock Invariant models for causal transfer learning.
\newblock \emph{Journal of Machine Learning Research}, 19\penalty0
  (1):\penalty0 1309--1342, 2018.

\bibitem[Rosenfeld et~al.(2021)Rosenfeld, Ravikumar, and
  Risteski]{Rosenfeld21:IRM-Risk}
E.~Rosenfeld, P.~K. Ravikumar, and A.~Risteski.
\newblock The risks of invariant risk minimization.
\newblock In \emph{International Conference on Learning Representations}, 2021.

\bibitem[Rothenh{\"a}usler et~al.(2021)Rothenh{\"a}usler, Meinshausen,
  B{\"u}hlmann, and Peters]{rothenhausler2021anchor}
D.~Rothenh{\"a}usler, N.~Meinshausen, P.~B{\"u}hlmann, and J.~Peters.
\newblock Anchor regression: Heterogeneous data meet causality.
\newblock \emph{Journal of the Royal Statistical Society Series B: Statistical
  Methodology}, 83\penalty0 (2):\penalty0 215--246, 2021.

\bibitem[Sagawa et~al.(2020)Sagawa, Koh, Hashimoto, and Liang]{Sagawa20:DRO}
S.~Sagawa, P.~W. Koh, T.~B. Hashimoto, and P.~Liang.
\newblock Distributionally robust neural networks.
\newblock In \emph{International Conference on Learning Representations}, 2020.
\newblock URL \url{https://openreview.net/forum?id=ryxGuJrFvS}.

\bibitem[Santurkar et~al.(2021)Santurkar, Tsipras, and
  Madry]{Santurkar21:Breeds}
S.~Santurkar, D.~Tsipras, and A.~Madry.
\newblock {\{}BREEDS{\}}: Benchmarks for subpopulation shift.
\newblock In \emph{International Conference on Learning Representations}, 2021.

\bibitem[Sener and Koltun(2018)]{Sener18:MTL-MOO}
O.~Sener and V.~Koltun.
\newblock Multi-task learning as multi-objective optimization.
\newblock In \emph{Proceedings of the 32nd International Conference on Neural
  Information Processing Systems}, pages 525--536, Red Hook, NY, USA, 2018.
  Curran Associates Inc.

\bibitem[Shafer(1992)]{shafer1992dempster}
G.~Shafer.
\newblock Dempster-shafer theory.
\newblock \emph{Encyclopedia of artificial intelligence}, 1:\penalty0 330--331,
  1992.

\bibitem[Shen et~al.(2021)Shen, Liu, He, Zhang, Xu, Yu, and
  Cui]{Shen21:OOD-Review}
Z.~Shen, J.~Liu, Y.~He, X.~Zhang, R.~Xu, H.~Yu, and P.~Cui.
\newblock Towards out-of-distribution generalization: {A} survey.
\newblock \emph{CoRR}, abs/2108.13624, 2021.

\bibitem[Subbaswamy et~al.(2019)Subbaswamy, Schulam, and
  Saria]{subbaswamy2019preventing}
A.~Subbaswamy, P.~Schulam, and S.~Saria.
\newblock Preventing failures due to dataset shift: Learning predictive models
  that transport.
\newblock In \emph{The 22nd International Conference on Artificial Intelligence
  and Statistics}, pages 3118--3127. PMLR, 2019.

\bibitem[Sun et~al.(2020)Sun, Wang, Liu, Miller, Efros, and
  Hardt]{Sun20:Test-Time}
Y.~Sun, X.~Wang, Z.~Liu, J.~Miller, A.~Efros, and M.~Hardt.
\newblock Test-time training with self-supervision for generalization under
  distribution shifts.
\newblock In \emph{Proceedings of the 37th International Conference on Machine
  Learning}, volume 119, pages 9229--9248. PMLR, 2020.

\bibitem[Szegedy et~al.(2013)Szegedy, Zaremba, Sutskever, Bruna, Erhan,
  Goodfellow, and Fergus]{Szegedy13:Properties}
C.~Szegedy, W.~Zaremba, I.~Sutskever, J.~Bruna, D.~Erhan, I.~Goodfellow, and
  R.~Fergus.
\newblock Intriguing properties of neural networks.
\newblock \emph{arXiv preprint arXiv:1312.6199}, 2013.

\bibitem[Utkin et~al.(2021)Utkin, Konstantinov, and
  Vishniakov]{utkin_imprecise_2021}
L.~V. Utkin, A.~V. Konstantinov, and K.~A. Vishniakov.
\newblock An {Imprecise} {SHAP} as a {Tool} for {Explaining} the {Class}
  {Probability} {Distributions} under {Limited} {Training} {Data}, June 2021.
\newblock URL \url{http://arxiv.org/abs/2106.09111}.
\newblock arXiv:2106.09111 [cs, stat].

\bibitem[Vapnik(1991)]{Vapnik91:ERM}
V.~Vapnik.
\newblock Principles of risk minimization for learning theory.
\newblock In \emph{Advances in Neural Information Processing Systems},
  volume~4. Morgan-Kaufmann, 1991.

\bibitem[Vapnik(1998)]{Vapnik98:SLT}
V.~Vapnik.
\newblock \emph{Statistical Learning Theory}.
\newblock Wiley India Pvt Ltd, 1998.

\bibitem[Vapnik(1995)]{Vapnik95:SLT}
V.~N. Vapnik.
\newblock \emph{The Nature of Statistical Learning Theory}.
\newblock Springer-Verlag, Berlin, Heidelberg, 1995.

\bibitem[Vaswani et~al.(2017)Vaswani, Shazeer, Parmar, Uszkoreit, Jones, Gomez,
  Kaiser, and Polosukhin]{Vaswani17:Attention}
A.~Vaswani, N.~Shazeer, N.~Parmar, J.~Uszkoreit, L.~Jones, A.~N. Gomez,
  {\L}.~Kaiser, and I.~Polosukhin.
\newblock Attention is all you need.
\newblock In \emph{Advances in Neural Information Processing Systems},
  volume~30, 2017.

\bibitem[Vo et~al.(2023)Vo, Aadil, Chau, and Muandet]{vo_causal_2023}
K.~Q.~H. Vo, M.~Aadil, S.~L. Chau, and K.~Muandet.
\newblock Causal {Strategic} {Learning} with {Competitive} {Selection}, Sept.
  2023.
\newblock arXiv:2308.16262 [cs].

\bibitem[Walley(1991)]{Walley91:SRIP}
P.~Walley.
\newblock \emph{Statistical Reasoning with Imprecise Probabilities}.
\newblock Chapman \& Hall, 1991.

\bibitem[Wang et~al.(2021{\natexlab{a}})Wang, Shelhamer, Liu, Olshausen, and
  Darrell]{Wang21:Tent}
D.~Wang, E.~Shelhamer, S.~Liu, B.~Olshausen, and T.~Darrell.
\newblock Tent: Fully test-time adaptation by entropy minimization.
\newblock In \emph{International Conference on Learning Representations},
  2021{\natexlab{a}}.

\bibitem[Wang et~al.(2021{\natexlab{b}})Wang, Lan, Liu, Ouyang, and
  Qin]{Wang21:DG-Review}
J.~Wang, C.~Lan, C.~Liu, Y.~Ouyang, and T.~Qin.
\newblock Generalizing to unseen domains: {A} survey on domain generalization.
\newblock \emph{CoRR}, abs/2103.03097, 2021{\natexlab{b}}.
\newblock URL \url{https://arxiv.org/abs/2103.03097}.

\bibitem[Williamson and Menon(2019)]{pmlr-v97-williamson19a}
R.~Williamson and A.~Menon.
\newblock Fairness risk measures.
\newblock In K.~Chaudhuri and R.~Salakhutdinov, editors, \emph{Proc. 36th
  {{Int}}. {{Conf}}. {{Mach}}. {{Learn}}.}, volume~97 of \emph{Proceedings of
  Machine Learning Research}, pages 6786--6797. {PMLR}, June 2019.

\bibitem[Wilson and Cook(2020)]{Winson20:UDA}
G.~Wilson and D.~J. Cook.
\newblock A survey of unsupervised deep domain adaptation.
\newblock \emph{ACM Trans. Intell. Syst. Technol.}, 11\penalty0 (5), 2020.

\bibitem[Yang et~al.(2023)Yang, Zhang, Katabi, and
  Ghassemi]{Yang23:Subpop-Shift}
Y.~Yang, H.~Zhang, D.~Katabi, and M.~Ghassemi.
\newblock Change is hard: A closer look at subpopulation shift.
\newblock In \emph{International Conference on Machine Learning}, 2023.

\bibitem[Zhang et~al.(2021)Zhang, Marklund, Dhawan, Gupta, Levine, and
  Finn]{Zhang21:ARM}
M.~Zhang, H.~Marklund, N.~Dhawan, A.~Gupta, S.~Levine, and C.~Finn.
\newblock Adaptive risk minimization: Learning to adapt to domain shift.
\newblock In \emph{Advances in Neural Information Processing Systems},
  volume~34, pages 23664--23678. Curran Associates, Inc., 2021.

\bibitem[Zhang and Golovin(2020)]{Zhang20:HyperMOO}
R.~Zhang and D.~Golovin.
\newblock Random hypervolume scalarizations for provable multi-objective black
  box optimization.
\newblock In \emph{Proceedings of the 37th International Conference on Machine
  Learning}, volume 119, pages 11096--11105. PMLR, 2020.

\bibitem[Zhao et~al.(2022)Zhao, Yue, Zhang, Li, Zhao, Wu, Krishna, Gonzalez,
  Sangiovanni-Vincentelli, Seshia, and Keutzer]{Zhao22:UDA}
S.~Zhao, X.~Yue, S.~Zhang, B.~Li, H.~Zhao, B.~Wu, R.~Krishna, J.~E. Gonzalez,
  A.~L. Sangiovanni-Vincentelli, S.~A. Seshia, and K.~Keutzer.
\newblock A review of single-source deep unsupervised visual domain adaptation.
\newblock \emph{{IEEE} Transactions on Neural Networks and Learning Systems},
  33\penalty0 (2):\penalty0 473--493, 2022.

\bibitem[Zhou et~al.(2023)Zhou, Liu, Qiao, Xiang, and Loy]{Zhou21:DG-Review}
K.~Zhou, Z.~Liu, Y.~Qiao, T.~Xiang, and C.~C. Loy.
\newblock Domain generalization: A survey.
\newblock \emph{{IEEE} Transactions on Pattern Analysis and Machine
  Intelligence}, 45\penalty0 (4):\penalty0 4396--4415, apr 2023.

\end{thebibliography}

\newpage
\appendix
\onecolumn

\appendix

\section{Proofs}

This section provides the detailed proofs of our main results presented in the paper.

\subsection{Proof of Proposition ~\ref{prop:uniquenessofh}}
\begin{proof}
\strut
\newline
($\Leftarrow$)

We can easily verify that if $g^*(\cdot, \lambda)$ is Bayes optimal for all $\lambda\in\Lambda$, then it is C-Pareto optimal.

\strut
\newline
($\Rightarrow$)

    Suppose $g^*\in\cH_\Lambda$ is C-Pareto optimal. We have
    \begin{align*}
        g^* \text{ C-Pareto optimal} & \Leftrightarrow \not\exists h\in\cH_\Lambda, h \triangleright g^*  \\
        & \Leftrightarrow \forall h\in\cH_\Lambda, \neg (h\triangleright g^*) \\
        & \Leftrightarrow \forall h\in\cH_\Lambda, 
            \big[\exists \lambda\in\Lambda, \rho_\lambda[\bm{\cR}](h(\cdot, \lambda)) > \rho_\lambda[\bm{\cR}](g^*(\cdot, \lambda))\big] \\
            & \qquad\qquad\enspace\, \lor \big[\forall\tilde\lambda\in\Lambda, \rho_{\tilde\lambda}[\bm{\cR}](h(\cdot, \tilde\lambda)) \geq \rho_{\tilde\lambda}[\bm{\cR}](g^*(\cdot, \tilde\lambda))\big].
    \end{align*}

    This implies in particular that
    \begin{equation*}
        \big[\exists \lambda\in\Lambda, \rho_\lambda[\bm{\cR}](h^*(\cdot, \lambda)) > \rho_\lambda[\bm{\cR}](g^*(\cdot, \lambda))\big] \lor \big[\forall\tilde\lambda\in\Lambda, \rho_{\tilde\lambda}[\bm{\cR}](h^*(\cdot, \tilde\lambda)) \geq \rho_{\tilde\lambda}[\bm{\cR}](g^*(\cdot, \tilde\lambda))\big].
    \end{equation*}
    Since $h^*(\cdot, \lambda)$ is Bayes optimal for all $\lambda\in\Lambda$, the first statement cannot be true. Therefore, the second statement must hold and we have
    \begin{align*}
    g^* \text{ C-Pareto optimal} & \Rightarrow \forall \tilde\lambda\in\Lambda, \rho_{\tilde\lambda}[\bm{\cR}](h^*(\cdot, \tilde\lambda)) \geq \rho_{\tilde\lambda}[\bm{\cR}](g^*(\cdot, \tilde\lambda)) \\
    & \Rightarrow \forall \tilde\lambda\in\Lambda, \rho_{\tilde\lambda}[\bm{\cR}](h^*(\cdot, \tilde\lambda)) = \rho_{\tilde\lambda}[\cR](g^*(\cdot, \tilde\lambda)) & (h^*(\cdot, \lambda) \text{ Bayes optimal}) \\
    & \Rightarrow \forall \tilde\lambda\in\Lambda, g^*(\cdot, \tilde\lambda) \text{ Bayes optimal } .
    \end{align*}
    This concludes the proof.
\end{proof}
\subsection{Proof of Proposition~\ref{imprecise-aggregation-result}}
\begin{proof}
    Since $h^*(\cdot, \lambda)$ is a Bayes optimal model for all $\lambda\in\Lambda$,  we have
    \begin{align*}
         \rho_\lambda[\bm{\cR}](g^*(\cdot, \lambda)) - \rho_{\lambda}[\bm{\cR}](h^*(\cdot, \lambda)) & \geq 0, \enspace\forall\lambda\in\Lambda \\
        \Rightarrow  \EE_{\lambda\sim Q}[\rho_\lambda[\bm{\cR}](g^*(\cdot, \lambda)) - \rho_{\lambda}[\bm{\cR}](h^*(\cdot, \lambda))] & \geq 0.
    \end{align*}

    But by definition of $g^*$ we also have
    \begin{align*}
        J_Q(g^*) & \leq J_Q(h^*) \Rightarrow \EE_{\lambda\sim Q}[\rho_\lambda[\bm{\cR}](g^*(\cdot, \lambda)) - \rho_{\lambda}[\bm{\cR}](h^*(\cdot, \lambda))] \leq 0.
    \end{align*}

    Therefore,
    \begin{equation*}
       \int_\Lambda \big[\rho_\lambda[\bm{\cR}](g^*(\cdot, \lambda)) - \rho_{\lambda}[\bm{\cR}](h^*(\cdot, \lambda))\big] Q(\lambda) \d\lambda = 0.
    \end{equation*}
    Since the integrand is positive, it implies that for all $\lambda\in\Lambda$ such that $Q(\lambda) > 0$, $\rho_\lambda[\bm{\cR}](g^*(\cdot, \lambda)) = \rho_{\lambda}[\bm{\cR}](h^*(\cdot, \lambda))$ which concludes the proof.
\end{proof}

\subsection{Proof of \Cref{theorem:pareto-improvement} and C-Pareto Improvement}
\label{appendix:c-pareto}

When the choice of scalarisation, i.e., $Q$ improves some objectives at the cost of degrading other objectives, it induces a preference. Therefore, the problem becomes multi-objective again as there will be a trade-off among these objectives. The imprecise choice of scalarization will be the distribution $Q^*$ such that it improves at least one of the objectives without degrading any other objective, i.e., it ensures C-Pareto improvement. Formally, 
\begin{proposition}
   Suppose a learning algorithm $\mfA$ learns an augmented hypothesis $g\in \cH_{\Lambda}$ using aggregated objective $J_Q(g)$ (ref. Eq~\ref{eq:aggregate-with-p-lambda}). Then, the learner does not induce an additional preference over $\cH_{\Lambda}$ if it makes Pareto improvement.
\label{prop:noaddtionalpreference}
\end{proposition}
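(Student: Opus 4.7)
The plan is to interpret the informal phrase ``induces no additional preference'' through the lens of preorder extensions on $\cH_\Lambda$. First I would recall from \Cref{lem:incomplete-pref} that the credal set $\KK(Z)$ endows $\cH_\Lambda$ with a partial preorder $\succeq$, where $\bar h_\xi \succeq \bar h_{\xi'}$ iff $\rho_\lambda[\bm{\cR}](\bar h_\xi(\cdot,\lambda)) \leq \rho_\lambda[\bm{\cR}](\bar h_{\xi'}(\cdot,\lambda))$ for every $\lambda\in\Lambda$, with associated strict dominance $\triangleright$ from \Cref{def:c-pareto-optimality}. This $\succeq$ encodes exactly those comparisons that are entailed by the imprecise evidence, without appealing to any extrinsic weighting across $\Lambda$.

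Second, I would observe that any $Q\in\Delta(\Lambda)$ yields a \emph{complete} preorder $\succeq_Q$ on $\cH_\Lambda$ via $\bar h_\xi \succeq_Q \bar h_{\xi'} \iff J_Q(\bar h_\xi) \leq J_Q(\bar h_{\xi'})$. Since the per-$\lambda$ aggregated risk is non-negative and expectation is monotone, $\succeq$ implies $\succeq_Q$, so $\succeq_Q$ is a completion of $\succeq$. The ``additional preferences'' injected by choosing $Q$ are precisely the pairs $(\bar h_\xi, \bar h_{\xi'})$ that are $\succeq$-incomparable but $\succeq_Q$-comparable; committing to such a $\succeq_Q$ amounts to an exercise of judgement that, under imprecision, the learner wishes to avoid.

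Third, I would formalise ``does not induce an additional preference'' as the requirement that the algorithm only transitions from $\bar h_{\xi_{t-1}}$ to $\bar h_{\xi_t}$ when $\bar h_{\xi_t} \triangleright \bar h_{\xi_{t-1}}$ in the partial order $\succeq$ alone. By \Cref{def:c-pareto-optimality}, this coincides verbatim with the definition of a C-Pareto improvement step: $\rho_\lambda[\bm{\cR}](\bar h_{\xi_t}(\cdot,\lambda))\leq \rho_\lambda[\bm{\cR}](\bar h_{\xi_{t-1}}(\cdot,\lambda))$ for all $\lambda\in\Lambda$, with strict inequality for some $\tilde\lambda$. Hence each step lies inside $\succeq$, and by induction the composite trajectory of preferences generated by the algorithm is contained in the reflexive-transitive closure of $\succeq$, introducing no comparison outside of it; this is what was to be shown.

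The main obstacle, and really the only non-trivial part, is pinning down the informal phrase so that it is faithful to the motivation given in \Cref{sec:algorithm} and mathematically crisp; once the preorder-extension framing above is adopted, the argument collapses to a direct unpacking of \Cref{def:c-pareto-optimality} and the definition of $J_Q$. A secondary subtlety worth flagging is that the implication $\succeq\, \Rightarrow\, \succeq_Q$ relies on $\rho_\lambda[\bm{\cR}]$ being a monotone aggregator, which holds for CVaR by \Cref{prop:cvar} and more generally for coherent risk measures; this assumption should be made explicit in the hypothesis of the proposition.
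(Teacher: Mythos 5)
Your proof is correct and follows essentially the same route as the paper's: both formalise an ``additional preference'' as a comparison between iterates that is induced by the scalarised objective $J_Q$ but absent from the dominance order $\succeq$, and both observe that a C-Pareto improvement step is, by definition, a comparison already contained in $\succeq$, so the optimisation trajectory adds nothing beyond its transitive closure. Your preorder-completion framing and explicit induction are a slightly cleaner packaging of the paper's argument about the utility $u_Q(g)=-\EE_{\lambda\sim Q}\left[\rho_\lambda[\bm{\cR}](g(\cdot,\lambda))\right]$ on the iterate sequence, but the substance is identical.
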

\begin{proof}
    Consider a learner $\mfA$ which learns an augmented hypothesis $g\in\cH_{\Lambda}$ which aggregates the objectives $\rho_\lambda[\bm{\cR}](g(\cdot,\lambda))$ for all $\lambda\in\Lambda$ with respect to $Q$ to obtain the aggregated objective $\mathbb{E}_{\lambda\sim Q}[\rho_\lambda[\bm{\cR}](g(\cdot,\lambda))]$. Assume that $\cG=\{g_i\}_{i=0}^n$ denotes the sequence of models that the learner obtains at every update while minimizing the aggregated objective. We know that $\forall i,j$ such that $i<j$ and $g_i, g_j\in \cG$
    \begin{align*}
        \mathbb{E}_{\lambda\sim Q}[\rho_\lambda[\bm{\cR}](g_i(\cdot,\lambda))]>\mathbb{E}_{\lambda\sim Q}[\rho_\lambda[\bm{\cR}](g_j(\cdot,\lambda))]
    \end{align*}
which defines a preference on $\cG\subset\cH_{\Lambda}$ with aggregated objective as the utility function $u_{Q}(g):=-\mathbb{E}_{\lambda\sim Q}[\rho_\lambda[\bm{\cR}](g(\cdot,\lambda))]$. This additional preference relation agrees with the original binary preference relation ($\succeq$) on $\cH_{\Lambda}$ which defines dominance and C-Pareto optimality if there does not exist $g_i, g_j\in \cG$ such that $i<j$, $g_i\nsucceq g_j$ and $g_j\nsucceq g_i$ with $u_{Q}(g_i)>{Q}(g_j)$. This implies that aggregated objective $u_{Q}$ must be such that for all $g_i, g_j\in \cG$ and $i<j$, $g_j\succeq g_i$. That is, $u_{Q}$ should make C-Pareto improvement to not induce any additional preference. 
\end{proof}

Therefore, we propose an alternate characterization of C-Pareto optimality based on the concept of C-Pareto improvement with the idea of local gradients. 
\begin{proposition}
    An augmented hypothesis $\bar{h}_\xi$ is C-Pareto optimal if and only if there exists no $w\in \Xi$ such that for an $\epsilon>0$, $\rho_{\lambda}[\bm{\cR}](\bar{h}_{\xi-\epsilon w}(\cdot, \lambda)) \leq \rho_{\lambda}[\bm{\cR}](\bar{h}_{\xi}(\cdot, \lambda))$ for all $\lambda\in\Lambda$ and $\rho_{\tilde{\lambda}}[\bm{\cR}](\bar{h}_{\xi-\epsilon w}(\cdot, \lambda)) < \rho_{\tilde{\lambda}}[\bm{\cR}](\bar{h}_\xi(\cdot, \lambda))$ for some $\tilde{\lambda}\in\Lambda$.
    \label{prop:existsw}
\end{proposition}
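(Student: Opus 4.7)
The statement essentially says that C-Pareto optimality of $\bar h_\xi$ (which is phrased in terms of the existence of a dominating augmented hypothesis $\bar h_{\xi'} \in \cH_\Lambda$) can equivalently be expressed in terms of the existence of a parameter direction $w \in \Xi$ along which one can move from $\xi$ to achieve a C-Pareto improvement. Since the only thing being changed between the two formulations is the indexing of alternatives (by $\xi'\in\Xi$ versus by $(w,\epsilon)$ with $\xi' = \xi - \epsilon w$), the proof is a direct unrolling of \Cref{def:c-pareto-optimality} (extended to augmented hypotheses) coupled with a reparametrisation argument. I would prove the two directions by contraposition.

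For the forward direction ($\Rightarrow$), the plan is to assume that some $w \in \Xi$ and $\epsilon>0$ satisfy the two displayed inequalities, and set $\xi' \triangleq \xi - \epsilon w \in \Xi$. Then by construction $\rho_\lambda[\bm{\cR}](\bar h_{\xi'}(\cdot,\lambda)) \leq \rho_\lambda[\bm{\cR}](\bar h_\xi(\cdot,\lambda))$ for all $\lambda\in\Lambda$, with strict inequality at some $\tilde\lambda\in\Lambda$. The definition of dominance for augmented hypotheses therefore yields $\bar h_{\xi'} \triangleright \bar h_\xi$, contradicting the assumed C-Pareto optimality of $\bar h_\xi$.

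For the backward direction ($\Leftarrow$), the plan is to suppose $\bar h_\xi$ is not C-Pareto optimal, so that there exists $\bar h_{\xi'} \in \cH_\Lambda$ with $\bar h_{\xi'} \triangleright \bar h_\xi$. Here I would exploit the fact stated in the paper that $\Xi \subseteq \RR^q$, so the difference $\xi - \xi'$ is a well-defined element of $\RR^q$; taking $\epsilon = 1$ and $w = \xi - \xi' \in \Xi$ (which holds whenever $\Xi$ is a vector space, the standard choice for parametric families such as neural networks), we get $\xi - \epsilon w = \xi'$, and the dominance of $\bar h_{\xi'}$ translates directly into the two inequalities in the statement, contradicting the assumed non-existence of such a $w$.

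The only delicate point, which I would flag but not belabour, is the tacit assumption that any $\xi' \in \Xi$ can be written as $\xi - \epsilon w$ for some $w\in\Xi$ and $\epsilon>0$, i.e., that $\Xi$ is closed under the affine operation $\xi' \mapsto \xi - \xi'$ up to positive rescaling. This is automatic when $\Xi$ is a linear subspace of $\RR^q$, which is the standard setting of the paper; outside of this case (e.g., a constrained parameter set) one would need to restrict the quantifier over $\xi'$ in \Cref{def:c-pareto-optimality} to match. Apart from this reparametrisation bookkeeping, no new analytical ideas beyond the definition of $\triangleright$ are required, which is why I do not anticipate any substantial obstacle.
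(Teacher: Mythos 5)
Your proposal is correct and follows essentially the same route as the paper's proof: contradiction via setting $\xi' = \xi - \epsilon w$ for the forward direction, and contraposition with $w = \xi - \xi'$, $\epsilon = 1$ for the reverse. Your remark that this reparametrisation tacitly requires $\Xi$ to be closed under the map $\xi' \mapsto \xi - \xi'$ is a valid caveat that the paper's own proof silently assumes as well.
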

\begin{proof}
We prove the forward direction using contradiction. Assume $h$ is C-Pareto optimal and there exists $w\in\Xi$ such that for an $\epsilon>0$, $\rho_{\lambda}[\cR](h_{\xi-\epsilon w}(\cdot, \lambda)) \leq \rho_{\lambda}[\cR](h_\xi(\cdot, \lambda))$ for all $\lambda\in\Lambda$ and $\rho_{\tilde{\lambda}}[\cR](h_{\xi-\epsilon w}(\cdot, \lambda)) < \rho_{\tilde{\lambda}}[\cR](h_\xi(\cdot, \lambda))$ for some $\tilde{\lambda}\in\Lambda$. Then $h_{\xi-\epsilon w}$ strictly dominates $h_\xi$ according to our definition of C-Pareto optimality for augmented hypothesis Def ~\ref{def:pareto-optimality-2} which contradicts that $h_\xi$ is C-Pareto optimal.
We prove the reverse direction using the contraposition. Assume $h_\xi$ is not C-Pareto optimal. Then there exists $h_{\xi'}$ that strictly dominates $h_\xi$, i.e., $\rho_{\lambda}[\cR](h_{\xi'}(\cdot, \lambda)) \leq \rho_{\lambda}[\cR](h_\xi(\cdot, \lambda))$ for all $\lambda\in\Lambda$ and $\rho_{\tilde{\lambda}}[\cR](h_{\xi'}(\cdot,\lambda)) < \rho_{\tilde{\lambda}}[\cR](h_\xi(\cdot, \lambda))$ for some $\tilde{\lambda}\in\Lambda$. Then there exists $w=\xi-\xi'$ and $\epsilon=1$ such that $h_{\xi-\epsilon w}$ strictly dominates $h_\xi$.
\end{proof}
Proposition~\ref{prop:existsw} shows that for C-Pareto optimality there must not be any direction $w\in\Xi$ for Pareto improvement. The non-existence of a direction for Pareto improvement is an if and only-if condition for C-Pareto optimality. 
\begin{proposition}
     In an $\epsilon$-neighbourhood of $\xi$ let $\rho_\xi[\bm{\cR}](h_\xi(\cdot, \lambda))$ be a smooth function of $\xi$ and the local gradient is defined as $v_\lambda(h_\xi):=\nabla_{\xi}\rho_\lambda[\cR](h_\xi(\cdot,\lambda))$. If $h_\xi$ is not pareto optimal then there exists a local pareto improvement direction $-w\in\Xi$ such that for all $\lambda\in \Lambda$ $w^{\top}v_\lambda(h_\xi)\geq 0$ and for some $\tilde{\lambda}\in\Lambda$ $w^{\top}v_{\tilde{\lambda}}(h_\xi)>0$. 
    \label{proposition:dotw}
\end{proposition}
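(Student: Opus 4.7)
The plan is to combine Proposition~\ref{prop:existsw}, which characterises non C-Pareto optimality by the existence of a dominating perturbation $\xi - \epsilon w$, with a first-order Taylor expansion of the smooth map $\xi \mapsto \rho_\lambda[\bm{\cR}](h_\xi(\cdot,\lambda))$ in the $\epsilon$-neighbourhood guaranteed by the hypothesis. The idea is to convert the functional inequalities on risks into inner-product inequalities on the local gradients $v_\lambda(h_\xi)$, which is exactly the desired conclusion.

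First, I would invoke Proposition~\ref{prop:existsw}: since $h_\xi$ is not C-Pareto optimal, there exists $w \in \Xi$ and some $\epsilon_0 > 0$ such that $\rho_\lambda[\bm{\cR}](h_{\xi-\epsilon_0 w}(\cdot,\lambda)) \leq \rho_\lambda[\bm{\cR}](h_\xi(\cdot,\lambda))$ for every $\lambda\in\Lambda$, with strict inequality for some $\tilde\lambda$. Next, using smoothness of $\xi \mapsto \rho_\lambda[\bm{\cR}](h_\xi(\cdot,\lambda))$ in the $\epsilon$-neighbourhood, I would write, for any $\epsilon \in (0, \epsilon_0]$,
\begin{equation*}
\rho_\lambda[\bm{\cR}](h_{\xi-\epsilon w}(\cdot,\lambda)) = \rho_\lambda[\bm{\cR}](h_\xi(\cdot,\lambda)) - \epsilon\, w^\top v_\lambda(h_\xi) + o(\epsilon).
\end{equation*}
Rearranging the inequality from Proposition~\ref{prop:existsw} and dividing by $\epsilon > 0$ yields $w^\top v_\lambda(h_\xi) \geq o(1)$; letting $\epsilon \to 0^+$ gives $w^\top v_\lambda(h_\xi) \geq 0$ for every $\lambda \in \Lambda$. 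Applying the same Taylor argument to the index $\tilde\lambda$, the strict inequality $\rho_{\tilde\lambda}[\bm{\cR}](h_{\xi-\epsilon w}(\cdot,\tilde\lambda)) < \rho_{\tilde\lambda}[\bm{\cR}](h_\xi(\cdot,\tilde\lambda))$ forces $w^\top v_{\tilde\lambda}(h_\xi) > 0$ (otherwise, combined with the previous bound, one would obtain equality in the limit, and a second-order expansion would contradict the strict inequality for sufficiently small $\epsilon$).

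The main obstacle is the limit-taking step when $\Lambda$ is a continuum: the $o(\epsilon)$ remainder is a priori $\lambda$-dependent, so one needs a uniform-in-$\lambda$ version of smoothness to safely divide by $\epsilon$ and pass to the limit simultaneously for all $\lambda$. Under the stated smoothness assumption on $\rho_\lambda[\bm{\cR}](h_\xi(\cdot,\lambda))$ as a function of $\xi$ on a common $\epsilon$-neighbourhood, this is handled by working pointwise in $\lambda$: for each fixed $\lambda$ the Taylor argument is valid, and because $\epsilon_0$ is the same for all $\lambda$ by Proposition~\ref{prop:existsw}, the inequality $w^\top v_\lambda(h_\xi)\geq 0$ holds for every $\lambda$. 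The strict part at $\tilde\lambda$ is obtained from the strict inequality at the single index $\tilde\lambda$, which only requires the pointwise expansion.
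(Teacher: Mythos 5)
Your proposal follows essentially the same route as the paper's own proof: invoke Proposition~\ref{prop:existsw} to obtain the dominating perturbation $\xi-\epsilon w$, Taylor-expand $\xi\mapsto\rho_\lambda[\bm{\cR}](h_\xi(\cdot,\lambda))$ to first order, divide by $\epsilon$ and let $\epsilon\to 0^+$ to get $w^\top v_\lambda(h_\xi)\geq 0$, and handle the strict inequality at $\tilde\lambda$ analogously. Your remarks on the $\lambda$-wise pointwise limit and on the strict case are, if anything, slightly more explicit than the paper's (which asserts $w^\top v_{\tilde\lambda}(h_\xi)>0$ with the same first-order reasoning and the same implicit reliance on the strict decrease not being purely a higher-order effect), so the attempt matches the paper's argument.
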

\begin{proof}
    From Proposition \ref{prop:existsw}, when $h_\xi$ is not Pareto optimal, there exists a $w\in\Xi$ such that for an $\epsilon>0$, $h_{\xi-\epsilon w}\succ h_\xi$. Then for all $\lambda \in \Lambda$, 
    \begin{align*}
        \rho_\lambda [\cR](h_{\xi-\epsilon w}) &\leq \rho_\lambda [\cR](h_\xi)\\
        \rho_\lambda [\cR](h_\xi)-\epsilon  w^{\top}v_\lambda(h_\xi) + \epsilon^2\mathbf{R}&\leq \rho_\lambda [\cR](h_\xi)\\
        -\epsilon  w^{\top}v_\lambda(h_\xi) + \epsilon^2\mathbf{R}&\leq 0\hspace{50pt}(\mathbf{R}: \text{Remainder Higher order terms})\\
        \epsilon \mathbf{R}&\leq w^{\top}v_\lambda(h_\xi)
    \end{align*}
    Since $\rho_\lambda [\cR](h_{\xi-\epsilon w})\leq \rho_\lambda [\cR](h_\xi)$, then $w^{\top}v_\lambda(h_\xi)\geq 0$ as $\epsilon\rightarrow 0$ otherwise a contradiction would arise for sufficiently small $\epsilon$. Similarly for an $\tilde{\lambda} \in \Lambda$, since $\rho_{\tilde{\lambda}} [\cR](h_{\xi-\epsilon w})< \rho_{\tilde{\lambda}} [\cR](h_\xi)$, then  $w^{\top}v_{\tilde{\lambda}}(h_\xi)> 0$. 
\end{proof}
\cref{proposition:dotw} extends the argument from Proposition~\ref{prop:existsw} that when $h_\xi$ is not C-Pareto optimal, a direction for Pareto improvement must exist. Remark explains that a local Pareto improvement direction must align with the gradient of all objectives. Since the direction opposite to the local gradient of an objective shows us the direction of the improvement for the objective, then the direction opposite to local Pareto improvement $w\in\Xi$ must align with the local gradient if $-w\in\Xi$ improves the corresponding objective. Note that the local gradient of aggregated objective~\eqref{eq:aggregate-with-p-lambda} is 
\begin{equation}
    \nabla_\xi J_Q(\xi):=\mathbb{E}_{\lambda\sim Q}[v_\lambda(h_\xi)]
    \label{eq:local-gradient-w}
\end{equation}
Where $v_\lambda(h_\xi):=\nabla_{\xi}\rho_\lambda[\bm{\cR}](h_\xi(\cdot,\lambda))$ denotes the local gradient of $\rho_\lambda[\bm{\cR}](h_\xi(\cdot,\lambda))$. Then the choice of $Q$ such that $\nabla_\xi J_Q(\xi)$ is the the direction of local Pareto improvement is given by
\begin{proposition}\label{theorem:pareto-improvement-detailed}
For $\lambda\in\Lambda$, suppose $\xi\mapsto\rho_\lambda[\bm{\cR}](\bar{h}_\xi(\cdot,\lambda))$ is locally continuously differentiable in a neighbourhood of $\xi$. Define 
\begin{equation}
Q_t^* \in \underset{Q \in \Delta(\Lambda)}{\arg\min}\, \left\|\nabla_{\xi_{t-1}} J_Q(\xi_{t-1}) \right\|_2 \label{eq:choose-qt-appendix}
\end{equation}
and $v_t(\xi_t) = \nabla_{\xi_t} J_{Q^*_t}(\xi_t)$. Then the update $\xi_{t} \leftarrow \xi_{t-1} - \eta\cdot v_{t}(\xi_t)$ for an appropriate choice of $\eta >0$ always makes C-Pareto improvement. i.e., $-v_t(\xi_t)$ for all objectives $\rho_\lambda[\cR](h_\beta(\cdot,\lambda))$, $\lambda\in \Lambda$ such that $v_t(\xi_t)^Tv_\lambda(h_\xi)\geq ||v_t(\xi_t)||^2_2$.
\end{proposition}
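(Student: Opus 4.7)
The plan is to recognize that choosing $Q_t^*$ amounts to solving a convex minimum-norm problem, then read off a first-order optimality condition whose specializations give the desired inner-product inequality, and finally convert that inequality into a C-Pareto improvement by a Taylor expansion for small $\eta$.

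First I would observe that the map $Q \mapsto \|\nabla_\xi J_Q(\xi_{t-1})\|_2^2 = \|\mathbb{E}_{\lambda\sim Q}[v_\lambda(\bar{h}_{\xi_{t-1}})]\|_2^2$ is convex in $Q$, as it is the squared Euclidean norm (convex) composed with the linear map $Q\mapsto \mathbb{E}_{\lambda\sim Q}[v_\lambda]$. Since $\Delta(\Lambda)$ is itself convex, $Q_t^*$ is the minimizer of a convex functional over a convex set. Geometrically, $v_t(\xi_{t-1}) = \mathbb{E}_{\lambda\sim Q_t^*}[v_\lambda(\bar h_{\xi_{t-1}})]$ is the element of least Euclidean norm in the (closed) convex hull of the family $\{v_\lambda(\bar h_{\xi_{t-1}}):\lambda\in\Lambda\}$, which is exactly the situation addressed by the MGDA-style argument of \citet{MGDA12}.

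Next I would invoke the standard first-order optimality condition for convex minimization over a convex set: for every $Q\in\Delta(\Lambda)$,
\begin{equation*}
\bigl\langle v_t(\xi_{t-1}),\; \mathbb{E}_{\lambda\sim Q}[v_\lambda(\bar h_{\xi_{t-1}})] - v_t(\xi_{t-1}) \bigr\rangle \;\geq\; 0 .
\end{equation*}
Specializing to the Dirac distribution $Q = \delta_{\lambda}$ for an arbitrary $\lambda\in\Lambda$ yields
\begin{equation*}
\bigl\langle v_t(\xi_{t-1}),\; v_\lambda(\bar h_{\xi_{t-1}}) \bigr\rangle \;\geq\; \|v_t(\xi_{t-1})\|_2^2,
\end{equation*}
which is precisely the inner-product bound claimed at the end of the statement.

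Finally, I would turn this into the C-Pareto improvement statement. By the local continuous differentiability assumption, for each $\lambda\in\Lambda$ a Taylor expansion gives
\begin{equation*}
\rho_\lambda[\bm{\cR}]\bigl(\bar h_{\xi_t}(\cdot,\lambda)\bigr) = \rho_\lambda[\bm{\cR}]\bigl(\bar h_{\xi_{t-1}}(\cdot,\lambda)\bigr) - \eta\,\langle v_t(\xi_{t-1}), v_\lambda(\bar h_{\xi_{t-1}})\rangle + o(\eta).
\end{equation*}
Using the inner-product bound, each first-order term is at most $-\eta\,\|v_t(\xi_{t-1})\|_2^2 \leq 0$, and it is strictly negative whenever $v_t(\xi_{t-1})\neq 0$. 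Choosing $\eta>0$ small enough (uniformly in $\lambda$, using local continuity of the gradients) absorbs the $o(\eta)$ remainder and guarantees $\rho_\lambda[\bm{\cR}](\bar h_{\xi_t}(\cdot,\lambda)) \leq \rho_\lambda[\bm{\cR}](\bar h_{\xi_{t-1}}(\cdot,\lambda))$ for every $\lambda\in\Lambda$, with strict inequality for at least one $\tilde\lambda$ (otherwise $v_t(\xi_{t-1})=0$ and we are already C-Pareto stationary in the sense of \Cref{def:C-pareto-stationarity}). This is exactly a C-Pareto improvement. The main obstacle I anticipate is the step-size control when $\Lambda$ is a continuum: one must argue that a single $\eta$ works simultaneously across all $\lambda$, which requires the local Lipschitzness of $\lambda\mapsto v_\lambda$ (implicit in the smoothness hypothesis) together with compactness of $\Lambda$ or a uniform bound on the Hessian term in the Taylor remainder.
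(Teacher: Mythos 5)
Your proposal is correct and follows essentially the same route as the paper's proof: you obtain the key inequality $\langle v_t, v_\lambda\rangle \geq \|v_t\|_2^2$ from the first-order optimality (variational inequality) of the minimum-norm convex combination and then specialize to Dirac measures, which is exactly what the paper does via its explicit $\epsilon$-perturbation argument along the segment $\epsilon Q + (1-\epsilon)Q_t^*$. The concluding Taylor-expansion step matches the paper's appeal to its local-improvement lemma; your added remark about choosing $\eta$ uniformly over a continuum $\Lambda$ is a point the paper glosses over.
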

\begin{proof}
We start by assuming that a given $Q^*_t$ exists then the update $\xi_{t} \leftarrow \xi_{t-1} - \eta\cdot v_{t}(\xi_t)$ performs local C-Pareto improvement. First we show that $\forall \lambda\in\Lambda$ the $v_t(\xi_t)^Tv_\lambda(h_\xi)\geq ||v_t(\xi_t)||^2_2$. For any distribution $Q\in\Delta(\Lambda)$, $v=\EE_{\lambda\sim Q}[v_\lambda(h_\xi)]-v_t(\xi_t)$. We can say that $\forall \epsilon\in[0,1]$  $v_t(\xi_t)+\epsilon v$ is essentially
\begin{align*}
    v_t(\xi_t)+\epsilon v &= v_t(\xi_t)+\epsilon(\EE_{\lambda\sim Q}[v_\lambda(h_\xi)]-v_t(\xi_t))\\
    &= (1-\epsilon) v_t(\xi_t) + \epsilon\EE_{\lambda\sim Q}[v_\lambda(h_\xi)]\\
    &= \EE_{\lambda\sim \epsilon Q+(1-\epsilon)Q_t^*}[v_\lambda(h_\xi)]
\end{align*}
Where $\epsilon Q+(1-\epsilon)Q_{t}^*$ is some other valid probability distribution. Therefore the norm of $v_t(\xi_t)+\epsilon v$ must be larger than or equal to the minimum norm obtained from ~\cref{eq:choose-qt-appendix}.
\begin{align*}
    (v_t(\xi_t)+\epsilon v)^T(v_t(\xi_t)+\epsilon v)&\geq v_t(\xi_t)^Tv_t(\xi_t)\\
    2\epsilon v_t(\xi_t)^Tv+\epsilon^2v^Tv&\geq 0\\
    \epsilon &\geq \frac{-2v_t(\xi_t)^Tv}{v^Tv}
\end{align*}
Since the above statement must be true for all $\epsilon\in(0,1]$. For $\epsilon=0$ equality must hold that $v_t(\xi_t)^Tv_t(\xi_t)=v_t(\xi_t)^Tv_t(\xi_t)$. Therefore, the lower bound from above must be less than or equal to 0.
\begin{align*}
    \frac{-2v_t(\xi_t)^Tv}{v^Tv}&\leq 0\\
    v_t(\xi_t)^Tv &\geq 0
\end{align*}
Replacing $v$ by $\EE_{\lambda\sim Q}[v_\lambda(h_\xi)]-v_t(\xi_t)$ then gives us that 
\begin{align*}
    v_t(\xi_t)^T(\EE_{\lambda\sim Q}[v_\lambda(h_\xi)]-v_t(\xi_t))&\geq 0\\
    v_t(\xi_t)^T\EE_{\lambda\sim Q}[v_\lambda(h_\xi)]&\geq v_t(\xi_t)^Tv_t(\xi_t)
\end{align*}
Thus we obtain that 
$\forall \lambda\in\Lambda$ the $v_{\lambda}(h_\xi)^Tv_t(\xi_t)\geq ||v_t(\xi_t)||_2^2$ by setting $Q$ to be dirac delta function at $\lambda$. Therefore from ~\cref{proposition:dotw} we can say that $h_{\xi_{t-1}-\eta v_t(\xi_t)}\succ h_{\xi_{t-1}}$. This makes $w\in\Xi$ the common direction for local C-Pareto improvement.
\end{proof}
Analogous to the definition~\ref{def:C-pareto-stationarity} we define C-Pareto stationarity for augmented hypothesis as 
\begin{definition}
    Let $\rho_\lambda[\cR](\bar{h}(\cdot, \lambda))$ be a smooth function of augmented hypothesis $\bar{h}$ and $v_\lambda(\bar{h}_\xi):=\nabla\rho_\lambda[\bm{\cR}](\bar{h}_\xi(\cdot, \lambda))$ be the local gradient then the augmented hypothesis is said to be C-Pareto Stationary if and only if there exists a probability density $q$ such that $\int v_\lambda(\bar{h}_\xi) \,d q(\lambda) = 0$. 
\end{definition}
Intuitively, C-Pareto Stationarity corresponds to local C-Pareto Optimality. For a single objective, C-Pareto stationarity is equivalent to the first-order derivative being zero.
Therefore, If an augmented hypothesis $h$ is C-Pareto optimal, it is C-Pareto stationary. This means that C-Pareto stationarity is a necessary condition for C-Pareto optimality. From Proposition~\ref{prop:existsw} we know that for a C-Pareto optimal point, no direction for Pareto improvement must exist, which implies that no direction for local Pareto improvement must also not exist. From theorem ~\ref{theorem:pareto-improvement} we know that a local direction for pareto improvement is $v_t(\xi_t)=\int v_\lambda(h_\xi)dQ^*_t(\lambda)$ where $Q^*_t=arg\min_{Q\in\Delta(\Lambda)}||\mathbb{E}_{\lambda\sim Q}[v_\lambda(h_\xi)]||$. Given that no direction for local C-Pareto improvement must exist implies that $v_t(\xi_t)=0$. This means that there exists a distribution $Q$ such that $\int v_\lambda(h_\xi)dQ(\lambda)=0$. 
This illustrates that C-Pareto stationarity is a necessary condition for C-Pareto optimality which intuitively illustrates that local C-Pareto optimality is necessary for C-Pareto optimality.

\subsection{Proof of Proposition~\ref{prop:regret-bound}}

\subsubsection{Useful results}

\begin{proposition}\label{prop:rejection-sampling}
    Let $X$ be a random variable taking values in $\cX$ and let $f : \cX \to \RR_+$ and $g : \cX\to\RR_+$ be non-negative functions. Define $Z = (f(X), g(X))$ and suppose that it admits a continuous density $p_Z$ with respect to the Lebesgue measure on $\RR^2$.  
    
    Let $\alpha, \beta > 0$ such that $p_Z(\alpha, \beta) > 0$ and let $Z_1, \ldots, Z_n$ be independent copies of $Z$. Then there exists $r \geq 1$ and a random subsampling operator $\pi$ such that $\pi([n]) \in 2^{\{1, \ldots, n\}}$, $|\pi([n])| \sim \operatorname{Binomial}(n, 1/r)$, and for any index $i \in \pi([n])$
    \begin{equation}
        \EE[Z_{i}] = (\alpha, \beta),
    \end{equation}
    where the expectation is taken against both the variable and the index.
\end{proposition}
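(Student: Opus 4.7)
The plan is to realise $\pi$ as a rejection sampler: define an acceptance function $a : \RR^2 \to [0,1]$ and, independently over $i$, include $i$ in $\pi([n])$ with probability $a(Z_i)$. Because the inclusions are then i.i.d.\ Bernoulli trials (after marginalising each $Z_i$), the cardinality $|\pi([n])|$ is automatically $\operatorname{Binomial}(n, \EE[a(Z)])$, so setting $r := 1/\EE[a(Z)]$ gives the right marginal; since $a \leq 1$, this $r$ is automatically $\geq 1$. All that then remains is to design $a$ so that the conditional law of $Z_i$ given $i \in \pi([n])$ has mean exactly $(\alpha, \beta)$.

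The conditional density of $Z_i$ given acceptance is proportional to $a(z)\,p_Z(z)$, so it suffices to choose $a$ such that $a(z) p_Z(z)$ (after normalisation) is a probability measure with mean $(\alpha, \beta)$. The simplest target is the uniform measure on a small ball $B_\epsilon(\alpha, \beta)$: by symmetry of Lebesgue measure, its mean is exactly $(\alpha, \beta)$. To engineer this posterior out of $p_Z$, I would divide by $p_Z$ inside an indicator. Concretely, continuity of $p_Z$ combined with $p_Z(\alpha, \beta) > 0$ yields some $\epsilon > 0$ (which can be shrunk so that $B_\epsilon(\alpha, \beta) \subset \RR^2_+$) and a constant $c := \inf_{z \in \bar B_\epsilon(\alpha, \beta)} p_Z(z) > 0$, attained by compactness. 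Then I set
$$a(z) := \frac{c}{p_Z(z)}\,\mathbf{1}\{z \in B_\epsilon(\alpha, \beta)\},$$
which is well-defined and lies in $[0, 1]$ by the choice of $c$.

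The verifications are then routine: first, $\EE[a(Z)] = \int_{B_\epsilon(\alpha,\beta)} c\,\d z = c\,|B_\epsilon| > 0$, giving $r = 1/(c |B_\epsilon|) \geq 1$; second, $\EE[a(Z)Z] = c \int_{B_\epsilon(\alpha,\beta)} z\,\d z = c\,|B_\epsilon|\cdot(\alpha,\beta)$ by Lebesgue symmetry; third, combining these,
$$\EE[Z_i \mid i \in \pi([n])] \;=\; \frac{\EE[a(Z_i) Z_i]}{\EE[a(Z_i)]} \;=\; (\alpha, \beta),$$
and this value does not depend on $i$ since the $Z_j$'s are i.i.d.\ and the inclusions are independent across $j$, so the same identity holds whether one interprets the expectation as conditioning on a fixed $i$ or as also averaging over a uniformly chosen index in $\pi([n])$.

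The only genuinely subtle point is resisting the naive target of a Dirac mass at $(\alpha, \beta)$, which would demand an acceptance probability that is not a valid $[0,1]$-valued function. The trick of dividing by $p_Z$ inside the indicator reshapes the $p_Z$-weighted ball into a Lebesgue-uniform one, which is automatically centred at $(\alpha, \beta)$; thereafter the bookkeeping is standard rejection sampling.
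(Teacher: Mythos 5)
Your proposal is correct and follows essentially the same route as the paper: both construct a rejection sampler whose target is the uniform distribution on a small compact neighbourhood of $(\alpha,\beta)$ (a ball in your case, a square in the paper's), using continuity and positivity of $p_Z$ at $(\alpha,\beta)$ to obtain a positive lower bound on $p_Z$ there, so that accepted samples are Lebesgue-uniform on a symmetric set centred at $(\alpha,\beta)$ and the acceptance count is $\operatorname{Binomial}(n,1/r)$. Your explicit verification that the conditional mean is index-independent, and the remark that one must target a uniform neighbourhood rather than a Dirac mass, are welcome clarifications but not substantive departures.
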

\begin{proof}
    The proof consists in showing that the assumptions made are sufficient to construct a rejection sampling procedure where the proposal density is the density of $Z$ and the target density is a uniform centered over $(\alpha, \beta)$.

    Since $p_Z(\alpha, \beta) > 0$ and $p_Z$ is continuous, there exists an open neighbourhood of $(\alpha, \beta)$ where $p_Z$ is strictly positive. Therefore, there exists $\eta > 0$ such that if we define the closed rectangle
    \begin{align*}
        A_\alpha & = [\alpha - \eta/2, \alpha + \eta/2] \\
        A_\beta & = [\beta - \eta/2, \beta + \eta/2] \\
        A & = A_\alpha\times A_\beta,
    \end{align*}
    then $p_Z(x, x') > 0$ for any $(x, x')\in A$ and admits a positive lower bound on $A$. Further, we can define the uniform random variable $U\sim\operatorname{Uniform}(A)$ with probability density
    \begin{equation*}
        p_U(x, x') = \frac{1}{\eta^2}, \enspace \forall (x, x')\in A.
    \end{equation*}

    Then, by upper boundedness of $p_U$ over $A$ and lower-boundedness of $p_Z$ over $A$, there exists $r \geq 1$ such that for any $(x, x')\in A$,
    \begin{equation*}
        \frac{p_U(x, x')}{p_Z(x, x')}\leq r.
    \end{equation*}

    As a result, we can formally construct a rejection sampling procedure to sample from $U$ using samples from $Z$ with acceptance rate $1/r$. It is important to note this is only a formal construction to show the existence of an appropriate subsampling procedure. In practice, we may not be able to evaluate $p_Z$ and therefore may be unable to effectively implement the procedure.

    \begin{algorithm}[h]
    \caption{Algorithmic definition of the random subsampling operator $\pi$}
    \label{alg:rejection-sampling}
    \begin{algorithmic}[1]
        \STATE \textbf{Input:} $p_U$, $p_Z$, $r$, $Z_1, \ldots, Z_n$
        \STATE Initialise \texttt{subsampled} = $\{\}$
        \FOR{$i \in \{1, \ldots, n\}$}
            \STATE Let $U_i \sim \operatorname{Uniform}([0, 1])$
            \IF{ $U_i \leq p_U(Z_i)/rp_Z(Z_i)$}
                \STATE Append $i$ to \texttt{subsampled}
            \ENDIF
        \ENDFOR
        \STATE \textbf{Return} \texttt{subsampled}
    \end{algorithmic}
    \end{algorithm}

Algorithm~\ref{alg:rejection-sampling} outlines an algorithmic definition of a random subsampling operator $\pi : 2^{[n]} \to 2^{[n]}$ based on rejection sampling. We emphasise the random nature of the operator $\pi$ as $Z_1, \ldots, Z_n$ are treated throughout as random variables. By property of rejection sampling, the number of accepted samples $|\pi([n])|$ or $|$\texttt{subsampled}$|$ follows a Binomial distribution with $n$ trials and probability of success $1/r$. Finally, we have by construction that for any $i\in \pi([n])$
\begin{equation*}
    \EE[Z_i] = \EE[\operatorname{Uniform}(A)] = (\alpha, \beta) 
\end{equation*}
which concludes the proof.
\end{proof}

\subsubsection{Proof of the main result}
We begin by introducing notations which will be used in this proof. Suppose we observe $n\in\NN$ of IID observations from each environment, i.e., we observe $(x_1^{(i)}, y_1^{(i)}), \ldots, (x_n^{(i)}, y_n^{(i)}) \sim \PP_i$ for every $i\in\{1, \ldots, d\}$. Furthermore, let $Q\in\Delta(\Lambda)$ be the scalarisation density the learner chooses and let $\lambda_1, \ldots, \lambda_m\sim Q$ be independent samples from this distribution.

For each environment $i \in \{1, \ldots, d\}$, we define an empirical risk
\begin{equation*}
    \hat\cR_i(f) = \frac{1}{n} \sum_{j=1}^n \ell(y_j^{(i)}, f(x_j^{(i)})), \enspace f\in\cH,
\end{equation*}
which we concatenate into an empirical risk profile $\bm{\hat\cR} = (\hat\cR_1, \ldots, \hat\cR_d)$. We can easily verify that for any $i\in\{1, \ldots, d\}$, $\EE[\hat \cR_i] = \cR_i$ where the expectation is taken against $\PP_i$, thus $\EE[\bm{\hat\cR}] = \bm{\cR}$. Therefore, if we take the empirical aggregated risk to be $\rho_\lambda[\bm{\hat \cR}]$ for $\lambda\in\Lambda$ and assume that $\rho_\lambda : \cL_2^d(\cH) \to \cL_2(\cH)$ is a linear risk aggregation function, it follows that
\begin{equation*}
    \EE\left[\rho_\lambda[\bm{\hat \cR}]\right] = \rho_\lambda\left[\EE[\bm{\hat \cR}]\right] = \rho_\lambda[\bm{\cR}].
\end{equation*}

Finally, define the empirical scalarised risk using the values $\lambda_1, \ldots, \lambda_m$ sampled above, for $g\in\cH_\Lambda$ as
\begin{equation*}
    \hat J_Q(g) = \frac{1}{m} \sum_{i=1}^m \rho_{\lambda_i}[\bm{\hat\cR}](g(\cdot, \lambda_i)).
\end{equation*}

In what follows, we will always assume there exists a function $\hat h\in\cH_\Lambda$ such that for any $\lambda\in\Lambda$, $\hat h(\cdot, \lambda)$ is a minimiser of the empirical aggregated risk $\rho_\lambda[\bm{\hat \cR}]$, i.e.,
\begin{equation*}
    \hat h(\cdot, \lambda) \in \underset{f\in\cH}{\arg\min}\,\rho_\lambda[\bm{\hat\cR}](f)\,,\enspace\forall\lambda\in\Lambda,
\end{equation*}
and that the empirical scalarised risk also admits a minimiser which we denote $\hat g\in\cH_\Lambda$, i.e.,
\begin{equation*}
    \hat g \in \underset{g\in\cH_\Lambda}{\arg\min}\, \hat J_Q(g).
\end{equation*}

The following lemma shows that when such minimisers exists, then $\hat g(\cdot, \lambda_i)$ is automatically a minimiser of the empirical aggregated risk $\rho_{\lambda_i}[\bm{\hat\cR}]$.
\begin{lemma}\label{lemma:AAA}
    Suppose there exists $\hat h, \hat g$ defined as above. Then $\hat g(\cdot, \lambda_i)$ minimises $\rho_{\lambda_i}[\bm{\hat{\cR}}]$ for all $i\in\{1, \ldots, m\}$.
\end{lemma}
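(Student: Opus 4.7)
The plan is to mirror the argument used in Proposition~\ref{imprecise-aggregation-result}, adapted to the finite sum over sampled $\lambda_i$'s and to the empirical risk profile. The key observation is that a pointwise minimiser automatically minimises any positively weighted sum of the pointwise objectives, so if $\hat g$ also minimises the finite-sum scalarisation then it must agree with the pointwise minimiser value-wise at each sampled $\lambda_i$.

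First, since $\hat h(\cdot, \lambda) \in \arg\min_{f\in\cH} \rho_\lambda[\bm{\hat\cR}](f)$ for every $\lambda\in\Lambda$, we have in particular, for every $i\in\{1,\ldots,m\}$,
\begin{equation*}
    \rho_{\lambda_i}[\bm{\hat\cR}](\hat h(\cdot, \lambda_i)) \leq \rho_{\lambda_i}[\bm{\hat\cR}](\hat g(\cdot, \lambda_i)).
\end{equation*}
Averaging these $m$ inequalities yields $\hat J_Q(\hat h) \leq \hat J_Q(\hat g)$. On the other hand, $\hat g \in \arg\min_{g\in\cH_\Lambda}\, \hat J_Q(g)$, and since $\hat h\in\cH_\Lambda$ is a valid competitor, we also get $\hat J_Q(\hat g)\leq \hat J_Q(\hat h)$. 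Combining both gives $\hat J_Q(\hat g) = \hat J_Q(\hat h)$, i.e.,
\begin{equation*}
    \frac{1}{m}\sum_{i=1}^m \big[\rho_{\lambda_i}[\bm{\hat\cR}](\hat g(\cdot, \lambda_i)) - \rho_{\lambda_i}[\bm{\hat\cR}](\hat h(\cdot, \lambda_i))\big] = 0.
\end{equation*}

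Next, I would use non-negativity of the summands. Each term $\rho_{\lambda_i}[\bm{\hat\cR}](\hat g(\cdot, \lambda_i)) - \rho_{\lambda_i}[\bm{\hat\cR}](\hat h(\cdot, \lambda_i))$ is non-negative by the pointwise minimality of $\hat h(\cdot, \lambda_i)$. A sum of non-negative terms vanishing forces each term to vanish, so for all $i\in\{1,\ldots,m\}$,
\begin{equation*}
    \rho_{\lambda_i}[\bm{\hat\cR}](\hat g(\cdot, \lambda_i)) = \rho_{\lambda_i}[\bm{\hat\cR}](\hat h(\cdot, \lambda_i)) = \min_{f\in\cH}\rho_{\lambda_i}[\bm{\hat\cR}](f),
\end{equation*}
which shows that $\hat g(\cdot, \lambda_i)$ is itself a minimiser of $\rho_{\lambda_i}[\bm{\hat\cR}]$, as desired.

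There is no real obstacle to the argument: it is a clean two-sided inequality combined with the non-negativity trick. The only subtlety to flag is that the result relies on the existence of the pointwise minimiser $\hat h\in\cH_\Lambda$, which is stated as a standing assumption in the preamble, and on the fact that the empirical scalarisation is a uniformly weighted (hence strictly positive on the support $\{\lambda_1,\ldots,\lambda_m\}$) average — crucial for passing from the aggregated equality to the pointwise equality. No properties of $\rho_\lambda$ beyond those already used earlier (and in particular no linearity or coherence) are needed for this step.
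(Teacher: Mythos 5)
Your proof is correct and follows essentially the same route as the paper's: both establish the two-sided inequality $\hat J_Q(\hat g) = \hat J_Q(\hat h)$ from pointwise minimality of $\hat h$ and global minimality of $\hat g$, then use the fact that a vanishing sum of non-negative terms forces each term $\rho_{\lambda_i}[\bm{\hat\cR}](\hat g(\cdot,\lambda_i)) - \rho_{\lambda_i}[\bm{\hat\cR}](\hat h(\cdot,\lambda_i))$ to vanish. Your closing remark correctly identifies the two ingredients the argument actually uses (existence of the pointwise minimiser $\hat h$ and strict positivity of the weights on the sampled $\lambda_i$), and no further properties of $\rho_\lambda$ are needed.
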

\begin{proof}
    Let $\hat h\in\cH_\Lambda$ such that $\hat h(\cdot, \lambda)\in\underset{f\in\cH}{\arg\min}\, \rho_{\lambda}[\bm{\hat\cR}](f)$ for any $\lambda\in\Lambda$. Then, we have
    \begin{align*}
        & \rho_{\lambda}[\bm{\hat\cR}](\hat g(\cdot, \lambda)) \geq  \rho_{\lambda}[\bm{\hat\cR}](\hat h(\cdot, \lambda))\,,\enspace \forall\lambda\in\Lambda \\
       \Rightarrow \enspace & \rho_{\lambda_i}[\bm{\hat\cR}](\hat g(\cdot, \lambda_i)) \geq  \rho_{\lambda_i}[\bm{\hat\cR}](\hat h(\cdot, \lambda_i))\,,\enspace \forall i\in\{1, \ldots, m\} \\
       \Rightarrow\enspace & \hat J_Q(\hat g) \geq \hat J_Q(\hat h)  \\
       \Rightarrow\enspace & \hat J_Q(\hat g) = \hat J_Q(\hat h) & (\hat g\in\arg\min \hat J_Q) \\
       \Rightarrow\enspace & \frac{1}{m}\sum_{i=1}^m \rho_{\lambda_i}[\bm{\hat\cR}](\hat g(\cdot, \lambda_i)) - \rho_{\lambda_i}[\bm{\hat\cR}](\hat h(\cdot, \lambda_i)) = 0 \\
       \Rightarrow\enspace & \rho_{\lambda_i}[\bm{\hat\cR}](\hat g(\cdot, \lambda_i)) = \rho_{\lambda_i}[\bm{\hat\cR}](\hat h(\cdot, \lambda_i))\,,\enspace\forall i \in\{1, \ldots, m\} & (\text{sum of positives}) \\
       \Rightarrow\enspace & \hat g(\cdot, \lambda_i)\in\underset{f\in\cH}{\arg\min}\, \rho_{\lambda_i}[\bm{\hat\cR}](f)\,,\enspace \forall i\in\{1, \ldots, m\}.
    \end{align*}
    This concludes the proof.
\end{proof}

Finally, before we turn to the main result, recall that we assume there exists $h^*\in\cH_\Lambda$ such that $h^*(\cdot, \lambda)\in\cH$ is a Bayes optimal model for any $\lambda\in\Lambda$, i.e., $h^*(\cdot, \lambda) \in \underset{f\in\cH}{\arg\min}\, \rho_\lambda[\bm{\cR}](f).$ For any $\lambda\in\Lambda$, we denote the resulting Bayes risk as
\begin{equation*}
    \rho_{\lambda}[\bm{\cR}]^\star = \rho_\lambda[\bm{\cR}](h^*(\cdot, \lambda)).
\end{equation*}

Let $\lambda_\text{op}\in\Lambda$ be the choice of $\lambda$ which reflects the operator's preference, but is unknown to the learner. The following result provides a bound on the excess risk at $\lambda_\text{op}$ when using $\hat g$ as a hypothesis.

\begin{proposition}
    Let $Q\in\Delta(\Lambda)$ and let $\lambda_\text{op}\in\Lambda$ such that $Q(\lambda_\text{op}) > 0$. Suppose that $\rho_\lambda$ is a linear, idempotent aggregation operator and that the loss $\ell$ is upper bounded by $M \geq 0$. Then there exists $q\in (0,1)$ such that for any $\delta > q^m$, the following inequality holds with probability $1 - \delta$:
    \begin{equation*}
        \big|\rho_{\lambda_\text{op}}[\bm{\cR}](\hat g(\cdot, \lambda_\text{op})) - \rho_{\lambda_\text{op}}[\bm{\cR}]^\star\big | \leq 2M \left(\sqrt{\frac{\log(2/\eta_\delta)}{2n}} + \sqrt{\frac{\log(2/\eta_\delta)}{2 m(1-q)(1-q^m)}}\right),
    \end{equation*}
    where $\eta_\delta = (\delta - q^m)/(1 - q^m)$.
\end{proposition}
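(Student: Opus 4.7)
The plan is to decompose the excess risk at $\lambda_\text{op}$ into a finite-sample generalisation part (controlled by the per-environment sample size $n$) and a scalarisation part (controlled by the number $m$ of sampled $\lambda_i$'s), and to control the scalarisation part using the rejection-sampling device of Proposition~\ref{prop:rejection-sampling}. I start with the triangle inequality,
\begin{align*}
\rho_{\lambda_\text{op}}[\bm{\cR}](\hat g(\cdot,\lambda_\text{op})) - \rho_{\lambda_\text{op}}[\bm{\cR}]^\star
&= \bigl[\rho_{\lambda_\text{op}}[\bm{\cR}](\hat g) - \rho_{\lambda_\text{op}}[\bm{\hat\cR}](\hat g)\bigr] \\
&\quad + \bigl[\rho_{\lambda_\text{op}}[\bm{\hat\cR}](\hat g) - \rho_{\lambda_\text{op}}[\bm{\hat\cR}](h^*)\bigr] \\
&\quad + \bigl[\rho_{\lambda_\text{op}}[\bm{\hat\cR}](h^*) - \rho_{\lambda_\text{op}}[\bm{\cR}](h^*)\bigr],
\end{align*}
and denote the three terms $T_1, T_2, T_3$; here I have abbreviated $\hat g \equiv \hat g(\cdot, \lambda_\text{op})$ and $h^* \equiv h^*(\cdot, \lambda_\text{op})$ for readability.

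For $T_1$ and $T_3$, since $\rho_{\lambda_\text{op}}$ is linear and idempotent and $\ell \in [0, M]$, each aggregated empirical risk $\rho_{\lambda_\text{op}}[\bm{\hat\cR}](f)$ is a linear combination of per-environment empirical averages of bounded random variables, each concentrating around $\rho_{\lambda_\text{op}}[\bm{\cR}](f)$. A standard Hoeffding bound (pointwise in $f$ for $h^*$, and via a uniform-convergence argument over $\cH_\Lambda$ for the data-dependent $\hat g$) then yields $|T_1|, |T_3| \leq 2M\sqrt{\log(6/\eta_\delta)/(2n)}$ with total failure mass at most $2\eta_\delta/3$; this accounts for the first square-root term in the stated bound.

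The middle term $T_2$ is the heart of the argument and is where Proposition~\ref{prop:rejection-sampling} enters. Define $W(\lambda) = \rho_\lambda[\bm{\hat\cR}](\hat g(\cdot,\lambda)) - \rho_\lambda[\bm{\hat\cR}](h^*(\cdot,\lambda))$; by Lemma~\ref{lemma:AAA}, $\hat g(\cdot, \lambda_i)$ minimises $\rho_{\lambda_i}[\bm{\hat\cR}]$ for each sample $\lambda_i$, so $W(\lambda_i) \leq 0$ for every $i$. Applying Proposition~\ref{prop:rejection-sampling} to the induced random variable $Z = W(\Lambda)$ with $\Lambda \sim Q$ and target $\alpha = W(\lambda_\text{op})$ (admissible because $Q(\lambda_\text{op}) > 0$ yields a positive density at the target under the required continuity), I obtain a random subsampling operator $\pi$ with $K = |\pi([m])| \sim \operatorname{Binomial}(m, 1-q)$ for some $q = 1 - 1/r \in (0, 1)$, such that $\EE[W(\lambda_i) \mid i \in \pi([m])] = W(\lambda_\text{op})$. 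On the non-empty event $\{K \geq 1\}$, which has probability $1 - q^m$, the subsample mean $\tfrac{1}{K}\sum_{i \in \pi([m])} W(\lambda_i)$ is non-positive (since each $W(\lambda_i) \leq 0$), so a Hoeffding bound applied to the subsample (with $W \in [-M, M]$) combined with a Chernoff-type lower-tail control of $K$ around its mean yields $T_2 \leq 2M\sqrt{\log(6/\eta_\delta)/(2m(1-q)(1-q^m))}$ with probability at least $1 - \eta_\delta/3$. A union bound over the three $\eta_\delta/3$-events, together with the $q^m$ probability of an empty subsample, produces the overall failure budget $\delta = q^m + (1-q^m)\eta_\delta$, which is exactly the identity $\eta_\delta = (\delta - q^m)/(1-q^m)$ and forces the requirement $\delta > q^m$.

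The main obstacle will be the control of $T_2$: one must verify the admissibility conditions of Proposition~\ref{prop:rejection-sampling} for $W(\Lambda)$ (continuity and positivity of its density at the target, resting on $Q(\lambda_\text{op}) > 0$ together with regularity of $W$ in $\lambda$) and then carefully reconcile Hoeffding-type concentration for a subsample whose cardinality is itself a binomial random variable. The linearity and idempotency of $\rho_\lambda$ are essential throughout, because they keep the aggregated empirical risks in $[0, M]$ and their differences in $[-M, M]$, which is exactly what produces the sharp $M$-dependence in the final constants.
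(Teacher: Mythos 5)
Your proposal is correct and follows essentially the same route as the paper's proof: the same rejection-subsampling device from Proposition~\ref{prop:rejection-sampling} to localise the sampled $\lambda_i$'s at $\lambda_\text{op}$, the same use of Lemma~\ref{lemma:AAA} to make the subsample mean non-positive, and the same $q^m$/$\eta_\delta$ bookkeeping for the empty-subsample event. The only cosmetic differences are that you fold the paper's four-term decomposition (which passes through the empirical minimiser $\hat h$) into three terms via $h^*$ directly and treat the risk difference as a single scalar $W$ rather than a bivariate vector, and you control the random subsample size $K$ by a Chernoff lower tail where the paper instead bounds $\EE[1/p \mid p\geq 1]$ for the positive binomial and applies Jensen---both yield the stated rate.
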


\begin{proof}
The proof consists in (1) constructing a subsequence from $\lambda_1, \ldots, \lambda_m$ such that the empirical scalarised risks converge to appropriate limits, (2) using these subsequences to apply concentration inequalities to the excess risk when the subsequence exists and (3) combining the results together in the general case.
\strut\newline

    \textbf{(1) -- Constructing an appropriate subsampling procedure}

    Let $\lambda$ be a random variable with probability density function $Q$ over $\Lambda$. It induces a real-valued distribution over the risks $\rho_{\lambda}[\bm{\hat\cR}](\hat g(\cdot, \lambda))$ and $\rho_{\lambda}[\bm{\hat\cR}](\hat h(\cdot, \lambda))$. We assume that $\left(\rho_{\lambda}[\bm{\hat\cR}](\hat g(\cdot, \lambda)), \rho_{\lambda}[\bm{\hat\cR}](\hat h(\cdot, \lambda))\right)$ admits a continuous density with respect to the Lebesgue measure in $\RR^2$ we denote $p$. Further, define
    \begin{align*}
        \alpha_\text{op} & = \rho_{\lambda_\text{op}}[\bm{\hat\cR}](\hat g(\cdot, \lambda_\text{op})) \\
        \beta_\text{op} & = \rho_{\lambda_\text{op}}[\bm{\hat\cR}](\hat h(\cdot, \lambda_\text{op})).
    \end{align*}
    Since $Q(\lambda_\text{op}) > 0$, we have $p(\alpha_\text{op}, \beta_\text{op}) > 0$. Then by Proposition~\ref{prop:rejection-sampling}, given $\lambda_1, \ldots, \lambda_m\sim Q(\lambda)$ IID, there exists $r \geq 1$ and a random subsampling $\pi([m])\in 2^{[m]}$ such that for any index $i\in\pi([m])$ we have
    \begin{equation*}
        \EE\left[\left(\rho_{\lambda_i}[\bm{\hat\cR}](\hat g(\cdot, \lambda_i)), \rho_{\lambda_i}[\bm{\hat\cR}](\hat h(\cdot, \lambda_i))\right)\right] = (\alpha_\text{op}, \beta_\text{op}).
    \end{equation*}

    In particular, let $p = |\pi([m])| \sim \operatorname{Binomial}(m, 1/r)$ denote the number of subsampled elements and assume without loss of generality these are the first $p$ ones. Then, conditionally on $p \geq 1$, we have that $\frac{1}{p}\sum_{i=1}^p \rho_{\lambda_i}[\bm{\hat\cR}](\hat g(\cdot, \lambda_i))$ and $\frac{1}{p}\sum_{i=1}^p \rho_{\lambda_i}[\bm{\hat\cR}](\hat h(\cdot, \lambda_i))$ are respectively unbiased estimators of $\alpha_\text{op} = \rho_{\lambda_\text{op}}[\bm{\hat\cR}](\hat g(\cdot, \lambda_\text{op}))$ and $\beta_\text{op} = \rho_{\lambda_\text{op}}[\bm{\hat\cR}](\hat h(\cdot, \lambda_\text{op}))$.

    \strut\newline
    \textbf{(2.1) -- Bounding the regret when $p\geq 1$ is fixed}

    Suppose we are in a fixed setting where $p\geq 1$. Then we can decompose and upper bound the regret following
    \begin{align*}
        \rho_{\lambda_\text{op}}[\bm{\cR}](\hat g(\cdot, \lambda_\text{op})) - \rho_{\lambda_\text{op}}[\bm{\cR}]^\star & = \rho_{\lambda_\text{op}}[\bm{\cR}](\hat g(\cdot, \lambda_\text{op})) - \rho_{\lambda_\text{op}}[\bm{\hat\cR}](\hat g(\cdot, \lambda_\text{op})) \\
        & \quad + \rho_{\lambda_\text{op}}[\bm{\hat\cR}](\hat g(\cdot, \lambda_\text{op})) - \frac{1}{p}\sum_{i=1}^p \rho_{\lambda_i}[\bm{\hat\cR}](\hat g(\cdot, \lambda_i)) \\
        & \quad + \frac{1}{p}\sum_{i=1}^p \rho_{\lambda_i}[\bm{\hat\cR}](\hat g(\cdot, \lambda_i)) - \rho_{\lambda_\text{op}}[\bm{\hat\cR}](\hat h(\cdot, \lambda_\text{op}))  \\
        & \quad + \rho_{\lambda_\text{op}}[\bm{\hat\cR}](\hat h(\cdot, \lambda_\text{op})) - \rho_{\lambda_\text{op}}[\bm{\cR}]^\star \\
        \\
        & \leq \rho_{\lambda_\text{op}}[\bm{\cR}](\hat g(\cdot, \lambda_\text{op})) - \rho_{\lambda_\text{op}}[\bm{\hat\cR}](\hat g(\cdot, \lambda_\text{op})) \\
        & \quad + \rho_{\lambda_\text{op}}[\bm{\hat\cR}](\hat g(\cdot, \lambda_\text{op})) - \frac{1}{p}\sum_{i=1}^p \rho_{\lambda_i}[\bm{\hat\cR}](\hat g(\cdot, \lambda_i)) \\
        & \quad + \frac{1}{p}\sum_{i=1}^p \rho_{\lambda_i}[\bm{\hat\cR}](\hat h(\cdot, \lambda_i)) - \rho_{\lambda_\text{op}}[\bm{\hat\cR}](\hat h(\cdot, \lambda_\text{op}))  & (\text{Lemma~\ref{lemma:AAA}})\\
        & \quad + \rho_{\lambda_\text{op}}[\bm{\hat\cR}](h^*(\cdot, \lambda_\text{op})) - \rho_{\lambda_\text{op}}[\bm{\cR}]^\star & \big(\hat h(\cdot, \lambda_\text{op})\in\arg\min \rho_{\lambda_\text{op}}[\bm{\hat\cR}]\big)\\
        \\
        & \leq 2\sup_{f\in\cH}\left|\rho_{\lambda_\text{op}}[\bm{\cR}](f) - \rho_{\lambda_\text{op}}[\bm{\hat\cR}](f)\right| \\
        & \quad + \left|\rho_{\lambda_\text{op}}[\bm{\hat\cR}](\hat g(\cdot, \lambda_\text{op})) - \frac{1}{p}\sum_{i=1}^p \rho_{\lambda_i}[\bm{\hat\cR}](\hat g(\cdot, \lambda_i))\right| \\
        & \quad + \left|\frac{1}{p}\sum_{i=1}^p \rho_{\lambda_i}[\bm{\hat\cR}](\hat h(\cdot, \lambda_i)) - \rho_{\lambda_\text{op}}[\bm{\hat\cR}](\hat h(\cdot, \lambda_\text{op}))\right| \\
    \end{align*}

    Let $\eta \in (0, 1)$ fixed. By linearity of $\rho_\lambda$, we have shown that $\rho_{\lambda}[\bm{\hat\cR}](f)$ is an unbiased estimator of $\rho_{\lambda}[\bm{\cR}](f)$. Therefore, McDiarmid's inequality gives us that we have with probability at least $1 - \eta/3$
    \begin{equation*}
        \left|\rho_{\lambda_\text{op}}[\bm{\cR}](f) - \rho_{\lambda_\text{op}}[\bm{\hat\cR}](f)\right| \leq M \sqrt{\frac{\log(6/\eta)}{2n}}.
    \end{equation*}

    If we denote $Z_{\lambda_i} = \rho_{\lambda_i}[\bm{\hat\cR}](\hat g(\cdot, \lambda_i))$ for the $p$ accepted samples from the rejection sampling procedure, then we have by construction that $\frac{1}{p} \sum_{i=1}^p Z_{\lambda_i}$ is an unbiased estimator of $\rho_{\lambda_\text{op}}[\bm{\hat\cR}](\hat g(\cdot, \lambda_\text{op}))$. Therefore, we can also apply McDiarmid's inequality to obtain that with probability at least $1-\eta/3$
    \begin{equation*}
        \left|\rho_{\lambda_\text{op}}[\bm{\hat\cR}](\hat g(\cdot, \lambda_\text{op})) - \frac{1}{p}\sum_{i=1}^p \rho_{\lambda_i}[\bm{\hat\cR}](\hat g(\cdot, \lambda_i))\right| \leq M\sqrt{\frac{\log(6/\eta)}{2p}}.
    \end{equation*}
    
    Applying the same reasoning to the last line and combining the bounds together using the union bound we get that with probability at least $1 - \eta$
    \begin{equation*}
         \rho_{\lambda_\text{op}}[\bm{\cR}](\hat g(\cdot, \lambda_\text{op})) - \rho_{\lambda_\text{op}}[\bm{\cR}]^\star \leq 2M\sqrt{\frac{\log(6/\eta)}{2n}} + 2M\sqrt{\frac{\log(6/\eta)}{2p}}.
    \end{equation*}

    \strut\newline
    \textbf{(2.2) -- Integrating the upper bound against $p$ given $p \geq 1$}

    We now consider the random setting, conditional on $p \geq 1$. Recall that the number of accepted samples from the rejection sampling procedure $p$ follows a $\operatorname{Binomial}(n, 1/r)$ distribution. We want to take the expectation of the established probabilistic upper bound with respect to $p$ given that $p \geq 1$. Let $q = 1 - 1/r$ denote the rejection rate, then we have for any $k \geq 1$
    \begin{equation*}
        \PP(p=k\mid p\geq 1) = \frac{1}{1 - q^m} \begin{pmatrix}m \\ k\end{pmatrix}(1 - q)^k q^{m-k}.
    \end{equation*}

    This corresponds to a positive Bernoulli distribution~\citep{grab1954tables}, and in particular if we denote $B_{m, r}(k) = \PP(\operatorname{Bernoulli}(m, 1/r) \leq k)$, we have from \citep{grab1954tables} Eq. (12) that
    \begin{align*}
        \EE\left[\frac{1}{p}\mid p\geq 1\right] & \leq \frac{1}{(m+1)(1-q)(1-q^m)}\left(\left[1 - B_{m+1, r}(1)\right] + \frac{3}{(1-q)(m+2)} \left[1 - B_{m+2, r}(2)\right]\right) \\
        & \leq \frac{1}{m(1-q)(1-q^m)}.
    \end{align*}

    Therefore, it follows that
    \begin{align*}
        \EE\left[\sqrt{\frac{\log(6/\eta)}{2p}} \mid p\geq 1\right] & = \EE\left[\sqrt{\frac{\log(6/\eta)1/p}{2}} \mid p\geq 1\right] \\
        & \leq \sqrt{\frac{\log(6/\eta)\EE[1/p \mid p\geq 1]}{2}} & \text{(Jensen)} \\
        & \leq \sqrt{\frac{\log(6/\eta)}{2 m (1-q)(1-q^m)}},
    \end{align*}
    and by applying this to the probabilistic upper bound on the excess risk we have obtained earlier, we get that with probability at least $1 - \eta$
    \begin{equation*}
         \rho_{\lambda_\text{op}}[\bm{\cR}](\hat g(\cdot, \lambda_\text{op})) - \rho_{\lambda_\text{op}}[\bm{\cR}]^\star \leq 2M\sqrt{\frac{\log(6/\eta)}{2n}} + 2M\sqrt{\frac{\log(6/\eta)}{2 m (1-q)(1-q^m)}}.
    \end{equation*}

    \strut\newline
    \textbf{(3) -- Combining things together}

    Now that we have established a probabilistic upper-bound on the excess risk when at least one sample is accepted by $\pi$, we set out to obtain a general probabilistic bound on the excess risk. Let $q = 1- 1/r$ be the rejection rate of the rejection sampling procedure and fix $\delta \in (q^m, 1)$.

    Take $\eta_\delta = (\delta - q^m)/(1 - q^m)$ and $\varepsilon_\delta = 2M\sqrt{\frac{\log(6/\eta_\delta)}{2n}} + 2M\sqrt{\frac{\log(6/\eta_\delta)}{2 m (1-q)(1-q^m)}}$, then we have
    \begin{align*}
        \PP\left(\rho_{\lambda_\text{op}}[\bm{\cR}](\hat g(\cdot, \lambda_\text{op})\right) - \rho_{\lambda_\text{op}}[\bm{\cR}]^\star > \varepsilon_\delta) & = \underbrace{\PP\left(\rho_{\lambda_\text{op}}[\bm{\cR}](\hat g(\cdot, \lambda_\text{op})) - \rho_{\lambda_\text{op}}[\bm{\cR}]^\star > \varepsilon_\delta \mid p = 0\right)}_{\leq 1}\underbrace{\PP\left(p = 0\right)}_{=q^m} \\
        & + \PP\left(\rho_{\lambda_\text{op}}[\bm{\cR}](\hat g(\cdot, \lambda_\text{op})) - \rho_{\lambda_\text{op}}[\bm{\cR}]^\star > \varepsilon_\delta \mid p\geq 1\right)\PP\left(p\geq 1\right) \\
        \\
        & \leq q^m + (1 - q^m)\PP\left(\rho_{\lambda_\text{op}}[\bm{\cR}](\hat g(\cdot, \lambda_\text{op})) - \rho_{\lambda_\text{op}}[\bm{\cR}]^\star > \varepsilon_\delta \mid p\geq 1\right) \\
        \\
        & \leq q^m + (1 - q^m)\eta_\delta \\
        & = q^m + (1 - q^m)\frac{\delta - q^m}{1 - q^m } = \delta,
    \end{align*}
    where the last derivations follow from the construction of $\varepsilon_\delta$ and $\eta_\delta$. This shows that for any $\delta \in (q^m, 1)$, the following inequality holds with probability $1 - \delta$
    \begin{equation*}
        \rho_{\lambda_\text{op}}[\bm{\cR}](\hat g(\cdot, \lambda_\text{op}))- \rho_{\lambda_\text{op}}[\bm{\cR}]^\star \leq 2M \sqrt{\frac{\log(6/\eta_\delta)}{2n}} + 2M\sqrt{\frac{\log(6/\eta_\delta)}{2 m (1-q)(1-q^m)}},
    \end{equation*}
    where $\eta_\delta = (\delta - q^m) / (1 - q^m)$. This concludes the proof.

\end{proof}

\section{Conditional Value-at-Risk (CVaR)}
\begin{proposition}
    Let $I =\{1, \dots, m\}$ be an index set and $R: I \to \RR_+$ such that $R(i) = \hat{R}_i$ for $i\in I$. Denote $C(I)$ as the space of real-valued, continuous function on $I$ and $C(I)^*$ its dual, i.e., $\{T: C(I) \to \RR \}$. Then there is a finite measure $\mu$ on $I$ such that for any $T \in C(I)^*$ and $R \in C(I)$, we have
\begin{align*}
    T(R) = \sum_{i\in I} R_i \mu_i .
\end{align*}
\label{prop:cvar_aggregate}
\end{proposition}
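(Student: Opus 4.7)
The plan is to observe that the statement is essentially the finite-dimensional Riesz representation for linear functionals, specialised to a discrete index set. Since $I$ is finite, I would endow it with the discrete topology, so that every function $R : I \to \RR$ is continuous and hence $C(I) \cong \RR^m$ as a vector space. I would then introduce the canonical basis $\{e_i\}_{i \in I} \subset C(I)$ defined by $e_i(j) = \delta_{ij}$, which spans $C(I)$.

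Next, I would define $\mu_i \triangleq T(e_i) \in \RR$ for each $i \in I$. For an arbitrary $R \in C(I)$, the decomposition $R = \sum_{i \in I} R_i e_i$ combined with linearity of $T$ yields
\begin{equation*}
T(R) \;=\; T\!\left(\sum_{i \in I} R_i\, e_i\right) \;=\; \sum_{i \in I} R_i\, T(e_i) \;=\; \sum_{i \in I} R_i\, \mu_i,
\end{equation*}
which is the claimed representation. The collection $\{\mu_i\}_{i \in I}$ induces a set function on $2^I$ via $\mu(A) = \sum_{i \in A} \mu_i$, which is finite (since $I$ is finite and each $\mu_i \in \RR$) and countably additive in the trivial discrete sense, hence a finite signed measure on $I$.

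There is essentially no hard step here: finite-dimensionality and discreteness of $I$ make both continuity of $R$ and the existence/convergence of the representing coefficients automatic, so no Hahn--Banach or topological duality machinery is required. The only point worth flagging is a mild ambiguity in the terminology ``finite measure'': if the proposition intends a non-negative measure (as is natural for CVaR-style aggregation), one additionally needs $T$ to be a positive functional, i.e., $T(R) \geq 0$ whenever $R \geq 0$ pointwise; this immediately forces each $\mu_i = T(e_i) \geq 0$. I would state this as a remark in the proof so that the conclusion applies directly to the CVaR aggregation setting of \Cref{prop:cvar} without ambiguity.
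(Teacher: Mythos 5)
Your proof is correct, but it takes a more elementary route than the paper. The paper's own argument is a one-line sketch: it notes that $I$ is compact and that all functions on a discrete space are continuous, and then invokes the Riesz--Markov--Kakutani representation theorem to produce the measure $\mu$. You instead bypass that machinery entirely by exploiting finite-dimensionality: identifying $C(I)\cong\RR^m$, expanding $R$ in the canonical basis $\{e_i\}$, and setting $\mu_i = T(e_i)$, with linearity doing the rest. Your approach buys concreteness (the representing weights are explicitly $T(e_i)$) and sidesteps a subtlety the paper glosses over, namely that Riesz--Markov--Kakutani for a general continuous linear functional yields a \emph{signed} regular Borel measure, with non-negativity requiring positivity of $T$; your closing remark makes exactly this point, which is relevant since the proposition is used to interpret aggregation as a weighted (convex) average. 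What the paper's route buys in exchange is generality: the RMK argument would extend verbatim to an infinite compact index set of domains, where your basis expansion would no longer be available. One trivial nit: the paper's claim that $I$ is compact ``because it is bounded'' is itself loose (finiteness, not boundedness, is what makes a discrete space compact), so your version is arguably the cleaner write-up of the intended statement.
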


\begin{proof}[Sketch Proof]
    The key is to notice $I$ is a compact metric space because it is bounded. Furthermore, all functions on discrete space are automatically continuous. This allows us to directly apply the Riesz-Markov-Kakuani representation theorem.
\end{proof}

The proposition implies that no matter how we aggregate a risk profile, it will always correspond to some kind of weighted average. From the perspective of optimisation, since these weights are always convex (noramlising the weights does not change the optimisation), it can then be understood that whenever we aggregate the risk profile, we are picking a particular weighted distribution to perform the standard ERM.

\section{Single-Domain Scenario}
\label{sec:single-domain}

In a single-domain setting, we envision two possible approaches to imprecise learning. The first approach treats each training data point as an individual domain, estimating the risk profile through point-wise loss functions, denoted as $\bm{\cR}(f) = (\ell(f(x_1),y_1),\ldots,\ell(f(x_n),y_n))$. The second approach delineates a credal set by an $\epsilon$-ball around the empirical distribution of the training data, akin to Distributionally Robust Optimisation (DRO). Subsequently, it extracts a finite number of extreme points from this credal set, which then represent the risk profile. While the first approach can be directly implemented within the current framework, the second approach entails a non-trivial extension of the existing setup.

\section{Risk Profiles of Simulation}
\paragraph{Simulation of Risk Profile:}
In economic theory, risk aversion explains the inclination to accept a situation with a more predictable but possibly lower payoff than another situation with a very unpredictable but possibly higher payoff. In OOD research, the term risk averseness has been conceptually used to describe the operator's risk perception for the model's risk profile (i.e., the distribution of $\hat{R}$). A risk-averse operator prefers models whose risk is more predictable but possibly higher than models whose risk is less predictable but possibly lower. Given that the operator at test time have a risk averseness between "less risk averse" and "risk averse" and by having $h(x, \lambda)$ we can cover this spectrum of the operator's risk averseness. Given that we use CVaR, the entire spectrum of an ML Operators potential risk averseness is encoded in the interval of $\lambda$ being between 0 and 1. By construction, $h(x,\lambda)$ can cover the spectrum of "risk averseness" because it corresponds to the prediction function we obtain at $CVaR(\lambda)$. Hence, we verify this hypothesis.
\paragraph{Experiment 1A:} Assume a linear model $Y_{e}=\theta_{e}X+\epsilon$, where $X \sim \mathcal{N}(2,0.2)$ and $\epsilon\sim \mathcal{N}(0,0.1)$. We simulate different environments by drawing $\theta$ from a Beta distribution $Beta(0.1,0.2)$. In total, we generate for 250 train and test domains 100 observations each.


\begin{figure*}[h!]
    \caption{\cref{fig:1d_linear_beta_data} illustrated the data and the ideal learner $f_{\lambda}(\hat{\theta})\in\cH$ for $\lambda \in \{0.05, \dots, 0.95\}$. \cref{fig:1d_linear_beta_landscape} describes the landscape of the objective function $\rho$ (CVaR) for the ideal learner. We plot $\hat{\theta}$ as circles.\cref{fig:1d_linear_beta_risk} describes the Risk profile for $\lambda \in \{0.05, \dots, 0.95\}$ for the ideal learner. \cref{fig:1d_linear_beta_risk_iro} describes the Risk profile for $\lambda \in \{0.05, \dots, 0.95\}$ Imprecise Learner.}
     \centering
     \begin{subfigure}[b]{0.24\textwidth}
         \centering
         \includegraphics[width=\textwidth]{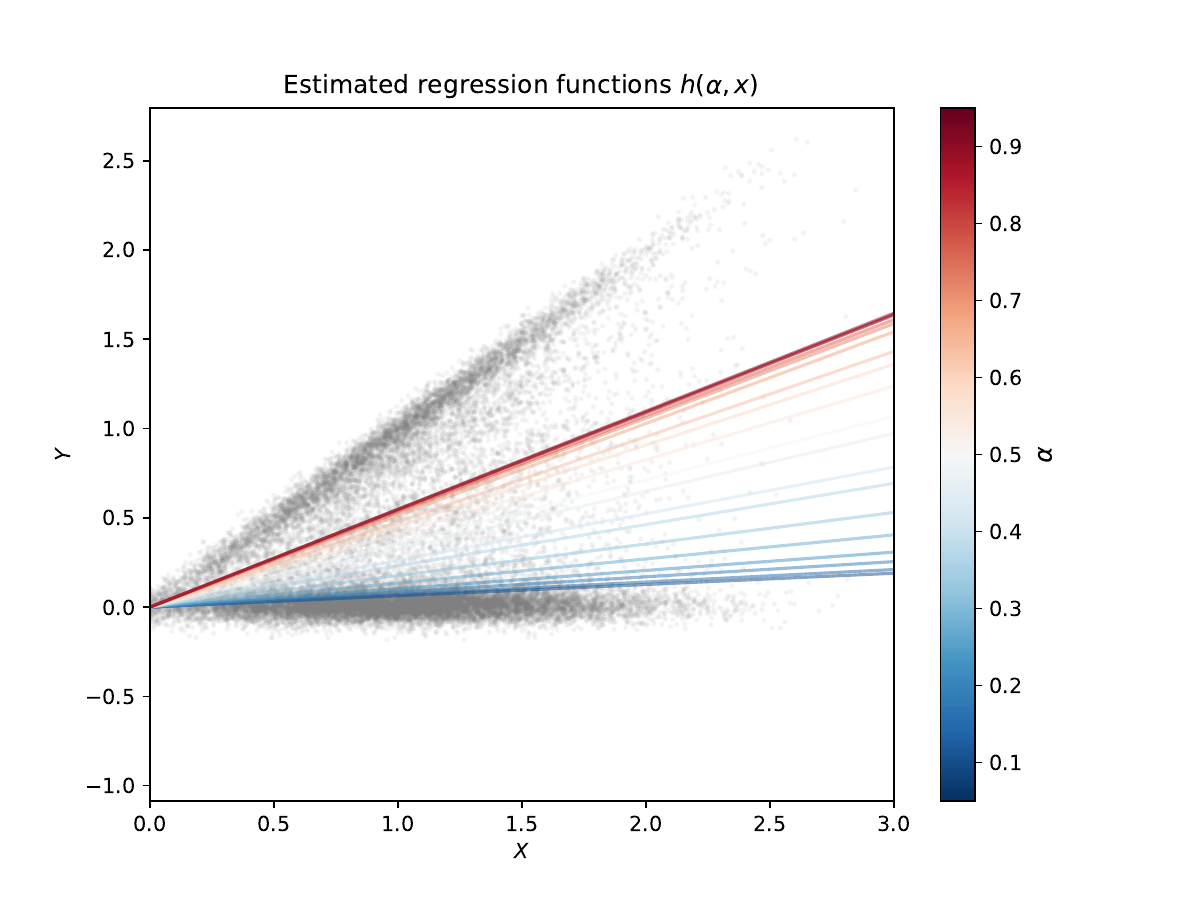}
         \caption{}
         \label{fig:1d_linear_beta_data}
     \end{subfigure}
     \begin{subfigure}[b]{0.24\textwidth}
         \centering
         \includegraphics[width=\textwidth]{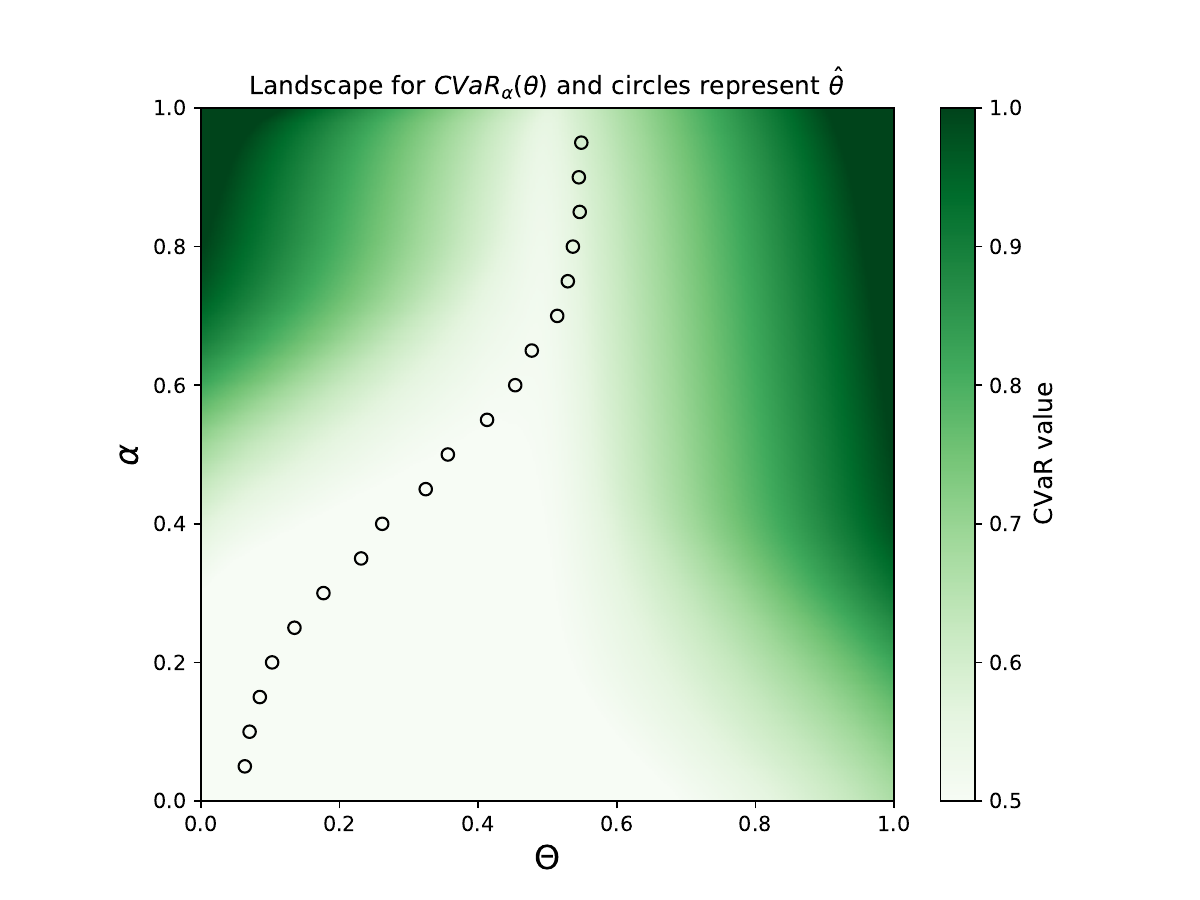}
         \caption{}
         \label{fig:1d_linear_beta_landscape}
     \end{subfigure}
     \begin{subfigure}[b]{0.24\textwidth}
         \centering
         \includegraphics[width=\textwidth]{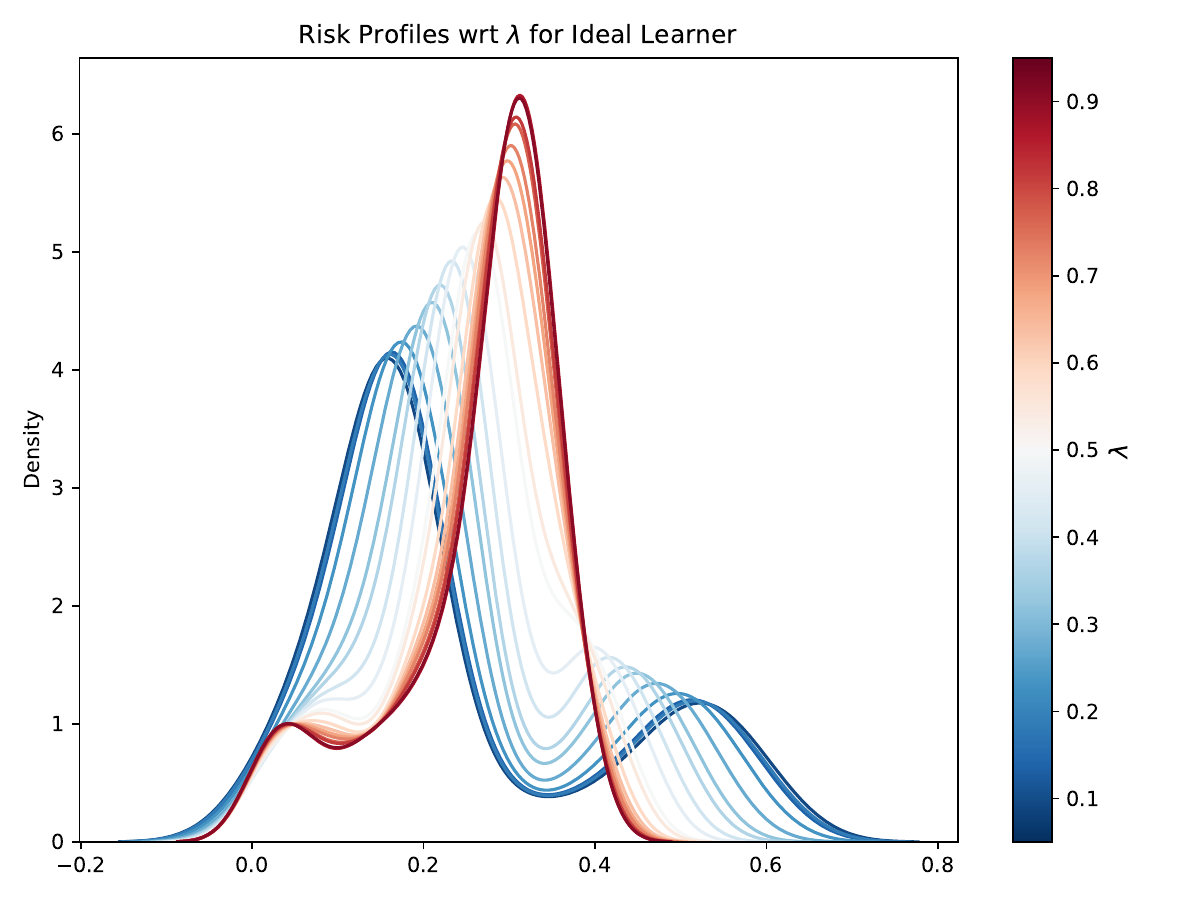}
         \caption{}
         \label{fig:1d_linear_beta_risk}
     \end{subfigure}
     \begin{subfigure}[b]{0.24\textwidth}
         \centering
         \includegraphics[width=\textwidth]{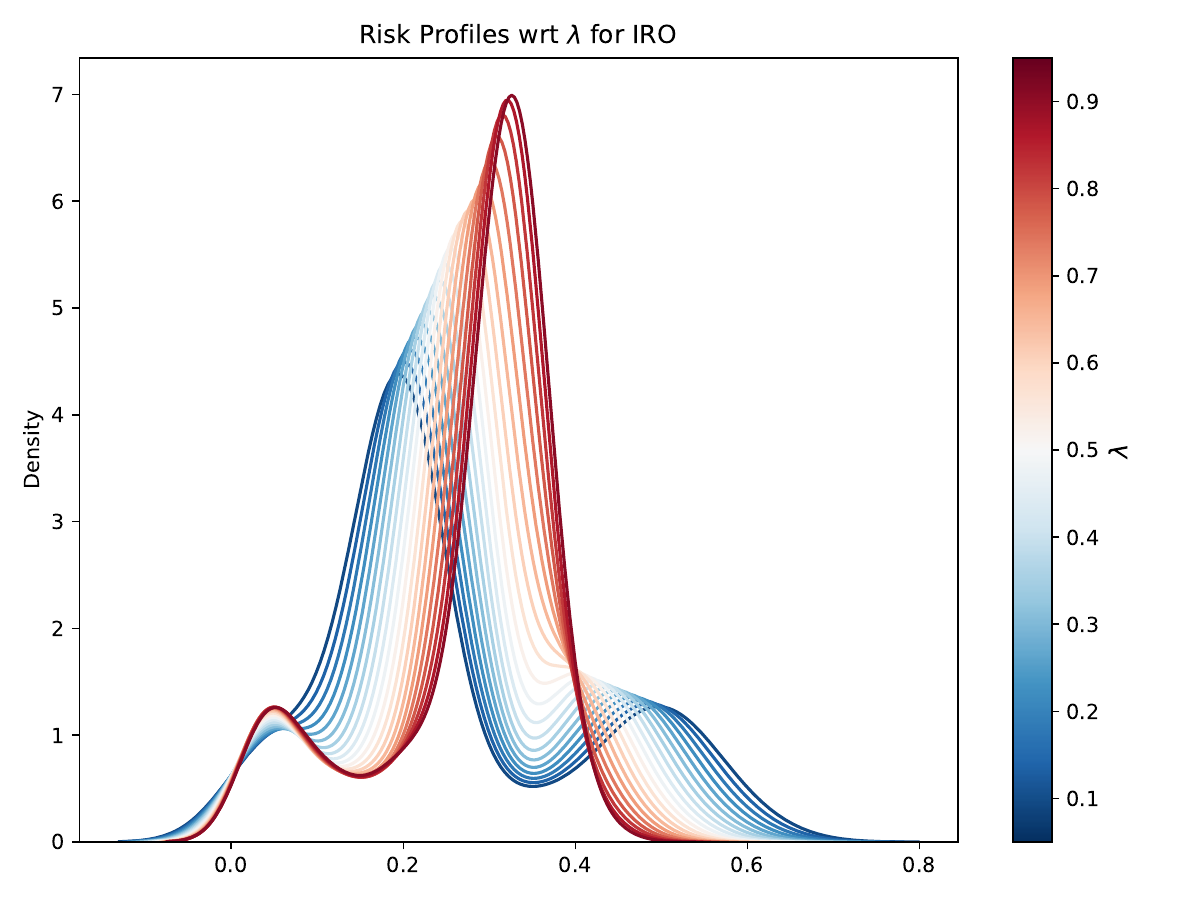}
         \caption{}
         \label{fig:1d_linear_beta_risk_iro}
     \end{subfigure}
\end{figure*}

Each data line corresponds to a domain in Figure \ref{fig:1d_linear_beta_data}. Hence the domains differ in their slope. Since we take $\theta$ from the bimodal distribution $Beta(0.1,0.2)$, we observe that the domains form two clusters. The more dominant cluster includes the domains with smaller $\theta$. Subsequently, we aim to find the optimal $\hat{\theta}$ for all $\lambda \in \{0.05, \dots, 0.95\}$ by solving the corresponding CVaR objective. As we can see from this plot, the optimal lines for small values of $\lambda$ cluster around the dominant cluster of the environments. We consider the dark blue line ($\lambda$=0.05) as the ``average case''. When increasing $\lambda$, the lines get closer to the second cluster of domains, which could be considered as the "worst-case". Hence, the dark red line ($\lambda$=0.95) could be somewhat considered to be the estimated $\theta$ that works well in the worst cases.

In Figure \ref{fig:1d_linear_beta_landscape}, we observe that for higher values of $\lambda$, the optimal solutions for $\theta$ vary a lot, while for smaller values of $\lambda$, the optimal solutions for $\theta$ do not vary significantly. As an interpretation, we can say that it is likely that the problem becomes harder for small $\lambda$. This interpretation is supported by the fact that when we choose $\lambda$ to be high, we condition on the tail of the $\bm{\cR}$, thus considering only a subset of the domains (i.e., lesser data for optimization). When looking at the optimal $\hat{\theta}$, they form a smooth curve across all $\lambda \in \{0.05, \dots, 0.95\}$.

Lastly, in Figure \ref{fig:1d_linear_beta_risk}, we see how the distribution of the risk changes across all $\lambda$. As expected, when choosing higher $\lambda$ we consider higher risks from the risk profile $\bm{\cR}$ and minimize these parts in the optimization. We observe that for higher values of $\lambda$ the risk profile does not transition smoothly contrary to the case of IRO in Figure \ref{fig:1d_linear_beta_risk_iro}. We postulate that this is because an ideal learner essentially throws away the data from low-risk domains when focusing on high-risk domains due to the formulation of CVaR as an aggregator. However, since IRO learns all the objectives simultaneously it can implicitly address this issue of a finite number of domains for training in $\lambda$ corresponding to higher risks. This observation is consistent with our observation from real-world experiments on UCI bike rentals in ~\cref{fig:bike-rentals}.
\section{Experiments on CMNIST}
\label{appendix:cmnist}
\subsection{Dataset Setup}
We conduct a large-scale experiment using an extension of the CMNIST dataset~\cite {arjovski2021}. The CMNIST comprises data from the MNIST dataset modified to the task of binary classification. For the standard task in CMNIST, the digits (0-4) and (5-9) have to be classified into two labels 0 and 1. Another feature as color is introduced in the training domain where digits are colored red or green such that the color is predictive of the true label e.g. domain 0.3 i.e. $P(Y=1\,|\,\text{color}=\text{red})=0.3$ and $P(Y=0\,|\,\text{color}=\text{red})=0.7$. Whereas for domain 0.9 it would mean $P(Y=1\,|\,\text{color}=\text{red})=0.9$ and $P(Y=0\,|\,\text{color}=\text{red})=0.1$. That is the mechanism by which color influences the label changes across domains. However, shape has a stable mechanism of prediction across domains i.e. $P(Y=0\,|\,\text{shape}\in\{0,1,\ldots,4\})=0.75$ and $P(Y=1\,|\,\text{shape}\in\{5,6,\ldots,9\})=0.75$. 
\subsection{Experimental Setup and Baselines}
We consider a scenario where we sample environments from a long-tail distribution at training time to model data collection in the real world, such as low-resource languages. We sample 10 training environments from a Beta(0.9,1) distribution exactly $\{0.01, 0.02,  0.05, 0.07, 0.09, 0.12, 0.14, 0.58, 0.7, 0.99\}$. However, we do not assume IID distribution on environments, i.e. at test time we evaluate all the environments $\{0.0,0.1,\dots,0.9,1.0\}$. Each environment is assumed to be influenced by both color and shape where the mechanism of color's influence changes but shape affects the target stably. 
\begin{figure*}

    \begin{subfigure}[b]{0.45\textwidth}
    \centering
    \includegraphics[width=0.5\textwidth]{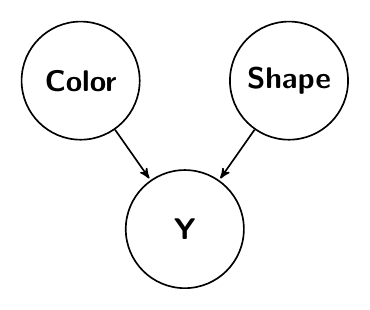}
    \caption{DAG of features and target in CMNIST}
    \label{fig:dag_of_cmnist}
    \end{subfigure}%
   \begin{subfigure}[b]{0.45\textwidth}
   \centering
   \includegraphics[width=0.7\textwidth]{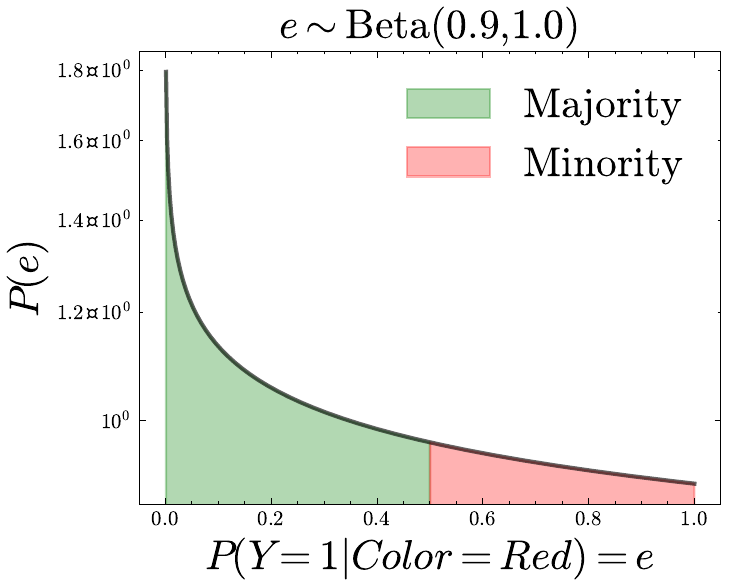}
   \caption{Long tail distribution of train environments}
   \label{fig:dist_of_cmnist_train}
   \end{subfigure}
    \caption{In Figure~\ref{fig:dag_of_cmnist} we describe the features that affect the target. The mechanism by which color affects target changes across environments. However, shape has a stable mechanism across environments. In Figure~\ref{fig:dist_of_cmnist_train} we consider a long tail distribution of environments from which we sample training environments. This is often realistic that many subpopulations are underrepresented in training data, eg low resource languages for translation tasks.}
    \label{fig:exp-setup}
\end{figure*}
This forces all the precise learners with a fixed hypothesis, i.e., \textbf{PL}-$f$ to learn the invariant risk minimizer across domains that rely only on shape as a predictor to generalize to minority domains. We compare performance to baselines (precise learners with fixed hypothesis \textbf{PL}-$f$) based on different assumptions like ERM (average-case risk), GrpDRO~\citep{Sagawa20:DRO}, V-REx~\citep{krueger_out--distribution_2021} (worst-case risk) and IRM~\citep{arjovski2021}, IGA~\citep{koyama2020out} (Invariant Predictors), EQRM~\citep{Eastwood22:QRM} (probable domain generalizer) and SD~\citep{pezeshki2021gradient} which avoids implicit regularization from Gradient starvation by decoupling features. We also consider Inf-Task which is a baseline for comparing how an Imprecise Learner (\textbf{IL}) performs against precise learners with an augmented hypothesis (\textbf{PL}-$\bar{h}$). Based on the initialization setup for CMNIST described by \citet{Eastwood22:QRM}, all baseline methods perform poorly without ERM pretraining. Therefore, to ensure a fair comparison, we consider the ERM pretraining for \textbf{PL}-$f$ learners for the initial 400 steps out of a 600-step training. All other hyper-parameters remain consistent with the established setup. For the learners with augmented hypotheses, it does not make sense to initialize with ERM because it may predispose the imprecise learner towards specific outcomes. Therefore, we assess the best-case performance across all learners across types of initialization. To implement the augmented hypothesis, we append FILM layers ~\citep{perez2018film} to MLP architecture used in ~\citet{Eastwood22:QRM}.
\subsection{Imprecise Learner can learn relevant features in context}
\begin{table}[h]
\centering
\caption{Maximal regret and test accuracy across all CMNIST test environments.\textbf{Bold} denotes the hypothetical best invariant and Bayes classifier performance. Highlighted \textcolor{applegreen}{\textbf{Green}} denotes the best performance amongst all algorithms for each domain and best regret. Bayes classifier is defined w.r.t the IID learner trained for a particular environment}
\resizebox{\textwidth}{!}{
\begin{tabular}{llllllllllllll}
\toprule
\multirow{2}{*}{\bfseries Objective} & \multirow{2}{*}{\bfseries Algorithm} & \multicolumn{11}{c}{Test Environments based on $\PP(Y=1\,|\,\text{color}=\text{red})=e$} & \multirow{2}{*}{\bfseries Regret}\\
 &  & 0.0 & 0.1 & 0.2 & 0.3 & 0.4 & 0.5 & 0.6 & 0.7 & 0.8 & 0.9 & 1.0 & \\
\midrule
Average Case & ERM  & \textcolor{applegreen}{\textbf{96.1}} & 87.1 & 78.0 &  72.1 & 65.8 &  59.2 & 51.8 & 47.1 &  39.9 &  33.6 & 28.3  & 72.7\\
\cline{2-13} 
\multirow{2}{*}{Worse Case} & GrpDRO & 54.1 & 55.6 & 58.1 & 595 &  61.5 &  64.5 &  66.3 & 69.1  & 70.5  & 73.9  &\textcolor{applegreen}{\textbf{75.5}} & 46.9\\
& SD & 52.1 & 54.1 & 56.6 & 58.6 & 59.7 & 63.7 & 65.8 & 67.0 & 68.5 & 70.3 & 73.3 & 47.9\\
\cline{2-13} 
\multirow{5}{*}{Invariance} & IRM  & 72.0 & 72.0 & 72.0& \textcolor{applegreen}{\textbf{72.0}} & \textcolor{applegreen}{\textbf{72.1}} & \textcolor{applegreen}{\textbf{69.7}} & 69.3 & 69.9 & 69.2 & 69.7 & 67.7 & 32.3\\
&IGA & 71.8 & 72.0 & 72.0 & 72.1 & 69.8 & 65.2 & 62.4 & 60.5 & 57.2 & 57.7 & 50.3 & 49.7\\
&EQRM ($\lambda\rightarrow 1$) & 67.8 & 67.7 & 68.3 & 68.8 & 70.5 & 69.1 & 70.3 & \textcolor{applegreen}{\textbf{72.0}} & \textcolor{applegreen}{\textbf{72.1}} & 71.4 & 72.1 & 32.2\\
&VREx & 72.7 & 71.3 & 71.8 & 71.4 & 71.7 & 69.5 & 69.5 & 70.2 & 69.5 & \textcolor{applegreen}{\textbf{71.6}} & 68.5& 31.5 \\
\cline{3-13}
& Oracle & \multicolumn{11}{c}{\textbf{73.5}} & \textbf{27.9}\\
\cline{2-13} 
\textbf{PL}-$\bar{h}$& Inf-Task & 96.0 & 86.3 & 78.6 & 68.0 & 62.1 & 61.3 & 63.2 & 65.0 & 66.6 & 68.4 & 68.3 & 31.7 \\
\cline{2-13} 
\textbf{IL} (Ours) & IRO & 95.8 & 87.2  & 78.8 &  68.9 & 69.4 & 69.5 & \textcolor{applegreen}{\textbf{70.8}} & 70.1 & 70.0 & 70.4 & 70.3 & \textcolor{applegreen}{\textbf{29.7}} \\
\cline{2-13} 
Bayes Classifier& ERM (IID) & \textbf{100.0} & \textbf{90.0} & \textbf{80.0} & \textbf{75.0} & \textbf{75.0} & \textbf{75.0} & \textbf{75.0} & \textbf{75.0} & \textbf{80.0} & \textbf{90.0} & \textbf{100.0} \\
\bottomrule

\end{tabular}}
\label{table:cmnist_regret}
\end{table}
In Table~\ref{table:cmnist_regret} we compare \textbf{IL} to other methods, showing that \textbf{IL} can learn relevant features in context. This also allows us to guide model operators on selecting appropriate $\lambda$. Suppose the user expects data at test time to come from the majority environments of their training. In that case, they can be less risk averse and use $\lambda=0$ whereas if the user is unsure and anticipates test environments to look like unlike training, i.e. more minority environments they can choose $\lambda\rightarrow1$. This is also reflected in the performance of \textbf{IL} such that for the majority domains $e\in\{0.0,\dots,0.4\}$ it performs similar to average case learner and for relatively less seen i.e. minority domains $e\in\{0.5,\dots,1.0\}$ it performs similar to the invariant learner.

\section{Limitations of Imprecise Learner}

\subsection{Computational Complexity}
The additional computation costs result from solving \eqref{eq:discrete-p-stationary-dist} compared to solving for a single notion of generalization which grows by the $O(m)$ where $m$ is the number of estimates needed. Since the convergence rate for Monte Carlo estimates is $O(\frac{1}{\sqrt{m}})$ the quality of estimates of the gradient improves slowly w.r.t. the number of samples. The generalization to the user's choice of risk $\lambda_{\text{op}}$ with high probability is also given by $O(\frac{1}{\sqrt{n}}+\frac{1}{\sqrt{m}})$ in \Cref{prop:regret-bound}, where $n$ is the number of data samples from each environment. In practice, there is room to obtain a better approximation of  ~\eqref{eq:discrete-p-stationary-dist} with possibly quasi-Monte Carlo sampling methods.

\subsection{Challenges in Specifying User Preferences}
\label{sec:test-time-elicit}

One of the main challenges in the Imprecise Learning (\textbf{IL}) framework is to specify user preference in terms of risk level i.e. a choice of $\lambda_{\text{op}}$. In practical scenarios, model operators may encounter challenges in precisely articulating their level of risk aversion. Additionally, bridging the operator’s concept of generalization to a specific domain with an appropriate risk level remains ambiguous. In our experiments on modified CMNIST, we address this by allowing the model operator to be more risk-averse to generalize to minority environments. In contrast, for generalizing to a domain from majority environments users can be more risk-seeking.

\subsection{Generalization with no access to minority environments}
In the context of the standard CMNIST setup where the learner has access to no minority environments, CVaR as a risk measure does not allow to generalize beyond the credal set which can be constructed from the convex combination of majority environments alone. For standard CMNIST setup training envs are $\{0.1,0.2\}$ and test env is $\{0.9\}$. This means that the mechanism by which color affects the target is anti-correlated at test time, such situations can arise in adversarial settings. Since for $\lambda\rightarrow 1$, CVaR only minimizes the higher risks in a profile to achieve invariance it cannot recover the invariant mechanism without access to at least one environment from a subgroup. However, we argue that by using additional assumptions i.e. a different risk measure Imprecise learners can still learn to generalize to novel unseen domains outside of the credal set. We can extend the risk measure to enforce invariance by using VREx as an additional regularizer. 
\begin{equation}
    \rho_\lambda[\bm{\cR}] := \text{CVaR}_\lambda[\bm{\cR}]+\lambda \text{Variance}(
    \bm{\cR})
\end{equation}

In \Cref{table:cmnist_test_accuracy}, we observe that \textbf{IL} for $\lambda=1$ obtains poor performance on a novel test domain however with an additional risk measure it obtains a closer performance to ERM on grayscale (Oracle) and outperforms several baselines. Note that with random initialization \textbf{IL}+VREx significantly outperforms other baselines.

\begin{table}[ht]
\centering
\caption{CMNIST Test Accuracy. Training Environments are $\{0.1,0.2\}$ \& Test Environment $\{0.9\}$}
\begin{tabular}{llccc}
\hline
\multirow{2}{*}{\textbf{Objective}} & \multirow{2}{*}{\textbf{Algorithm}} & \multicolumn{3}{c}{\bfseries Initialization} \\
& & Rand. & ERM & Best Case\\
\hline
\multirow{7}{*}{\textbf{PL-}$f$}& ERM & 27.9 $\pm$ 1.5 & 27.9 $\pm$ 1.5 & 27.9 $\pm$ 1.5\\
& IRM & 52.5 $\pm$ 2.4 & 69.7 $\pm$ 0.9 & 69.7 $\pm$ 0.9 \\
& GrpDRO & 27.3 $\pm$ 0.9 & 29.0 $\pm$ 1.1 & 29.0 $\pm$ 1.1\\
& SD & 49.4 $\pm$ 1.5 & 70.3 $\pm$ 0.6 & 70.3 $\pm$ 0.6 \\
& IGA & 50.7 $\pm$ 1.4 & 57.7 $\pm$ 3.3 & 57.7 $\pm$ 3.3 \\
& V-REx & 55.2 $\pm$ 4.0 & \textbf{71.6} $\pm$ \textbf{0.5} & \textbf{71.6} $\pm$ \textbf{0.5}\\
& EQRM & 53.4 $\pm$ 1.7 & 71.4 $\pm$ 0.4 & 71.4 $\pm$ 0.4 \\
\hline 
\textbf{IL} & IRO & 28.4 $\pm$ 0.7 &  27.4 $\pm$ 0.1 & 28.4 $\pm$ 0.7  \\
\textbf{PL-}$\bar{h}$+VREX & Inf-Task & 68.4 $\pm$ 0.1 & 64.6 $\pm$ 0.0 & 68.4 $\pm$ 0.1 \\
\textbf{IL}+VREX & IRO & \textbf{71.4} $\pm$ \textbf{0.2} &  65.4 $\pm$ 0.1 & 71.4 $\pm$ 0.2  \\
\hline 
Invariant Pred. & Oracle & \multicolumn{3}{c}{\textbf{73.5} $\pm$ \textbf{0.2}} \\
\hline
\end{tabular}
\label{table:cmnist_test_accuracy}
\end{table}


\section{Implementation Details}

This section provides the details of specific implementations used in our experiments.

\subsection{Augmented Hypothesis}

For implementing the augmented hypothesis, we use hypernetworks~\cite{ha2016hypernetworks} to realize the dependence of $h$ on model operator's preference, i.e., $\lambda$. In this scenario, the weights of the augmented model are dependent on $\lambda$, i.e., $h_\xi(x,\lambda):=f_{g_{w}(\lambda)}(x)$ where $g_w(\lambda)$ is the hypernetwork and $\xi:=\{w,g_w(\lambda)\}$.  For neural networks with multiple layers, we use FILM layers~\cite{perez2018film} to augment the network such that it can be conditioned upon $\lambda$.

\subsection{Imprecise Risk Optimisation}

To operationalise the imprecise risk optimization, we need to minimise \eqref{eq:discrete-p-stationary-dist} with respect to the family of probability distributions $\Delta(\Lambda)$. Since for our case $\Lambda=[0,1]$, we parameterise the family of distributions with $\text{Beta}(\alpha,\beta)$. We sample $\lambda$ from $Q$ via uniform sampling from the inverse CDF of $Q$, which we denote as $F^{-1}$. We approximate the gradient of $F^{-1}$ by first-order difference as described in \Cref{alg:icdf_beta_impl}.
\begin{algorithm}[h]
    \caption{Sampling from a Beta Distribution using ICDF with Gradient Computation}
    \label{alg:beta-sampling-grad}
    \begin{algorithmic}[1]
        \STATE \textbf{class} ICDFBeta:
        \STATE \quad \textbf{def} forward($u$): \# Compute ICDF
        \STATE \quad\quad \textbf{return} F$^{-1}$($\alpha$,$\beta$)($u$) 
        \STATE \quad \textbf{def} backward($u$): \# Compute Gradient
        \STATE \quad\quad $\delta:= 1e-6$
        \STATE \quad\quad $\nabla_\theta F^{-1}(\alpha,\beta)(u):=$ $\frac{(F^{-1}(\alpha+\delta, \beta)(u) - F^{-1}(\alpha, \beta)(u))}{\delta}$ 
        \STATE \quad\quad $\nabla_\phi F^{-1}(\alpha,\beta)(u):=$ $\frac{(F^{-1}(\alpha, \beta+\delta)(u) - F^{-1}(\alpha, \beta)(u))}{\delta}$ 
        \STATE \quad\quad \textbf{return} $\nabla_\alpha F^{-1}(\alpha,\beta)(u)$, $\nabla_\beta F^{-1}(\alpha,\beta)(u)$

        \STATE \textbf{Initialize:} $\alpha, \beta \gets 1.0, 1.0$
        \STATE icdfbeta = ICDFBeta($\alpha$, $\beta$)
        \FOR{epoch $= 1$ \textbf{to} $k$}
            \FOR{$i = 1$ \textbf{to} $m$}
                \STATE $u_i \sim \operatorname{Uniform}([0, 1])$
                \STATE $\lambda_i$ = icdfbeta.forward($u_i$)
            \ENDFOR
        \ENDFOR
        \STATE \textbf{Return} Set of samples $\{\lambda_1, \lambda_2, \ldots, \lambda_m\}$ and gradients for each epoch
    \end{algorithmic}
    \label{alg:icdf_beta_impl}
\end{algorithm}

\end{document}